\documentclass[journal,twoside,web]{IEEEtran}
\usepackage[english]{babel}
\usepackage{graphicx} 
\graphicspath{{ {./images/} }}
\usepackage{epsfig} 
\usepackage{amsmath} 
\usepackage{amssymb}  
\usepackage{color}
\usepackage{subcaption}
\usepackage{caption}
\usepackage{authblk}
\usepackage{cite}
\usepackage{amsthm, amsmath}
\newtheorem{theorem}{Theorem}
\newtheorem{proposition}{Proposition}
\theoremstyle{definition}

\newtheorem{assumption}{Assumption}
\newtheorem{remark}{Remark}
\newtheorem{lemma}{Lemma}
\usepackage{algorithm}
\usepackage{algpseudocode}
\usepackage[T1]{fontenc}
\usepackage{bm}
 
\algrenewcommand\algorithmicindent{1.2em}

\usepackage{mathtools}      
\usepackage{bm}             
\usepackage{microtype}
\usepackage{enumitem}

\newcommand{\N}{\mathbb{N}}

\newcommand{\Ghat}{\boldsymbol{G}^{\mathrm{pred}}}
\newcommand{\Vhat}{\boldsymbol{V}^{\mathrm{pred}}}

\newcommand{\Jbar}{\bar{J}}
\newcommand{\Lmax}{\Lambda_{\max}}
\newcommand{\norm}[1]{\left\|#1\right\|}

\begin{document}

\title{SACK : Safe Active Continual Koopman Learning for Uncertain Systems with Contractive Guarantees
\author{Chandan Kumar Sah*, Rajpal Singh*, and Jishnu Keshavan}
\thanks{* These authors have contributed equally to the present work.}
\thanks{The authors are with the Department of Mechanical Engineering, Indian Institute of Science, Bangalore, Karnataka~560012, India 
    ({\tt\footnotesize{email: chandanks@iisc.ac.in, rajpalsingh@iisc.ac.in, kjishnu@iisc.ac.in}}).
    }
}

\maketitle
\IEEEpeerreviewmaketitle

\begin{abstract}
Koopman operator theory provides a powerful framework for representing nonlinear dynamics through a linear operator acting on lifted observables, enabling the use of linear control techniques for nonlinear systems. However, Koopman models are typically learned from data and often degrade in performance under model uncertainty and distributional shifts between training and deployment. Although several works have explored online adaptation to address this issue, many rely on neural network-based updates that introduce significant computational overhead and lack formal safety guarantees, limiting their suitability for real-time and safety-critical robotic applications. In this work, we propose SACK, a unified framework for continual adaptive Koopman learning that enables safe and efficient online refinement of learned models during task execution. A Koopman model is first learned offline and subsequently refined online through a contractive adaptation law, which provides theoretical convergence guarantees under distributional shifts and model uncertainty. To improve data efficiency and accelerate model refinement, the adaptation mechanism is integrated with an active learning strategy that drives the system to collect informative data while accomplishing task objectives. The resulting control problem is formulated as a nonconvex optimization problem incorporating both active learning objectives and safety constraints. We further derive theoretical bounds on model approximation error and show how these bounds can be incorporated within a robust Model Predictive Control (MPC) framework to provide formal safety guarantees. To reduce conservatism in practice, we also introduce a conformal prediction-based tightening mechanism that calibrates safety margins online from observed residuals. The proposed approach unifies learning, excitation, and safety within a single framework without sacrificing real-time feasibility. Extensive simulation and experimental studies across various robotic platforms demonstrate superior performance compared to state-of-the-art baselines in safety-critical environments.
\end{abstract}

\section{Introduction}
\label{sec:introduction}
Control of nonlinear dynamical systems is a central challenge in engineering, arising across robotic manipulation, aerial vehicles, legged locomotion, and a wide range of physical and biological systems~\cite{dixon2003nonlinear, rasmussen2011inference, wang2008filtering}. Although model-based control techniques for nonlinear systems are well developed, they tend to be system-specific and computationally demanding, especially under optimality or hard constraints~\cite{strogatz2018nonlinear}, whereas linear control offers scalable, computationally efficient tools with strong theoretical guarantees~\cite{ogata2010modern}. This disparity has long motivated efforts to construct globally valid linear representations of nonlinear systems, enabling linear analysis and control synthesis beyond the narrow operating regimes afforded by first-order Taylor linearization.

Koopman operator theory provides a principled framework for representing nonlinear dynamical systems through a linear operator acting on an infinite-dimensional space of observable functions~\cite{koopman1931hamiltonian}. This formulation yields a linear, though generally infinite-dimensional, representation of the underlying nonlinear dynamics~\cite{korda2018linear}. In practice, finite-dimensional approximations of the Koopman operator are learned from data using approaches such as Dynamic Mode Decomposition (DMD)~\cite{champion2019data}, Extended Dynamic Mode Decomposition (EDMD)~\cite{li2017extended}, and neural network-based methods~\cite{lusch2018deep}. These frameworks have subsequently been extended to controlled systems~\cite{brunton2021modern}. Within robotics, Koopman-based models have been successfully applied to a broad range of platforms, including industrial manipulators~\cite{sah2024real,singh2025generalized}, soft robotic systems~\cite{bruder2020data,bruder2025koopman}, aerial vehicles~\cite{zinage2021koopman,folkestad2022koopnet,manaa2024koopman}, and legged robots~\cite{krolicki2022modeling,yang2025koopman}. These studies demonstrate that Koopman representations enable the application of linear control and optimization techniques to complex nonlinear robotic systems while retaining strong empirical performance. More recently, Koopman-based prediction has been used in safety-critical control, where linear lifted dynamics enable tractable safety verification. In particular, Koopman models have been combined with control barrier functions (CBFs); for example, Koopman-based linear prediction is used in \cite{folkestad2020data} to avoid costly backup trajectory integration while accounting for model error.
 
Despite the success of Koopman-based control, learned models are usually trained under nominal operating conditions that rarely match the true environment, which fundamentally limits closed-loop performance. Distributional shifts, such as payload variations, aerodynamic disturbances, or contact-induced effects, frequently alter the underlying dynamics and render the offline-learned Koopman operator inaccurate. As a result, prediction errors accumulate during operation, degrading control performance and potentially invalidating safety guarantees established using the nominal model. These challenges motivate the development of adaptive Koopman frameworks that refine the learned model online during deployment.
 
Several recent works address online Koopman adaptation. The study in \cite{singh2025adaptive} augments an offline model with an auxiliary network that learns the residual mismatch in real time, and extends this approach to jointly optimize the lifting map and operator matrices online using soft target-network stabilization \cite{uchida2024model}. The study in \cite{li2024continual} incrementally expands the lifted-state dimension to provably reduce approximation error for high-dimensional legged robots, while a Bayesian meta-learning prior over Koopman operators is adopted in \cite{selimmetakoopman} for closed-form, optimization-free adaptation. Each, however, carries a limitation consequential in safety-critical deployment. The neural updates of~\cite{singh2025adaptive, uchida2024model} require iterative gradient steps at every control step, forcing a fundamental trade-off: the number of gradient steps must be small enough to meet latency requirements, yet sufficient to reduce model error meaningfully. Under rapid distributional shift, precisely the very conditions that motivate adaptation, this trade-off typically favors speed over accuracy, and neither approach provides a safety guarantee in the transient phase. The incremental scheme of~\cite{li2024continual} sidesteps convergence by growing the model, but scales the lifted-state MPC, adding to real-time computational burden. In \cite{selimmetakoopman}, closed-form updates are obtained only when the deployment distribution lies within the meta-training support, an assumption that is hard to verify and may fail on genuinely novel conditions. The lightweight pseudo-inverse update of~\cite{banday2025event} lacks convergence analysis, offering no assurance that the adapted model improves on the nominal one.
 
Another important limitation shared by these works concerns data quality during online adaptation. Closed-loop controllers typically drive the system toward smooth and repetitive trajectories, causing the collected regressors to become increasingly correlated over time. Consequently, the resulting data provides progressively less information about the true Koopman operator, while the directions most critical for accurate adaptation remain insufficiently excited. The EDMD-based active learning approach of~\cite{abraham2019active} is a notable exception. Still, it does not enforce state or input constraints, making it unsuitable for safety-critical settings where information-seeking excitation could lead to constraint violation. Moreover, its EDMD-based update repeatedly re-estimates the Koopman operator from locally collected trajectory windows. This makes the adapted model sensitive to the conditioning and representativeness of the recent data, and can lead to poor update quality under large distributional shifts or insufficiently informative samples. The absence of an offline-trained prior further increases the risk of unreliable behavior during the early stages of closed-loop operation.

Beyond Koopman-based methods, a parallel line of research studies the exploration-safety trade-off using alternative uncertainty representations. \cite{soloperto2020augmenting} augment nonlinear MPC with a tunable active-learning objective, using economic and multi-objective MPC to bound performance degradation, but without explicitly accounting for safety during online model updates. The study in \cite{baltussen2025dual} incorporates the dual effect by conditioning a Gaussian process (GP) posterior on the predicted control sequence and combining it with contingency-horizon robust MPC to maintain recursive feasibility. Trajectory-level exploration is considered in \cite{naveed2025formal}, executing informative trajectories only when they are certified to remain safe and within a predefined performance budget. At the other end of the spectrum, the study in \cite{lew2022safe} uses Bayesian meta-learned dynamics with reachability-based chance-constrained planning to provide high-probability safety over unknown horizons. In contrast, the study in \cite{prajapat2025safe} leverages GP confidence sets and reachable-returnable safe sets to obtain finite-time sample-complexity guarantees for safe exploration. Collectively, these works demonstrate that safe active learning is possible using GP and meta-learned models. However, they rely on computationally intensive posterior updates, gradient-based meta-adaptation, or separate candidate generation and safety verification procedures, rather than a single closed-form adaptive recursion with an explicit online error bound. This motivates the proposed contractive Gramian-based adaptation law, which enables CBF-tightened active-learning MPC within a unified optimization framework.

Thus, we propose Safe Active Continual Koopman Learning (SACK), a unified framework for continual model adaptation in constrained environments with explicit contraction and safety guarantees. The framework jointly integrates online learning, information-driven excitation, and safety-constrained control within a single optimization architecture. First, rather than relying on neural network-based updates, we derive an explicit \emph{contractive adaptation law} for the Koopman operator parameters. By formulating the parameter error dynamics as a contracting system governed by the windowed data Gramian, we obtain provable exponential convergence of the estimates under persistent excitation and an explicit ultimate bound. Second, because the convergence rate depends directly on the conditioning of the data Gramian, we couple the adaptation law with a \emph{D-optimal active learning objective}, the log-determinant of the predicted regressor Gramian, that promotes excitation simultaneously across all regressor directions. Unlike~\cite{abraham2019active}, this objective is embedded within a constrained receding-horizon optimization that enforces actuator limits, state feasibility, and collision avoidance at every step, ensuring that information-driven excitation never compromises safety. Third, we derive deterministic bounds that account for both model mismatch and the perturbation introduced by online parameter updates, and incorporate them into a CBF-constrained MPC to establish recursive feasibility and forward-invariance guarantees. To reduce conservatism, we further introduce a distribution-free conformal tightening calibrated from observed residuals that provides probabilistic safety guarantees.
 
The main contributions of this work are:
\begin{enumerate}
  \item \textbf{Contractive online Koopman adaptation.}
    We derive a closed-form recursive update law for the Koopman model with provably contracting error dynamics, yielding exponential convergence under persistent excitation and ultimate boundedness under time-varying, out-of-distribution dynamics.
 
  \item \textbf{Safety-constrained active-learning MPC.}
    We formulate a D-optimal information objective that maximizes excitation across all regressor directions, directly accelerating the contraction rate above, embedded in a nonconvex MPC that jointly enforces actuator limits, state constraints, and CBF conditions, so information-seeking exploration never compromises closed-loop safety.
 
  \item \textbf{End-to-end guarantees for the coupled learning-control loop.}
    We prove recursive feasibility of the MPC, forward invariance of the safe set, and ultimate boundedness of the lifted state under a disturbance bound that accounts for both model mismatch and online parameter updates. We further introduce a distribution-free conformal tightening that certifies per-step and finite-horizon safety coverage, reducing conservatism relative to worst-case bounds.
 
  \item \textbf{Validation across the sim-to-real spectrum.}
    We validate SACK through simulations of a 3R manipulator and a planar quadrotor under distributional shifts and disturbances, high-fidelity Gazebo simulations of a 7-DoF Franka Research 3 arm, and hardware experiments on a TurtleBot3 mobile robot. Across all platforms, SACK achieves improved tracking accuracy, sustained constraint satisfaction, and greater robustness to distributional shifts than existing adaptive and active-learning Koopman methods~\cite{singh2025adaptive,abraham2019active} and a non-Koopman baseline~\cite{baltussen2025dual}.
\end{enumerate}
 
The remainder of this paper is organized as follows. Section~\ref{sec:methodology} reviews the Koopman framework, introduces the adaptive model, and derives the contractive adaptation law and its convergence analysis. Section~\ref{sec:active_learning} presents the active-learning formulation and the constrained MPC optimization problem. Section~\ref{sec:anal_dist_bound} develops deterministic bounds and  establishes the corresponding safety guarantees. Section~\ref{sec:conf_pred} provides a probabilistic alternative to Section~\ref{sec:anal_dist_bound}. Section~\ref{sec:results} presents simulation and experimental results, and Section~\ref{sec:conclusion} presents the conclusions and future work.

\color{black}

\section{Adaptation for Koopman Models}\label{sec:methodology}

\subsection{Koopman Operator Theory Preliminaries}
\label{subsec:koopman_prelim}
Consider a nonlinear autonomous system
\begin{equation}
  \dot{\boldsymbol{x}} = f(\boldsymbol{x}),
  \label{eq:autonomous}
\end{equation}
where $\boldsymbol{x}\in\mathcal{X}\subset\mathbb{R}^{n_x}$ and $f:\mathcal{X}\rightarrow\mathbb{R}^{n_x}$ is Lipschitz
continuous, with associated discrete-time flow map $\boldsymbol{x}_{k+1}=S(\boldsymbol{x}_k)$, $S:\mathcal{X}\rightarrow\mathcal{X}$.
Koopman operator theory lifts the dynamics to a space of scalar-valued observable functions $\varphi:\mathcal{X}\rightarrow\mathbb{R}$,
on which the evolution is described by the linear Koopman operator $\mathcal{K}$,
\begin{equation}
  \mathcal{K}\circ\varphi(\boldsymbol{x}_k) = \varphi\circ S(\boldsymbol{x}_k) = \varphi(\boldsymbol{x}_{k+1}),
\end{equation}
where $\circ$ denotes function composition. The defining property of $\mathcal{K}$ is linearity: it evolves observables linearly even when $f$ is nonlinear. An exact representation, however, generally requires an infinite-dimensional observable space~\cite{korda2018linear,mauroy2020koopman}.
For practical use one therefore selects a finite dictionary $\{\phi_1,\ldots,\phi_p\}$, $\phi_j:\mathcal X\rightarrow\mathbb R$,
collected into the \emph{lifting map}
\begin{equation}
  \boldsymbol{z}_k := \psi(\boldsymbol{x}_k) = \big[\phi_1(\boldsymbol{x}_k),\ldots,\phi_p(\boldsymbol{x}_k)\big]^\top \in\mathbb{R}^p,
  \label{eq:lifting}
\end{equation}
and restricts $\mathcal{K}$ to $\mathrm{span}\{\phi_1,\ldots,\phi_p\}$. The resulting model is exact only if this subspace is $\mathcal{K}$-invariant, and otherwise incurs a truncation error quantified below.

The framework extends to controlled systems. Consider the control-affine system
\begin{equation}
  \dot{\boldsymbol{x}} = f_0(\boldsymbol{x}) + \sum_{i=1}^{n_u} f_i(\boldsymbol{x})\,u_i,
  \label{eq:control_affine}
\end{equation}
where $\boldsymbol{u}=[u_1,\ldots,u_{n_u}]^\top\in\mathbb{R}^{n_u}$, $f_0:\mathcal{X}\rightarrow\mathbb{R}^{n_x}$ is the drift
vector field, and $f_i:\mathcal{X}\rightarrow\mathbb{R}^{n_x}$ are the control vector fields, all Lipschitz continuous on
$\mathcal{X}$. Under the assumption stated in~\cite{singh2025adaptive,goswami2017global}, \eqref{eq:control_affine} admits
the discrete-time lifted representation
\begin{equation}
  \hat{\boldsymbol{z}}_{k+1|k} = \boldsymbol{A}\boldsymbol{z}_k + \boldsymbol{B}\boldsymbol{u}_k,
  \qquad
  \hat{\boldsymbol{x}}_{k+1|k} = \boldsymbol{C}\hat{\boldsymbol{z}}_{k+1|k},
  \label{eq:koopman_discrete}
\end{equation}
with $\boldsymbol{A}\in\mathbb{R}^{p\times p}$, $\boldsymbol{B}\in\mathbb{R}^{p\times n_u}$ the nominal model matrices and $\boldsymbol{C} \in\mathbb{R}^{n_x\times p}$ the reconstruction matrix recovering the state $\boldsymbol{x}$ from the lifted state. Here $\hat{\boldsymbol{z}}_{k+1|k}$ is the one-step-ahead prediction produced by the nominal model from the measured lifted state $\boldsymbol{z}_k$ and input $\boldsymbol{u}_k$. The conditioning ``$\,\cdot\,|k$'' denotes a quantity predicted from information available up to step $k$. The tuple $\{\psi(\cdot),\boldsymbol A,\boldsymbol B,\boldsymbol C\}$
is identified offline from nominal input-output data (Appendix~\ref{app:offline_learning}). Since \eqref{eq:koopman_discrete} is built on a finite dictionary and identified from nominal data, it is an approximation:
the true lifted dynamics satisfy
\begin{equation}
  \boldsymbol{z}_{k+1} = \boldsymbol{A}\boldsymbol{z}_k + \boldsymbol{B}\boldsymbol{u}_k + \boldsymbol{\epsilon}_k,
  \label{eq:truncation}
\end{equation}
where $\boldsymbol{\epsilon}_k$ collects two distinct contributions: the truncation error incurred by restriction $\mathcal{K}$ to a $p$-dimensional subspace, and the mismatch between the nominal matrices $(\boldsymbol A,\boldsymbol B)$ and the true Koopman dynamics. The first is fixed once the dictionary is chosen; the second is not. Under distributional shift, the deployed dynamics depart from the true dynamics, so the second
contribution grows during operation. This component is, however, expressible within the same lifted coordinates and can therefore be reduced online by refining $(\boldsymbol A,\boldsymbol B)$, which motivates the adaptive module developed next.

\subsection{Adaptive Koopman Module}
\label{subsec:adaptive_module}

Distributional shift is modelled by augmenting~\eqref{eq:control_affine} with unknown perturbations,
\begin{equation}
  \dot{\boldsymbol{x}} = f_0(\boldsymbol{x})+\tilde f_0(\boldsymbol{x})
  + \sum_{i=1}^{n_u}\big(f_i(\boldsymbol{x})+\tilde f_i(\boldsymbol{x})\big)u_i,
  \label{eq:perturbed_physical}
\end{equation}
where $\tilde f_0,\tilde f_i:\mathcal{X}\rightarrow\mathbb{R}^{n_x}$ arise from parameter variations, unmodeled dynamics, or
environmental disturbances, and are unknown at deployment. In principle such a shift perturbs both the dictionary
$\psi(\cdot)$ and the operator matrices $(\boldsymbol A,\boldsymbol B)$, and adapting both would be the ideal response.
Re-learning $\psi(\cdot)$ online, however, requires retraining the dictionary at every control step, which is incompatible with real-time operation.

\begin{assumption}[Fixed lifting map]
\label{as:fixed_lifting}
The perturbed dynamics~\eqref{eq:perturbed_physical} remain representable within the offline-learned observable subspace
$\mathrm{span}\{\phi_1,\ldots,\phi_p\}$. Consequently $\psi(\cdot)$ is held fixed during deployment, and adaptation is
performed only over $(\boldsymbol A,\boldsymbol B)$.
\end{assumption}

Assumption~\ref{as:fixed_lifting} is reasonable for the shifts of practical interest: payload changes, aerodynamic
disturbances, and actuator degradation alter how the state evolves rather than which functions of the state are needed to
describe it, so the dictionary learned under nominal conditions typically remains adequate while the operator acting on it
does not. This is consistent with the empirical findings of~\cite{singh2025adaptive}. Restricting adaptation to
$(\boldsymbol A,\boldsymbol B)$ additionally preserves a \emph{linear} parameter-estimation problem, admitting the
closed-form recursive update and convergence guarantees.

Under Assumption~\ref{as:fixed_lifting}, the perturbed system admits the lifted representation
\begin{equation}
  \boldsymbol z_{k+1} = \boldsymbol A_k^\ast\boldsymbol z_k + \boldsymbol B_k^\ast\boldsymbol u_k,
  \qquad
  \boldsymbol x_{k+1} = \boldsymbol C\boldsymbol z_{k+1},
  \label{eq:perturbed_lifted}
\end{equation}
where $\boldsymbol W_k^\ast := [\boldsymbol A_k^\ast,\,\boldsymbol B_k^\ast]\in\mathbb{R}^{p\times(p+n_u)}$ denotes the
true, generally time-varying, finite-dimensional lifted operator associated with the fixed dictionary
$\{\phi_1,\ldots,\phi_p\}$; it is the object the adaptation law estimates. Should the perturbed dynamics leave the learned
subspace, so that Assumption~\ref{as:fixed_lifting} fails, the residual $\boldsymbol\epsilon_k$ of~\eqref{eq:truncation}
cannot be removed by operator adaptation alone. In summary, $\psi(\cdot)$ and $\boldsymbol C$ are learned offline and held fixed, whereas
$(\boldsymbol A,\boldsymbol B)$ are adapted online. Defining the stacked regressor
$\boldsymbol v_k := [\boldsymbol z_k^\top,\,\boldsymbol u_k^\top]^\top\in\mathbb{R}^{p+n_u}$, \eqref{eq:perturbed_lifted}
reads compactly as $\boldsymbol z_{k+1}=\boldsymbol W_k^\ast\boldsymbol v_k$.

\begin{assumption}[Bounded operator drift]
\label{as:bounded_drift}
There exists $\nu \ge 0$ such that $    \|\boldsymbol{W}_{k+1}^{\ast}-\boldsymbol{W}_{k}^{\ast}\|_{F}
    \le \nu, \;\forall\, k \ge 0.$
\end{assumption}

Assumption~\ref{as:bounded_drift} is standard in adaptive estimation~\cite{singh2025adaptive} and requires only that the
true operator vary at a bounded rate. The adaptation law therefore tracks a drifting target, with achievable estimation
accuracy fundamentally limited by $\nu$.

Since $\boldsymbol W_k^\ast$ is unknown, we maintain an adaptive estimate
$\hat{\boldsymbol W}_k=[\hat{\boldsymbol A}_k,\,\hat{\boldsymbol B}_k]$, with estimation error
$\boldsymbol E_k := \boldsymbol W_k^\ast-\hat{\boldsymbol W}_k\in\mathbb{R}^{p\times(p+n_u)}$. The resulting one-step
prediction error is
\begin{equation}
  \boldsymbol\varepsilon_{k+1} := \boldsymbol z_{k+1}-\hat{\boldsymbol z}_{k+1|k}
  = \boldsymbol W_k^\ast\boldsymbol v_k - \hat{\boldsymbol W}_k\boldsymbol v_k
  = \boldsymbol E_k\boldsymbol v_k.
  \label{eq:prediction_error}
\end{equation}
The adaptation law below updates $\hat{\boldsymbol W}_k$ from closed-loop measurements to drive $\|\boldsymbol E_k\|_F$ into a bounded neighbourhood determined by $\nu$.
 
\begin{figure*}[ht!]
      \centering
      \includegraphics[width=0.9\textwidth]{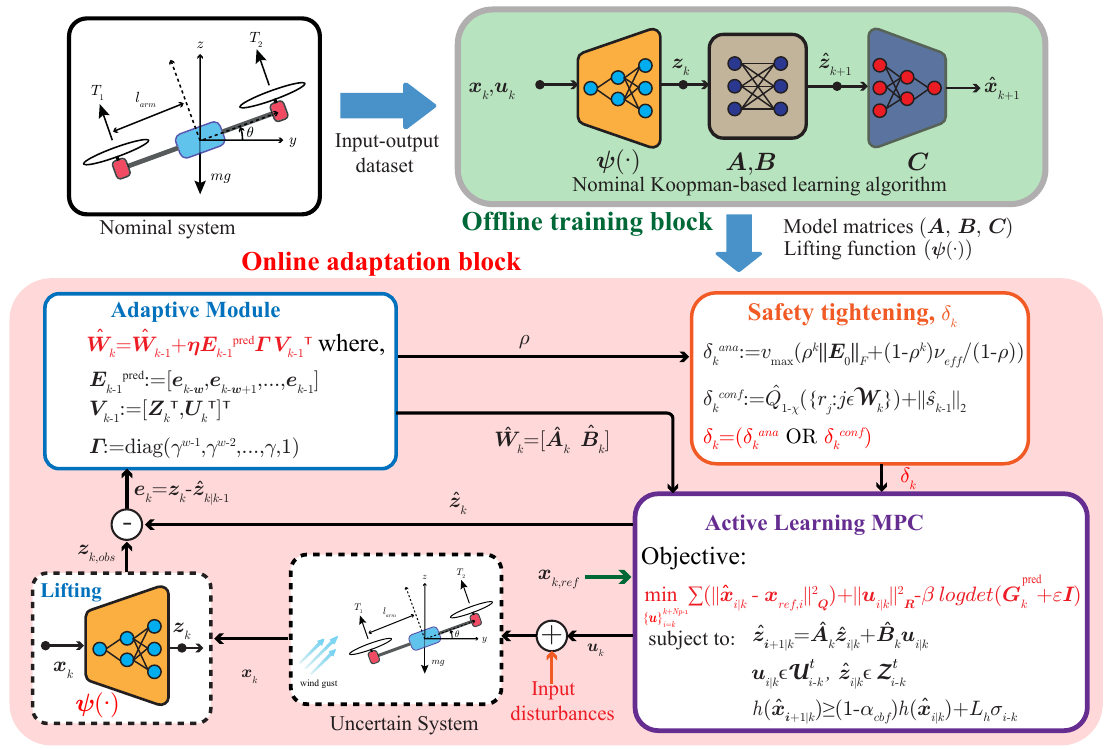}
    \caption{\textbf{Safe Active Continual Koopman (SACK) closed-loop architecture}. An offline learning module identifies the nominal Koopman model $(\boldsymbol{A},\boldsymbol{B},\boldsymbol{C})$ and the lifting function $\psi(\cdot)$ from input-output data. During deployment, prediction-error updates are used to adapt the Koopman operator online. The resulting uncertainty bounds are incorporated into an active-learning MPC~\eqref{eq:mpc} with tightened safety constraints, enabling safe and adaptive control under large distributional shift.}
        \label{fig:active_block}
\end{figure*}
 
\subsection{Online Adaptation Law}
\label{subsec:adaptation_law}
To improve conditioning and noise robustness, the update uses the $w(\ge 1)$ most recent regressor-observation pairs rather than a rank-one correction. Define the stacked matrices
\begin{align}
\bm{V}_k &:= [\bm{v}_{k-w+1},\dots,\bm{v}_k]\in\mathbb{R}^{(p+n_u)\times w}, \label{eq:Vk}\\
\bm{Z}^{+}_k &:= [\bm{z}_{k-w+2},\dots,\bm{z}_{k+1}]\in\mathbb{R}^{p\times w},
\label{eq:Zk}
\end{align}
so that each column of $\bm{Z}^{+}_k$ is the one-step-ahead observation of the corresponding column of $\bm{V}_k$. An exponential weighting matrix is introduced to discount the contribution of past observations: $\boldsymbol{\Gamma}
    := \mathrm{diag}\!\bigl(\gamma^{w-1},\;\gamma^{w-2},\;\ldots,\; \gamma,\;1\bigr) \in \mathbb{R}^{w\times w},$ where $0 < \gamma \leq 1$ is the forgetting factor. Smaller values of $\gamma$ place greater emphasis on recent observations, while $\gamma = 1$ corresponds to a uniformly weighted window. The weighted regressor Gramian is defined as $\boldsymbol{G}_k
    := \boldsymbol{V}_k \boldsymbol{\Gamma} \boldsymbol{V}_k^\top
     \in \mathbb{R}^{(p+n_u)\times(p+n_u)}.$
By construction, $\boldsymbol{G}_k$ is symmetric positive semidefinite, and becomes positive definite whenever the columns of $\boldsymbol{V}_k$ span $\mathbb{R}^{p+n_u}$, i.e., whenever the persistent excitation condition is satisfied over the current window. The windowed prediction error matrix is defined as $\boldsymbol{E}^{\mathrm{pred}}_k
    := \boldsymbol{Z}_k^+ - \hat{\boldsymbol{W}}_k\boldsymbol{V}_k
     \in \mathbb{R}^{p\times w}.$


\begin{assumption}[Bounded regressors]\label{as:bounded_regressors}
There exists $v_{\max} := \sup_{k\ge 0}\|\bm{v}_k\|_2 < \infty$.
\end{assumption}

Because the true operator drifts \emph{within} the window (Assumption~\ref{as:bounded_drift}), the columns of $\bm{Z}^{+}_k$ are generated by different operators. Indexing the columns of $\bm{V}_k$ by their sampling times $t_j := k-w+j$, $j = 1,\dots,w$, the $j$-th column of $\bm{Z}^{+}_k$ satisfies $\bm{z}_{t_j+1} = \bm{W}^{*}_{t_j}\bm{v}_{t_j}$, and hence the windowed residual admits the exact decomposition
\begin{equation}
\label{eq:Epred_decomp}
\bm{E}^{\mathrm{pred}}_k
  = \bm{E}_k\bm{V}_k + \bm{\Xi}_k,
\;
\bm{\Xi}_k := \bigl[\bm{D}_{k,1}\bm{v}_{t_1},\dots,\bm{D}_{k,w}\bm{v}_{t_w}\bigr],
\end{equation}
where $\bm{D}_{k,j} := \bm{W}^{*}_{t_j}-\bm{W}^{*}_k = -\sum_{i=t_j}^{k-1}\bm{\Delta}_i$ collects the intra-window operator drift, with $\bm{\Delta}_i := \bm{W}^{*}_{i+1}-\bm{W}^{*}_i$, and Assumption~\ref{as:bounded_drift} gives $\|\bm{D}_{k,j}\|_F \le (w-j)\,\nu$. For $w = 1$ the drift term vanishes identically, $\bm{\Xi}_k \equiv \bm{0}$, and \eqref{eq:Epred_decomp} reduces to the exact rank-one identity $\bm{E}^{\mathrm{pred}}_k = \bm{E}_k\bm{v}_k$.

The adaptive Koopman update rule is given by
\begin{equation}
  \label{eq:adaptation}
  \hat{\boldsymbol{W}}_{k+1}
    = \hat{\boldsymbol{W}}_k
      + \eta\,\boldsymbol{E}^{\mathrm{pred}}_k
        \boldsymbol{\Gamma}\boldsymbol{V}_k^\top,
  \qquad \eta > 0,
\end{equation}
where $\eta$ is a constant step size. Substituting \eqref{eq:Epred_decomp}, the update~\eqref{eq:adaptation} is equivalently expressed as
\begin{equation}
\label{eq:adap_law}
  \hat{\boldsymbol{W}}_{k+1}
    = \hat{\boldsymbol{W}}_k
      + \eta\,\boldsymbol{E}_k\boldsymbol{G}_k
      + \eta\,\bm{\Xi}_k\boldsymbol{\Gamma}\bm{V}_k^\top,
\end{equation}
whose first correction term projects the estimation error onto the directions spanned by the current regressor window $\boldsymbol{V}_k$, weighted by $\boldsymbol{\Gamma}$, while the second is a bias induced by intra-window operator drift and vanishes for $w=1$ or $\nu = 0$.

\begin{proposition}\label{prop:error_recursion}
Under \eqref{eq:adaptation}, the estimation error satisfies
\begin{equation}
\bm{E}_{k+1} = \bm{E}_k(\bm{I}-\eta\bm{G}_k)+\tilde{\bm{\Delta}}_k,
\;
\tilde{\bm{\Delta}}_k := \bm{\Delta}_k - \eta\,\bm{\Xi}_k\bm{\Gamma}\bm{V}_k^\top.
\label{eq:recursion}
\end{equation}
Moreover, under Assumptions~\ref{as:bounded_drift} and~\ref{as:bounded_regressors},
\begin{equation}
\label{eq:nu_eff}
\|\tilde{\bm{\Delta}}_k\|_F
\;\le\;
\nu_{\mathrm{eff}}
:= \nu\bigl(1 + \eta\, v_{\max}^2\, S_\gamma\bigr),
\;
S_\gamma := \sum_{i=0}^{w-1} i\,\gamma^{i}.
\end{equation}
\end{proposition}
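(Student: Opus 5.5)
The plan is to derive the recursion by direct substitution and then bound the perturbation term column by column, using the structure of $\bm{\Xi}_k$ and $\bm{\Gamma}$.

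\textbf{Step 1 (recursion).} I would start from the definition $\bm{E}_{k+1}=\bm{W}^{*}_{k+1}-\hat{\bm{W}}_{k+1}$. Write $\bm{W}^{*}_{k+1}=\bm{W}^{*}_k+\bm{\Delta}_k$, and substitute the equivalent form \eqref{eq:adap_law} of the update for $\hat{\bm{W}}_{k+1}$. This form is already justified by the exact decomposition \eqref{eq:Epred_decomp} together with $\bm{G}_k=\bm{V}_k\bm{\Gamma}\bm{V}_k^\top$. Collecting terms gives
\[
\bm{E}_{k+1}=\bm{W}^{*}_k-\hat{\bm{W}}_k-\eta\bm{E}_k\bm{G}_k+\bm{\Delta}_k-\eta\bm{\Xi}_k\bm{\Gamma}\bm{V}_k^\top .
\]
Since $\bm{W}^{*}_k-\hat{\bm{W}}_k=\bm{E}_k$, this is exactly \eqref{eq:recursion} with $\tilde{\bm{\Delta}}_k=\bm{\Delta}_k-\eta\bm{\Xi}_k\bm{\Gamma}\bm{V}_k^\top$. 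This step is purely algebraic.

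\textbf{Step 2 (bound).} By the triangle inequality and Assumption~\ref{as:bounded_drift},
\[
\|\tilde{\bm{\Delta}}_k\|_F\le\nu+\eta\|\bm{\Xi}_k\bm{\Gamma}\bm{V}_k^\top\|_F .
\]
The key observation is that $\bm{\Gamma}$ is diagonal with $j$-th entry $\gamma^{w-j}$. Hence the product expands as a weighted sum of rank-one terms:
\[
\bm{\Xi}_k\bm{\Gamma}\bm{V}_k^\top=\sum_{j=1}^{w}\gamma^{w-j}\,\bm{D}_{k,j}\bm{v}_{t_j}\bm{v}_{t_j}^\top .
\]
For each summand I would use $\|\bm{D}\bm{v}\bm{v}^\top\|_F\le\|\bm{D}\|_F\|\bm{v}\bm{v}^\top\|_2=\|\bm{D}\|_F\|\bm{v}\|_2^2$. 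Then I apply the intra-window drift bound $\|\bm{D}_{k,j}\|_F\le(w-j)\nu$ stated after \eqref{eq:Epred_decomp}, and Assumption~\ref{as:bounded_regressors} for $\|\bm{v}_{t_j}\|_2\le v_{\max}$. This yields
\[
\|\bm{\Xi}_k\bm{\Gamma}\bm{V}_k^\top\|_F\le\nu v_{\max}^2\sum_{j=1}^{w}(w-j)\gamma^{w-j}.
\]
Reindexing with $i=w-j\in\{0,\dots,w-1\}$ turns the sum into $S_\gamma$, which gives \eqref{eq:nu_eff}.

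\textbf{Main obstacle.} The only delicate point is bookkeeping of indices: checking that the $j$-th diagonal weight of $\bm{\Gamma}$ is $\gamma^{w-j}$, and that the drift count for column $j$ is $w-j$. The most recent column ($j=w$) then carries zero drift and weight $1$, and the weights and drift counts pair to give $i\gamma^{i}$ rather than, say, $i\gamma^{w-1-i}$. A secondary point is the mixed-norm inequality for the rank-one products, which holds via $\|\bm{M}\bm{N}\|_F\le\|\bm{M}\|_F\|\bm{N}\|_2$. The case $w=1$ (or $\nu=0$) serves as a sanity check: $S_\gamma=0$ there, consistent with $\bm{\Xi}_k\equiv\bm{0}$.
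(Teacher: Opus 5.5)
Your proposal is correct and follows the paper's proof essentially step for step: you substitute \eqref{eq:adap_law} into $\bm{E}_{k+1}=\bm{W}^{*}_{k+1}-\hat{\bm{W}}_{k+1}$, expand $\bm{\Xi}_k\bm{\Gamma}\bm{V}_k^\top$ as the weighted sum of rank-one terms $\gamma^{w-j}\bm{D}_{k,j}\bm{v}_{t_j}\bm{v}_{t_j}^\top$, apply $\|\bm{D}_{k,j}\|_F\le(w-j)\nu$ and $v_{\max}$, and reindex to get $S_\gamma$. The only cosmetic difference is that the paper bounds each summand via $\|\bm{D}\bm{v}\bm{v}^\top\|_F=\|\bm{D}\bm{v}\|_2\|\bm{v}\|_2$ rather than your mixed-norm inequality $\|\bm{M}\bm{N}\|_F\le\|\bm{M}\|_F\|\bm{N}\|_2$, and both give the identical bound.
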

\begin{proof}
Substituting \eqref{eq:adap_law} into $\bm{E}_{k+1}=\bm{W}^{*}_{k+1}-\hat{\bm{W}}_{k+1}$ yields
$\bm{E}_{k+1}=(\bm{W}^{*}_k-\hat{\bm{W}}_k)-\eta\bm{E}_k\bm{V}_k\bm{\Gamma}\bm{V}_k^{\top}-\eta\bm{\Xi}_k\bm{\Gamma}\bm{V}_k^\top+(\bm{W}^{*}_{k+1}-\bm{W}^{*}_k)=\bm{E}_k(\bm{I}-\eta\bm{G}_k)+\tilde{\bm{\Delta}}_k$, establishing \eqref{eq:recursion}. For \eqref{eq:nu_eff}, write $\bm{\Xi}_k\bm{\Gamma}\bm{V}_k^\top = \sum_{j=1}^{w}\gamma^{w-j}\,\bm{D}_{k,j}\bm{v}_{t_j}\bm{v}_{t_j}^\top$. Since $\|\bm{D}\bm{v}\bm{v}^\top\|_F = \|\bm{D}\bm{v}\|_2\|\bm{v}\|_2 \le \|\bm{D}\|_F\|\bm{v}\|_2^2$, the triangle inequality together with $\|\bm{D}_{k,j}\|_F \le (w-j)\nu$ and Assumption~\ref{as:bounded_regressors} gives
$\|\bm{\Xi}_k\bm{\Gamma}\bm{V}_k^\top\|_F \le \nu\, v_{\max}^2 \sum_{j=1}^{w}\gamma^{w-j}(w-j) = \nu\, v_{\max}^2\, S_\gamma$, and $\|\tilde{\bm{\Delta}}_k\|_F \le \|\bm{\Delta}_k\|_F + \eta\,\nu\, v_{\max}^2 S_\gamma \le \nu_{\mathrm{eff}}$.
\end{proof}

The recursion~\eqref{eq:recursion} decomposes the error evolution into a linear contraction term governed by the windowed Gramian $\boldsymbol{G}_k$ and an additive perturbation $\tilde{\bm{\Delta}}_k$ attributable to temporal variation of the true operator, comprising the one-step drift $\bm{\Delta}_k$ and the intra-window drift bias accumulated across the $w$ samples. When $\boldsymbol{G}_k \succ \boldsymbol{0}$ and the step size satisfies $\rho_k := \|\boldsymbol{I} - \eta\boldsymbol{G}_k\|_2 < 1$, the linear map $\boldsymbol{E}_k \mapsto \boldsymbol{E}_k(\boldsymbol{I} - \eta\boldsymbol{G}_k)$ is strictly contractive. The precise step-size condition ensuring uniform contraction and the resulting exponential convergence bound are established in Lemma~\ref{lem:contraction} and Theorem~\ref{thm:contraction} below.

\begin{lemma}\label{lem:contraction}
Consider \eqref{eq:recursion} with $\bm{G}_k\succeq 0$, and define the uniform eigenvalue upper bound
$\bar{\lambda} := c_\gamma v_{\max}^2 \;\ge\; \lambda_{\max}(\bm{G}_k)\ \forall k$, where $c_\gamma := \sum_{i=0}^{w-1}\gamma^{i}$, which holds under Assumption~\ref{as:bounded_regressors}. Then:
(i) if $0<\eta<2/\lambda_{\max}(\bm{G}_k)$, the update is non-expansive, $\|\bm{E}_{k+1}\|_F\leq\|\bm{E}_k\|_F+\|\tilde{\bm{\Delta}}_k\|_F$;
(ii) if additionally $\bm{G}_k\succ 0$, it is strictly contractive, $\|\bm{E}_{k+1}\|_F\leq\rho_k\|\bm{E}_k\|_F+\|\tilde{\bm{\Delta}}_k\|_F$ with $\rho_k:=\|\bm{I}-\eta\bm{G}_k\|_2<1$;
(iii) if $\lambda_{\min}(\bm{G}_k)\geq\underline{\lambda}>0$ for all $k$ and $0 < \eta < 2/\bar{\lambda}$, the contraction is uniform:
$\|\bm{E}_{k+1}\|_F\leq\rho\|\bm{E}_k\|_F+\|\tilde{\bm{\Delta}}_k\|_F$ for all $k$, with
$\rho := \max\bigl\{|1-\eta\underline{\lambda}|,\,|1-\eta\bar{\lambda}|\bigr\} < 1$ independent of $k$.
\end{lemma}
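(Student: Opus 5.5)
The plan is to start from the error recursion \eqref{eq:recursion} of Proposition~\ref{prop:error_recursion}. The triangle inequality gives
$\|\bm{E}_{k+1}\|_F\le\|\bm{E}_k(\bm{I}-\eta\bm{G}_k)\|_F+\|\tilde{\bm{\Delta}}_k\|_F$, so the whole argument reduces to bounding the linear part. For that I will use the submultiplicative bound $\|\bm{X}\bm{M}\|_F\le\|\bm{X}\|_F\|\bm{M}\|_2$. It follows from writing $\|\bm{X}\bm{M}\|_F^2=\sum_i\|\bm{x}_i^\top\bm{M}\|_2^2$ over the rows $\bm{x}_i^\top$ of $\bm{X}$ and bounding each term by $\|\bm{M}\|_2^2\|\bm{x}_i\|_2^2$. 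With $\bm{M}=\bm{I}-\eta\bm{G}_k$ this gives $\|\bm{E}_{k+1}\|_F\le\rho_k\|\bm{E}_k\|_F+\|\tilde{\bm{\Delta}}_k\|_F$ in every case. What remains is to evaluate $\rho_k=\|\bm{I}-\eta\bm{G}_k\|_2$ under each hypothesis.

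Since $\bm{G}_k$ is symmetric positive semidefinite, $\bm{I}-\eta\bm{G}_k$ is symmetric with eigenvalues $1-\eta\lambda_i(\bm{G}_k)$, so $\rho_k=\max_i|1-\eta\lambda_i|$. The map $\lambda\mapsto|1-\eta\lambda|$ is convex, so on $[\lambda_{\min},\lambda_{\max}]$ this equals $\max\{|1-\eta\lambda_{\min}|,|1-\eta\lambda_{\max}|\}$.

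For (i), $\lambda_i\in[0,\lambda_{\max}]$ and $\eta\lambda_{\max}<2$ give $1-\eta\lambda_i\in(-1,1]$, so $\rho_k\le1$. For (ii), $\lambda_{\min}>0$ additionally gives $1-\eta\lambda_i\in(-1,1)$. Since there are finitely many eigenvalues, $\rho_k<1$ strictly.

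Before (iii) I need the uniform bound $\lambda_{\max}(\bm{G}_k)\le\bar\lambda$. I will write $\bm{G}_k=\sum_{j=1}^w\gamma^{w-j}\bm{v}_{t_j}\bm{v}_{t_j}^\top$. Using $\lambda_{\max}(\bm{v}\bm{v}^\top)=\|\bm{v}\|_2^2\le v_{\max}^2$ (Assumption~\ref{as:bounded_regressors}) and subadditivity of $\lambda_{\max}$ over PSD sums, I get $\lambda_{\max}(\bm{G}_k)\le v_{\max}^2\sum_{i=0}^{w-1}\gamma^i=c_\gamma v_{\max}^2=\bar\lambda$.

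For (iii), every eigenvalue of every $\bm{G}_k$ then lies in $[\underline\lambda,\bar\lambda]$. Convexity gives $\rho_k\le\max\{|1-\eta\underline\lambda|,|1-\eta\bar\lambda|\}=\rho$ for all $k$. The condition $0<\eta<2/\bar\lambda$ together with $0<\underline\lambda\le\bar\lambda$ places both $\eta\underline\lambda$ and $\eta\bar\lambda$ in $(0,2)$, hence $\rho<1$, and $\rho$ is clearly independent of $k$.

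I do not expect a real obstacle here. The only points needing care are the Frobenius–spectral mixed inequality, which must be applied with the matrix multiplied on the right, and the convexity reduction to the extreme eigenvalues. A further subtlety is whether $\underline\lambda\le\bar\lambda$ is consistent. This holds automatically, since $\underline\lambda\le\lambda_{\min}(\bm{G}_k)\le\lambda_{\max}(\bm{G}_k)\le\bar\lambda$.
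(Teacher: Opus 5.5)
Your proposal is correct and follows essentially the same route as the paper. Both use the triangle inequality with the mixed bound $\|\bm{E}_k(\bm{I}-\eta\bm{G}_k)\|_F\le\|\bm{E}_k\|_F\|\bm{I}-\eta\bm{G}_k\|_2$, the eigenvalue formula $\|\bm{I}-\eta\bm{G}_k\|_2=\max_i|1-\eta\lambda_i(\bm{G}_k)|$, the rank-one-sum bound $\lambda_{\max}(\bm{G}_k)\le c_\gamma v_{\max}^2$, and convexity of $\lambda\mapsto|1-\eta\lambda|$ to reduce to the endpoints $\underline{\lambda},\bar{\lambda}$. Your row-wise justification of the mixed Frobenius--spectral inequality is, if anything, more precise than the paper's appeal to ``submultiplicativity of the Frobenius norm.''
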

\begin{proof}
By submultiplicativity of the Frobenius norm, $\|\bm{E}_{k+1}\|_F\leq\|\bm{I}-\eta\bm{G}_k\|_2\|\bm{E}_k\|_F+\|\tilde{\bm{\Delta}}_k\|_F$. Since $\bm{G}_k$ is symmetric positive semidefinite, $\|\bm{I}-\eta\bm{G}_k\|_2=\max_i|1-\eta\lambda_i(\bm{G}_k)|$. Under $0<\eta<2/\lambda_{\max}(\boldsymbol G_k)$, $|1-\eta \lambda_i(\boldsymbol G_k)|\le 1$ for all $i$, proving (i). If $\boldsymbol G_k \succ 0$, then $\lambda_i(\boldsymbol G_k)>0$ for all $i$, hence $|1-\eta\lambda_i(\boldsymbol G_k)|<1$, establishing (ii). For (iii), note first that $\lambda_{\max}(\bm{G}_k) \le \|\bm{V}_k\bm{\Gamma}\bm{V}_k^\top\|_2 \le \sum_{j=1}^{w}\gamma^{w-j}\|\bm{v}_{t_j}\|_2^2 \le c_\gamma v_{\max}^2 = \bar{\lambda}$, so every eigenvalue of $\bm{G}_k$ lies in $[\underline{\lambda},\bar{\lambda}]$. Since $\lambda \mapsto |1-\eta\lambda|$ is convex, it attains its maximum over $[\underline{\lambda},\bar{\lambda}]$ at an endpoint, giving $\|\bm{I}-\eta\bm{G}_k\|_2 \le \max\{|1-\eta\underline{\lambda}|,|1-\eta\bar{\lambda}|\} = \rho$. The condition $0<\eta<2/\bar{\lambda}$ ensures both endpoint values are strictly less than one, hence $\rho<1$ independently of $k$.
\end{proof}

\begin{theorem}\label{thm:contraction}
Suppose (a) Assumptions~\ref{as:bounded_drift} and~\ref{as:bounded_regressors} hold; (b) $\lambda_{\min}(\bm{G}_k)\ge\underline{\lambda}>0$ for all $k$ and the step size satisfies $0<\eta<2/\bar{\lambda}$, so that $\|\boldsymbol{I} - \eta \boldsymbol{G}_k \|_2 \le \rho < 1$ with $\rho$ as in Lemma~\ref{lem:contraction}(iii). Then $\| \boldsymbol{E}_{k+1} \|_F \le \rho \| \boldsymbol{E}_k \|_F + \nu_{\mathrm{eff}}$, and hence
\begin{equation}
\|\bm{E}_k\|_F {\leq} \rho^{k}\|\bm{E}_0\|_F{+}\frac{1-\rho^{k}}{1{-}\rho}\,\nu_{\mathrm{eff}}, \,
\limsup_{k\to\infty}\|\bm{E}_k\|_F{\leq}\frac{\nu_{\mathrm{eff}}}{1{-}\rho},
\label{eq:bound}
\end{equation}
with $\nu_{\mathrm{eff}} = \nu(1+\eta v_{\max}^2 S_\gamma)$ from \eqref{eq:nu_eff}.
\end{theorem}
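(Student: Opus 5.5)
The plan is to combine Proposition~\ref{prop:error_recursion} with Lemma~\ref{lem:contraction}(iii) to get a scalar one-step inequality, and then unroll it as a discrete comparison (Grönwall-type) argument.

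\textbf{Step 1: the one-step bound.} Start from the exact recursion~\eqref{eq:recursion}, $\bm{E}_{k+1}=\bm{E}_k(\bm{I}-\eta\bm{G}_k)+\tilde{\bm{\Delta}}_k$. By hypothesis (b), every eigenvalue of $\bm{G}_k$ lies in $[\underline{\lambda},\bar{\lambda}]$, using $\lambda_{\max}(\bm{G}_k)\le\bar\lambda$ under Assumption~\ref{as:bounded_regressors}. Together with $0<\eta<2/\bar\lambda$, Lemma~\ref{lem:contraction}(iii) then gives $\|\bm{E}_{k+1}\|_F\le\rho\|\bm{E}_k\|_F+\|\tilde{\bm{\Delta}}_k\|_F$ with a $k$-independent $\rho<1$. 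Next, Assumptions~\ref{as:bounded_drift} and~\ref{as:bounded_regressors} let us invoke~\eqref{eq:nu_eff}, which bounds $\|\tilde{\bm{\Delta}}_k\|_F\le\nu_{\mathrm{eff}}$ uniformly in $k$. Combining the two yields the first claim, $\|\bm{E}_{k+1}\|_F\le\rho\|\bm{E}_k\|_F+\nu_{\mathrm{eff}}$.

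\textbf{Step 2: unrolling.} Set $e_k:=\|\bm{E}_k\|_F\ge 0$ and prove by induction on $k$ that
\[
e_k\le\rho^k e_0+\nu_{\mathrm{eff}}\sum_{i=0}^{k-1}\rho^i .
\]
The base case $k=0$ is trivial, since the sum is empty. For the inductive step, insert the hypothesis into $e_{k+1}\le\rho e_k+\nu_{\mathrm{eff}}$. This is legitimate because $\rho\ge0$, so multiplying the inequality by $\rho$ preserves its direction. Summing the geometric series with $\rho<1$ gives $\sum_{i=0}^{k-1}\rho^i=(1-\rho^k)/(1-\rho)$, which is the first inequality in~\eqref{eq:bound}.

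\textbf{Step 3: the limit.} Since $0\le\rho<1$, we have $\rho^k\to0$. Hence the right-hand side of the first inequality converges to $\nu_{\mathrm{eff}}/(1-\rho)$, and taking $\limsup$ on both sides gives the ultimate bound.

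\textbf{Main obstacle.} There is no real analytic difficulty here. The only delicate point is to confirm that the hypotheses of Lemma~\ref{lem:contraction}(iii) are met uniformly in $k$. This means checking that $\bar\lambda$ really dominates $\lambda_{\max}(\bm{G}_k)$ at every step; this follows from $\|\bm{G}_k\|_2\le\sum_j\gamma^{w-j}\|\bm{v}_{t_j}\|_2^2\le c_\gamma v_{\max}^2$, already shown in the lemma's proof. It also means checking that $\rho$ is nonnegative and independent of $k$, so that the induction and the geometric sum go through.
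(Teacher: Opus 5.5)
Your proposal is correct and matches the paper's proof: combine Lemma~\ref{lem:contraction}(iii) with the uniform bound $\|\tilde{\bm{\Delta}}_k\|_F\le\nu_{\mathrm{eff}}$ from Proposition~\ref{prop:error_recursion} to get the one-step inequality, unroll it into a geometric sum, and take the limit superior. Your explicit induction, which uses $\rho\ge 0$, simply spells out the unrolling step that the paper states in one line.
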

\begin{proof}
By Lemma~\ref{lem:contraction}(iii) and Proposition~\ref{prop:error_recursion}, $\| \boldsymbol{E}_{k+1} \|_F \le \rho \| \boldsymbol{E}_k \|_F + \nu_{\mathrm{eff}}$. Unrolling the recursion yields
\begin{align}
\| \boldsymbol{E}_k \|_F {\le} \rho^k \| \boldsymbol{E}_0 \|_F {+} \sum_{i=0}^{k{-}1} \rho^i \nu_{\mathrm{eff}} {=} \rho^k \| \boldsymbol{E}_0 \|_F {+} \frac{1 {-} \rho^k}{1 {-} \rho} \nu_{\mathrm{eff}}.
\end{align}
Taking the limit superior completes the proof.
\end{proof}

\begin{remark}\label{rem:window_tradeoff}
The drift inflation factor in $\nu_{\mathrm{eff}}$ is governed by $S_\gamma = \sum_{i=0}^{w-1} i\gamma^i$, the $\gamma$-weighted average age of the samples in the window. For $w=1$, $S_\gamma = 0$ and Theorem~\ref{thm:contraction} recovers the exact rank-one bound with $\nu_{\mathrm{eff}}=\nu$. For $\gamma = 1$, $S_\gamma = w(w-1)/2$ grows quadratically in the window length, whereas for $\gamma < 1$, $S_\gamma \le \gamma/(1-\gamma)^2$ uniformly in $w$. The forgetting factor therefore actively suppresses the drift-induced bias: larger windows improve the conditioning of $\bm{G}_k$ (raising $\underline{\lambda}$ and hence the contraction rate) at the cost of an $O(\eta\nu v_{\max}^2 S_\gamma)$ bias, and choosing $\gamma < 1$ caps this cost independently of $w$. This quantifies the selection of moderate window lengths with exponential forgetting. Under the step-size condition $\eta < 2/\bar{\lambda} = 2/(c_\gamma v_{\max}^2)$, the inflation factor additionally satisfies $\nu_{\mathrm{eff}} \le \nu(1 + 2S_\gamma/c_\gamma)$, where $S_\gamma/c_\gamma \le w-1$ is the mean sample age; the bound is thus at most a modest multiple of $\nu$ for the window lengths used in practice.
\end{remark}

Theorem~\ref{thm:contraction} provides an explicit bound on the learning-induced model mismatch, which can be interpreted as a bounded additive disturbance and subsequently absorbed by a robust MPC-like controller to ensure robust closed-loop stability. All downstream quantities consume the estimation-error bound only through $\bar{E}_k := \rho^k\|\bm{E}_0\|_F + (1-\rho^k)\,\nu_{\mathrm{eff}}/(1-\rho)$.

\section{Active Learning Algorithm}\label{sec:active_learning}

The recursion $\bm{E}_{k+1}=\bm{E}_k(\bm{I}-\eta\bm{G}_k)+\bm{\tilde{\Delta}}_k$ of Theorem~\ref{thm:contraction} contracts the parameter error only in directions excited within the sliding window. Closed-loop tracking, however, drives the system toward smooth, repetitive trajectories, so the regressors $\bm{v}_k$ become correlated, yielding an ill-conditioned or low-rank $\bm{G}_k$, and consequently stalling or reversing online Koopman adaptation precisely when it is needed. This section addresses this data-quality problem by coupling the adaptation law with an active learning strategy that explicitly optimizes the informativeness of the closed-loop trajectory.


\subsection{D-Optimal Active Learning Objective}\label{sec:doptimal}
By Lemma~\ref{lem:contraction}, for a step size $\eta$ the contraction factor satisfies $\rho_k=\|\boldsymbol I-\eta\boldsymbol G_k\|_2=\max_i|1-\eta\lambda_i(\boldsymbol G_k)|<1$ if and only if $\lambda_{\min}(\boldsymbol G_k)>0$, and the uniform factor of Lemma~\ref{lem:contraction}(iii) is $\rho=\max\{|1-\eta\underline\lambda|,\,|1-\eta\bar\lambda|\}$. This factor is non-increasing in $\underline\lambda=\inf_k\lambda_{\min}(\boldsymbol G_k)$, and strictly decreasing while the $\underline\lambda$ branch is active, i.e., while $|1-\eta\underline\lambda|\ge|1-\eta\bar\lambda|$. Once the upper branch dominates, further excitation of the weakest direction no longer improves the guaranteed rate, and $\rho$ is limited by $\bar\lambda$ and the step size. Improving the conditioning of $\boldsymbol G_k$ is therefore the primary lever on the convergence rate, i.e., when the regressor window is poorly conditioned. This motivates augmenting the task objective with an information-seeking term that targets $\lambda_{\min}(\boldsymbol G_k)$ along the closed-loop trajectory.

To this end, we maximize the D-optimality criterion by augmenting the task cost with the log-determinant of the predicted Gramian $\Ghat_k$. Concretely, define the stacked predicted regressor matrix
\begin{align}
  \label{eq:Vhat}
  \Vhat_k
    &= \bigl[{\boldsymbol{\hat{v}}}_{k|k},\;
             \ldots,\;
             {\boldsymbol{\hat{v}}}_{k+N_p-1|k}\bigr]
    \in\mathbb{R}^{(p+n_u)\times N_p},\\
  \label{eq:Ghat}
  \Ghat_k
    &= \Vhat_k \Vhat_k{}^{\!\top}
    \in\mathbb{R}^{(p+n_u)\times(p+n_u)},
\end{align}
constructed from \emph{future} rollout regressors. The information metric is
\begin{equation}
  \label{eq:jinfo}
  J_{\mathrm{info}}
    := \log\det (\Ghat_k + \varepsilon\boldsymbol{I} ),
  \qquad \varepsilon > 0,
\end{equation}
where $\varepsilon\boldsymbol{I}$ ensures positive definiteness under rank deficiency early in adaptation. With exploration-exploitation weight $\beta>0$, the combined objective is
\begin{align}
  \label{eq:combined_obj}
  J &= J_{\mathrm{task}} - \beta\,J_{\mathrm{info}},\; \text{where}\\
  \label{eq:jtask}
  J_{\mathrm{task}}
    &:= \sum_{i=k}^{k+N_p-1}
         \| \boldsymbol{C} \boldsymbol{\hat{z}}_{i|k} - \boldsymbol{x}_{\mathrm{ref},i}\|^2_{\boldsymbol{Q}}
         + \|\boldsymbol{u}_{i|k}\|^2_{\boldsymbol{R}}.
\end{align}
The choice of log-det is motivated by the identity $\log\det(\Ghat_k + \varepsilon\boldsymbol{I})  = \sum_i \log(\lambda_i(\Ghat_k) + \varepsilon)$. Since $\log(\cdot+\varepsilon)$ is concave and steepest near zero, the criterion penalizes small eigenvalues most heavily. It thus promotes excitation across \emph{all} eigendirections of $\Ghat_k$, discouraging rank deficiency in any single direction. This is preferable to simpler heuristics such as maximizing input energy, which may over-excite already-excited directions while leaving others unimproved. The excitation actually guaranteed by this objective is quantified in Appendix~\ref{app:active_learn_better} Theorem~\ref{thm:gramian_lb}, and the influence of the exploration weight on the optimizer is characterized in Theorem~\ref{thm:perturbation}, which establishes that the active-learning solution varies smoothly from the nominal MPC solution while increasing the predicted information content for sufficiently small $\beta$.
 
\subsection{Constrained Active-Learning MPC}\label{sec:mpc_formulation}
The constrained optimization problem solved at each step $k$ is
\begin{subequations}\label{eq:mpc}
\begin{align}
\min_{\boldsymbol{U}}\ & \sum_{i=k}^{k+N_p-1}\!\Big( \big\|\boldsymbol{C} \boldsymbol{\hat{z}}_{i|k} - \boldsymbol{x}_{\mathrm{ref},i}\big\|_{\boldsymbol{Q}}^2 + \big\|\boldsymbol{u}_{i|k}\big\|_{\boldsymbol{R}}^2 \Big) \notag\\
& - \beta\,\log\det\!\Big( \Vhat_k(\boldsymbol{U})\,\Vhat_k(\boldsymbol{U})^{\!\top} + \varepsilon \boldsymbol{I} \Big) \label{eq:mpc_obj}\\
\text{s.t.}\ & \boldsymbol{\hat{z}}_{i+1|k} {=} \hat{\boldsymbol{A}}_k\, \boldsymbol{\hat{z}}_{i|k} {+} \hat{\boldsymbol{B}}_k\, \boldsymbol{u}_{i|k}, \ i=k,\dots,k{+}N_p{-}1, \label{eq:mpc_dyn}\\
& \boldsymbol{\hat{z}}_{k|k} = \boldsymbol{z}_{k}, \label{eq:mpc_init}\\
& \boldsymbol{u}_{i|k} \in \mathcal{U}^{\,t}_{i-k},\; \boldsymbol{\hat{z}}_{i|k} \in \mathcal{Z}^{\,t}_{i-k}, \ i=k,\dots,k{+}N_p{-}1, \label{eq:mpc_sets}\\
& \boldsymbol{\hat{z}}_{k+N_p|k}\in\mathcal{Z}_f, \label{eq:mpc_terminal}\\
& h\big(\boldsymbol{C} \boldsymbol{\hat{z}}_{i+1|k}\big) \ge (1-\alpha_{\mathrm{cbf}})\,h\big(\boldsymbol{C} \boldsymbol{\hat{z}}_{i|k}\big) + L_{h} \sigma_{i-k}, \notag\\
& \hspace{2.2cm} i=k,\dots,k{+}N_p{-}1, \label{eq:mpc_cbf}
\end{align}
\end{subequations}

where $\boldsymbol{U} := \{\boldsymbol{{u}}_{i|k}\}_{i=k}^{k+N_p-1}$ is the decision vector, $\boldsymbol{Q} \succeq \boldsymbol{0}$, $\boldsymbol{R} \succ \boldsymbol{0}$, $h(\cdot)$ is the CBF with decay rate $\alpha_{\mathrm{cbf}} \in (0,1]$, and $\mathcal Z^{\,t}_j$, $\mathcal U^{\,t}_j$, $\mathcal Z_f$, and the stage margins $\sigma_j$ are the tightened constraint sets, terminal set, and CBF margins. The tightening are precomputed offline from worst-case constants, so the online problem retains the structure of a nominal MPC. Both $\Vhat_k(\boldsymbol{U})$ and $\Ghat_k(\boldsymbol{U})$ depend on $\boldsymbol{U}$ through the rolled-out lifted states, which is the source of the nonconvexity in~\eqref{eq:mpc_obj}.

Problem~\eqref{eq:mpc} is solved in real time by sequential quadratic programming (SQP). Starting from a warm-started nominal input sequence $\{ \boldsymbol{{u}}_{i|k}^{(0)} \}$, SQP iteratively linearizes the CBF inequalities and the quadratic terminal constraint~\eqref{eq:mpc_terminal} around the current rollout, yielding affine constraints. The log-det exploration term is handled through its first-order expansion; its gradient admits the closed form $\nabla_{\Vhat}\log\det(\Vhat\Vhat{}^{\top}+\varepsilon\boldsymbol{I})=2(\Vhat\Vhat{}^{\top}+\varepsilon\boldsymbol{I}){}^{-1}\Vhat$, so each subproblem is a standard QP.
 
The following theorem provides a verifiable excitation certificate: it lower-bounds the minimum eigenvalue of the predicted Gramian for \emph{any} feasible input sequence whose composite cost does not exceed that of a known informative trajectory.

\begin{theorem}
\label{thm:gramian_lb}
Let $N_p\in\N$ be the prediction horizon, $p,n_u\in\N$ the lifted-state and input dimensions, respectively, $n \coloneqq p+n_u$, and let $\mathcal{F} \coloneqq \{\boldsymbol{U}\in\mathbb{R}^{n_uN_p}: \eqref{eq:mpc_dyn}\text{-}\eqref{eq:mpc_cbf}\text{ hold}\}$ denote the feasible set. Consider the composite objective
\begin{equation}
  J(\boldsymbol{U}) \;=\; J_{\mathrm{task}}(\boldsymbol{U}) \;-\; \beta\,J_{\mathrm{info}}(\boldsymbol{U}), \qquad \beta>0,
  \label{eq:obj}
\end{equation}
\begin{equation}
  J_{\mathrm{info}}(\boldsymbol{U}) \;\coloneqq\;
  \log\det\!\bigl(\Ghat_k(\boldsymbol{U}) + \varepsilon \boldsymbol{I}_n\bigr),
  \qquad \varepsilon > 0,
  \label{eq:info}
\end{equation}
with ${\Ghat_k}(\boldsymbol{U}) \coloneqq \Vhat_k(\boldsymbol{U})\Vhat_k(\boldsymbol{U})^{\!\top}\succeq0$ obtained by rolling out dynamics~\eqref{eq:mpc_dyn} under $\boldsymbol{U}$. Suppose:
\begin{enumerate}[label=(A\arabic*),leftmargin=2.6em,itemsep=1pt,topsep=2pt]
  \item \label{A1} \textbf{Bounded predicted regressors:} there exists $\hat{v}_{\max}<\infty$ such that $\norm{\hat{\boldsymbol{v}}_{i|k}}_2 \le \hat{v}_{\max}$ for all $i \in \{k,\ldots,k+N_p-1\}$ and all $\boldsymbol{U}\in\mathcal{F}$.
  \item \label{A2} \textbf{Bounded task cost:} $\exists$ $\bar{J}$ such that $0 \le J_{\mathrm{task}}(\boldsymbol{U}) \le \bar{J}<\infty$ on $\mathcal{F}$.
  \item \label{A3} \textbf{Informative feasible trajectory:} $\exists$ $\boldsymbol{U}^{\mathrm{good}}\in\mathcal{F}$ and $\lambda_0>0$ with, $\lambda_{\min}\!\bigl(\Ghat_k (\boldsymbol{U}^{\mathrm{good}})\bigr) \;\ge\; \lambda_0.$

  \item \label{A4} \textbf{Sufficient exploration weight:}
    \begin{equation}
      \frac{\bar{J}}{\beta}
      \;<\;
      n\log(\lambda_0 + \varepsilon)
      - (n-1)\log(\Lambda_{\max} + \varepsilon)
      - \log\varepsilon,
      \label{eq:A4}
    \end{equation}
    where $\Lambda_{\max} \coloneqq N_p \hat{v}_{\max}^2$.
\end{enumerate}
Then every $\boldsymbol{U}\in\mathcal{F}$ satisfying the cost-comparison test
\begin{equation}
  J(\boldsymbol{U}) \;\le\; J(\boldsymbol{U}^{\mathrm{good}})
  \label{eq:cert_test}
\end{equation}
satisfies
\begin{equation}
  \lambda_{\min}\!\bigl(\Ghat_k (\boldsymbol{U})\bigr) \;{\ge}\; \lambda^* {\coloneqq}
  \frac{(\lambda_0+\varepsilon)^{n}}{(\Lambda_{\max}+\varepsilon)^{n-1}}e^{-\bar{J}/\beta}
  {-} \varepsilon {>} 0.
  \label{eq:lambdastar}
\end{equation}
In particular, every global minimizer $\boldsymbol{U}^*$ of $J$ over $\mathcal{F}$ satisfies \eqref{eq:cert_test}, and hence \eqref{eq:lambdastar}.
\end{theorem}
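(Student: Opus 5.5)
The plan is to turn the cost-comparison test into a lower bound on the information term, and then turn that into a lower bound on the smallest eigenvalue. All other eigenvalues are capped by a uniform ceiling, which is what makes the second step possible.

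\textbf{Step 1: lower bound on $J_{\mathrm{info}}(\boldsymbol{U})$.} Fix $\boldsymbol{U}\in\mathcal{F}$ with $J(\boldsymbol{U})\le J(\boldsymbol{U}^{\mathrm{good}})$. Expanding the test gives
\[
\beta J_{\mathrm{info}}(\boldsymbol{U}) \ge \beta J_{\mathrm{info}}(\boldsymbol{U}^{\mathrm{good}}) + J_{\mathrm{task}}(\boldsymbol{U}) - J_{\mathrm{task}}(\boldsymbol{U}^{\mathrm{good}}).
\]
By \ref{A2}, the task-cost difference is at least $-\bar{J}$. By \ref{A3}, every eigenvalue of $\Ghat_k(\boldsymbol{U}^{\mathrm{good}})+\varepsilon\boldsymbol{I}_n$ is at least $\lambda_0+\varepsilon$, so $J_{\mathrm{info}}(\boldsymbol{U}^{\mathrm{good}})\ge n\log(\lambda_0+\varepsilon)$. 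Combining these,
\[
J_{\mathrm{info}}(\boldsymbol{U})\ge n\log(\lambda_0+\varepsilon)-\bar{J}/\beta.
\]

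\textbf{Step 2: uniform spectral ceiling.} For any $\boldsymbol{U}\in\mathcal{F}$, use $\lambda_{\max}(\Ghat_k)\le \mathrm{tr}(\Ghat_k)=\sum_i\|\hat{\boldsymbol{v}}_{i|k}\|_2^2\le N_p\hat v_{\max}^2=\Lambda_{\max}$, which follows from \ref{A1}. Writing the eigenvalues of $\Ghat_k(\boldsymbol{U})$ as $\mu_1\le\dots\le\mu_n$, this gives
\[
J_{\mathrm{info}}(\boldsymbol{U})=\sum_i\log(\mu_i+\varepsilon)\le \log(\mu_1+\varepsilon)+(n-1)\log(\Lambda_{\max}+\varepsilon).
\]

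\textbf{Step 3: isolate $\mu_1$.} Combining Steps 1 and 2,
\[
\log(\mu_1+\varepsilon)\ge n\log(\lambda_0+\varepsilon)-(n-1)\log(\Lambda_{\max}+\varepsilon)-\bar J/\beta.
\]
Exponentiating gives $\mu_1\ge\lambda^*$ exactly as in \eqref{eq:lambdastar}. Strict positivity $\lambda^*>0$ is equivalent to the right-hand side exceeding $\log\varepsilon$, and that is precisely the rearrangement of \ref{A4}.

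\textbf{Step 4: global minimizers.} Any global minimizer $\boldsymbol{U}^*$ of $J$ over $\mathcal{F}$ satisfies $J(\boldsymbol{U}^*)\le J(\boldsymbol{U}^{\mathrm{good}})$ trivially, because $\boldsymbol{U}^{\mathrm{good}}\in\mathcal{F}$. The conclusion therefore applies to $\boldsymbol{U}^*$.

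No step is genuinely hard. The only point needing care is the ceiling in Step 2. It must hold uniformly over $\mathcal{F}$, which is why it comes from \ref{A1} via the trace bound rather than from properties of $\boldsymbol{U}^{\mathrm{good}}$. It must also be applied to the $n-1$ largest eigenvalues only, so that the smallest one is isolated. One should also note that $\lambda_0\le\Lambda_{\max}$ by the same trace bound, so the hypotheses are consistent.
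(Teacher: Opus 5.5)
Your proof is correct and uses the same three ingredients as the paper's: the ceiling $\lambda_j(\Ghat_k(\boldsymbol{U}))\le\|\Vhat_k(\boldsymbol{U})\|_F^2\le\Lambda_{\max}$ on the $n-1$ largest eigenvalues, the floor $J_{\mathrm{info}}(\boldsymbol{U}^{\mathrm{good}})\ge n\log(\lambda_0+\varepsilon)$, and $0\le J_{\mathrm{task}}\le\bar J$. The only difference is presentation: the paper argues contrapositively, showing that any $\boldsymbol{U}$ with $\lambda_{\min}(\Ghat_k(\boldsymbol{U}))\le\delta<\lambda^*$ fails the cost-comparison test, whereas you bound $\lambda_{\min}$ directly, which is slightly cleaner.
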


\begin{proof}
We write $\lambda_j(\cdot)$ for the $j$-th eigenvalue of a symmetric matrix $(j = 1,\ldots,n)$.
By~\ref{A1}, $\lambda_j\!(\Ghat_k (\boldsymbol{U}))
\;\le\; \|{\Vhat_k(\boldsymbol{U})}\|_F^2
\;\le\; N_p\,\hat{v}_{\max}^2
\;=\; \Lambda_{\max}, \forall\,j$ for all $\boldsymbol{U}\in\mathcal{F}$. Hence, for any $\boldsymbol{U}$ with $\lambda_{\min}(\Ghat_k(\boldsymbol{U}))\le\delta>0$, monotonicity of $\log(\cdot+\varepsilon)$ gives
\begin{equation}
  J_{\mathrm{info}}(\boldsymbol{U})
  \;\le\;
  \phi(\delta) \;\coloneqq\;
  \log(\delta+\varepsilon)
  +
  (n-1)\log(\Lambda_{\max}+\varepsilon),
  \label{eq:Ibad}
\end{equation}
and therefore $J(\boldsymbol{U})\ge-\beta\phi(\delta)$ by~\ref{A2}. In contrast, by~\ref{A3}, $\lambda_j(\Ghat_k (\boldsymbol{U}^{\mathrm{good}})) \ge \lambda_0$ for all $j$, so $J_{\mathrm{info}}(\boldsymbol{U}^{\mathrm{good}})\ge n\log(\lambda_0+\varepsilon)\eqqcolon C_{\mathrm{good}}$, hence $J(\boldsymbol{U}^{\mathrm{good}})\le\bar{J}-\beta C_{\mathrm{good}}$. Any $\boldsymbol{U}$ with $\lambda_{\min}(\Ghat_k (\boldsymbol{U}))\le\delta$ therefore violates the test~\eqref{eq:cert_test} whenever $-\beta\phi(\delta)>\bar{J}-\beta C_{\mathrm{good}}$, i.e., whenever
\begin{equation}
  \delta
  \;<\;
  \exp\!\Bigl(C_{\mathrm{good}}
    - \frac{\bar{J}}{\beta}
    - (n-1)\log(\Lambda_{\max}+\varepsilon)\Bigr)
  - \varepsilon
  \;=\;
  \lambda^*,
  \label{eq:delta-ineq}
\end{equation}
where substituting $C_{\mathrm{good}}$ recovers the closed form in~\eqref{eq:lambdastar}, and \ref{A4} is precisely the requirement $\lambda^*>0$. In contrast, any $\boldsymbol{U}\in\mathcal{F}$ satisfying~\eqref{eq:cert_test} must have $\lambda_{\min}(\Ghat_k (\boldsymbol{U}))\ge\lambda^*$. The final claim follows since a global minimizer satisfies $J(\boldsymbol{U}^*)\le J(\boldsymbol{U})$ for all $\boldsymbol{U}\in\mathcal{F}$, in particular for $\boldsymbol{U}^{\mathrm{good}}$.
\end{proof}

Theorem~\ref{thm:gramian_lb} establishes that any feasible input sequence whose composite cost matches that of a known informative trajectory carries a certified excitation level $\lambda_{\min}(\Ghat_k)\ge\lambda^*>0$. The key idea is that a trajectory with $\lambda_{\min}(\Ghat_k)<\lambda^*$ incurs an information deficit that, under~\ref{A4}, outweighs any achievable reduction in task cost, and therefore cannot cost less than $\boldsymbol{U}^{\mathrm{good}}$. Consequently, the optimization cannot sacrifice all excitation to improve tracking performance. The theorem certifies excitation at the optimization level through the predicted Gramian. Establishing persistent excitation of the realized receding-horizon closed-loop trajectory remains an interesting direction for future work

\begin{remark}
\label{rem:A4}
Condition~\ref{A4} is exactly the positivity condition $\lambda^*>0$ for the bound~\eqref{eq:lambdastar}. It requires the exploration weight $\beta$ to be sufficiently large relative to the worst-case task cost $\bar{J}$ so that the information deficit of a poorly conditioned trajectory cannot be offset by its task-cost advantage. If~\ref{A4} is violated, the task objective may dominate the optimization, and no positive lower bound on the minimum eigenvalue of the predicted Gramian is guaranteed by this argument.
\end{remark}

\begin{remark}
\label{rem:params}
From~\eqref{eq:lambdastar}, $\lambda^*$ satisfies the following monotonicity properties:
(i)~$\lambda^*$ increases with $\lambda_0$: a more informative comparison trajectory yields a tighter guarantee; (ii)~$\lambda^*$ increases with $\beta$: a larger exploration weight strengthens the information-seeking incentive; (iii)~$\lambda^*$ decreases with $\Jbar$: a tighter task cost bound reduces conservatism; (iv)~$\lambda^*$ decreases with $\Lmax (= N_p \hat{v}_{\max}^2)$: larger regressor spread weakens the guarantee because the contrast between well- and ill-conditioned trajectories diminishes. In all cases $\lambda^* < \lambda_0$, since
$e^{-\Jbar/\beta} < 1$ and $(\lambda_0{+} \varepsilon)/(\Lmax{+}\varepsilon) \le 1$. The gap $\lambda_0 - \lambda^*$ captures the degradation attributable to a nonzero task cost. We note that for large $n$ the ratio $(\lambda_0+\varepsilon)^n/(\Lambda_{\max}+\varepsilon)^{n-1}$ renders $\lambda^*$ quantitatively conservative certifying that excitation cannot collapse entirely.
\end{remark}
\section{Deterministic Safety Guarantees}
\label{sec:anal_dist_bound}
This section quantifies the uncertainty introduced by model mismatch and online adaptation, and incorporates it into the active-learning MPC through safety tightening. We first derive a deterministic disturbance bound, followed by safety guarantees for the closed loop implementation.

\subsection{Disturbance and Model-Update Bounds}\label{sec:composite_bound}
Since the update law~\eqref{eq:adaptation} modifies $\hat{\boldsymbol W}_k$ at every step, the true lifted dynamics satisfy $\boldsymbol z_{k+1}=\hat{\boldsymbol W}_k\boldsymbol v_k+\boldsymbol d_k$, where $\boldsymbol d_k := (\boldsymbol W_k^*-\hat{\boldsymbol W}_k)\boldsymbol v_k = \boldsymbol E_k\boldsymbol v_k$ is the one-step prediction error incurred at time $k$ under the current estimate $\hat{\boldsymbol W}_k$. From Theorem~\ref{thm:contraction},
\begin{equation}
\label{eq:Ebar}
\|\boldsymbol E_k\|_F\le\bar E_k := \rho^k\|\boldsymbol E_0\|_F+\frac{1-\rho^k}{1-\rho}\,\nu_{\mathrm{eff}},
\end{equation}
with $\nu_{\mathrm{eff}}$ from~\eqref{eq:nu_eff}, so that, under Assumption~\ref{as:bounded_regressors},
\begin{equation}
\label{eq:delta_ana}
\|\boldsymbol d_k\|_2 \le
\delta_k^{\mathrm{ana}}
:=
v_{\max}\bar E_k.
\end{equation}
The analytical bound $\delta_k^{\mathrm{ana}}$ captures the model-mismatch disturbance entering the true state transition at step $k$.

Recursive feasibility additionally requires bounding the change in the estimated Koopman operator between consecutive iterations. For a fixed regressor $\boldsymbol v_k$, this prediction drift is given by $(\hat{\boldsymbol W}_{k+1}-\hat{\boldsymbol W}_k)\boldsymbol v_k$. By~\eqref{eq:adap_law}, the operator increment comprises the error-projection term $\eta\boldsymbol E_k\boldsymbol G_k$ and the intra-window drift term $\eta\boldsymbol\Xi_k\boldsymbol\Gamma\boldsymbol V_k^\top$. Using $\|\boldsymbol G_k\|_2\le\bar\lambda = c_\gamma v_{\max}^2$ (Lemma~\ref{lem:contraction}), the bound $\|\boldsymbol\Xi_k\boldsymbol\Gamma\boldsymbol V_k^\top\|_F \le \nu\, v_{\max}^2 S_\gamma$ established in Proposition~\ref{prop:error_recursion}, and submultiplicativity of the Frobenius norm,
\begin{equation}
\label{eq:mu_bound}
\|\hat{\boldsymbol W}_{k+1}
-
\hat{\boldsymbol W}_k\|_F
\le
\eta\,\bigl(c_\gamma \bar E_k + S_\gamma\,\nu\bigr)\, v_{\max}^2
=: \mu_k.
\end{equation}
Since $\|(\hat{\boldsymbol W}_{k+1}-\hat{\boldsymbol W}_k)\boldsymbol v_k\|_2\le\|\hat{\boldsymbol W}_{k+1}-\hat{\boldsymbol W}_k\|_F\|\boldsymbol v_k\|_2$, the drift in the prediction for a fixed regressor is bounded by $\mu_k v_{\max}$.

Both $\delta_k^{\mathrm{ana}}$ and $\mu_k$ are deterministic functions of $\rho$, $\nu$, $\|\boldsymbol E_0\|_F$, $v_{\max}$, and $k$. Of these, $\rho$ and $v_{\max}$ are determined by design quantities and the compact constraint sets, whereas $\nu$ and $\|\boldsymbol E_0\|_F$ involve the unknown true operator and must be replaced in implementation by conservative upper estimates, obtained, e.g., from the magnitude of the physical perturbations considered and the offline validation error of the nominal model, respectively. It is to be noted that with such estimates, both bounds are evaluable online. They become conservative as $\rho\to1$, precisely the correlated-regressor regime induced by tracking control, motivating the data-driven tightening developed in Section~\ref{sec:tightening}.

\subsection{Recursive Feasibility, Forward Invariance, and Boundedness}
\label{sec:safety}

This subsection establishes recursive feasibility of the MPC problem~\eqref{eq:mpc}, forward invariance of the safe set $\mathcal{S}$, and ultimate boundedness of the closed-loop prediction error, using the deterministic bounds $\delta_k^{\mathrm{ana}}$ and $\mu_k$ of Section~\ref{sec:composite_bound}. Throughout, we use the worst-case constants
\begin{eqnarray}
\label{eq:sup_constants}
\bar E := \max\Bigl\{\|\boldsymbol E_0\|_F,\ \tfrac{\nu_{\mathrm{eff}}}{1-\rho}\Bigr\},
\quad
\bar\delta^{\mathrm{ana}} := v_{\max}\bar E, \\
\bar\mu := \eta\bigl(c_\gamma \bar E + S_\gamma\nu\bigr)v_{\max}^2,
\end{eqnarray}
which dominate $\bar E_k$, $\delta_k^{\mathrm{ana}}$, and $\mu_k$ for all $k$ by the monotonicity of $\bar E_k$ in~\eqref{eq:Ebar}.

\begin{assumption}
\label{as:operator_set}
There exists a known compact convex set
$\mathcal{W}^*\subset\mathbb{R}^{p\times(p+n_u)}$
such that the true time-varying Koopman operator satisfies
$\boldsymbol W_k^*\in\mathcal{W}^*$ for all $k\ge0$.
\end{assumption}

Assumption~\ref{as:operator_set} states that, although the true Koopman operator may evolve over time because of distributional shift, model uncertainty, or slowly varying system parameters,
it remains confined to a known bounded uncertainty set. The compactness of $\mathcal{W}^*$ guarantees that all admissible operators are uniformly bounded, while convexity permits the use of a common Lyapunov certificate and robust control arguments over the entire family of operators. Since Theorem~\ref{thm:contraction} yields the uniform estimation error bound $\|\boldsymbol E_k\|_F\le\bar E$ for all $k$, the adaptive estimates satisfy
\begin{equation}
\label{eq:operator_ball}
\hat{\boldsymbol W}_k\in\mathcal{W} := \mathcal{W}^*\oplus\mathcal{B}_F(\bar E)
\qquad \forall k\ge0,
\end{equation}
where $\mathcal{B}_F(\bar E)$ denotes the Frobenius-norm ball of radius $\bar E$. Consequently, all subsequent stability,
recursive-feasibility, and safety guarantees need only be established uniformly over the fixed compact set
$\mathcal W$, rather than for the unknown trajectory
$\{\boldsymbol W_k^*\}_{k\ge0}$.

\begin{assumption}[Robust quadratic stabilizability]
\label{as:stab}
There exist $\boldsymbol{K}_{\mathrm{ctrl}} \in \mathbb{R}^{n_u\times p}$, $\boldsymbol P$ with $\underline{p}\boldsymbol I\preceq\boldsymbol P\preceq\bar p\boldsymbol I$, and $\tilde\alpha\in(0,1)$ such that, for every $[\boldsymbol A,\boldsymbol B]\in\mathcal{W}$,
\begin{equation}
\label{eq:cqlf}
(\boldsymbol A+\boldsymbol B\boldsymbol K_{\mathrm{ctrl}})^{\!\top}\boldsymbol P\,(\boldsymbol A+\boldsymbol B\boldsymbol K_{\mathrm{ctrl}})\;\preceq\;\tilde\alpha^2\boldsymbol P.
\end{equation}
\end{assumption}

Since $\boldsymbol P$ is a common Lyapunov matrix over $\mathcal{W}$, it certifies decay of the \emph{time-varying} transition products: for any $\{\hat{\boldsymbol W}_i\}\subset\mathcal{W}$ and $\boldsymbol A_{\mathrm{cl},i}:=\hat{\boldsymbol A}_i+\hat{\boldsymbol B}_i\boldsymbol K_{\mathrm{ctrl}}$,
\begin{equation}
\label{eq:tv_decay}
\bigl\|\boldsymbol A_{\mathrm{cl},k+j-1}\cdots\boldsymbol A_{\mathrm{cl},k}\bigr\|_2
\;{\le}\; c\,\tilde\alpha^{\,j},
\;
c{:=}\sqrt{\bar p/\underline p},\ \forall j,k{\ge}0.
\end{equation}
Given $(c,\tilde\alpha)$, define the horizon-uniform margin
\begin{equation}
\bar\delta^+
:=
c\,\bar\delta^{\mathrm{ana}}
+
\frac{c\,\bar\mu\, v_{\max}}{1-\tilde\alpha},
\qquad
\bar\delta_{x}^+:= \|\boldsymbol C\|_2\, \bar\delta^+,
\label{eq:horizon_uniform_margin}
\end{equation}
accounting for model mismatch and adaptation-induced prediction drift compounded over the horizon. The tightened sets $\{\mathcal{Z}^t_i,\mathcal{U}^t,\mathcal{Z}^t_f\}$ entering~\eqref{eq:mpc} are constructed from $\bar\delta^+$ and the offset sequence of Lemma~\ref{lem:candidate}.

\begin{assumption}[Constraint sets and robust terminal ingredients]
\label{as:terminal}
With $\mathcal{Z}:=\{\boldsymbol{z}:\boldsymbol{C}\boldsymbol{z}\in\mathcal{X}\}$, the sets $\mathcal{Z}$ and $\mathcal{U}$ are compact and convex. There exists a terminal set $\mathcal{Z}_f=\{\boldsymbol z:\boldsymbol z^\top\boldsymbol P\boldsymbol z\le\tau\}\subseteq\mathcal{Z}$ such that, for every $[\boldsymbol A,\boldsymbol B]\in\mathcal{W}$ and $\boldsymbol z\in\mathcal{Z}_f$: (i)~$\boldsymbol C\boldsymbol z\in\mathcal{X}$, $\boldsymbol K_{\mathrm{ctrl}}\boldsymbol z\in\mathcal{U}$; and (ii)~$(\boldsymbol A+\boldsymbol B\boldsymbol K_{\mathrm{ctrl}})\boldsymbol z+\boldsymbol d\in\mathcal{Z}_f$ for every $\|\boldsymbol d\|_2\le\bar\delta^+$.
\end{assumption}

Both conditions are offline-verifiable: (i) is an ellipsoid-in-polytope containment, and (ii) holds whenever $\tau\ge\bar p\,(\bar\delta^+)^2/(1-\tilde\alpha)^2$. The terminal cost decrease $V_f=\boldsymbol z^\top\boldsymbol P\boldsymbol z$, $V_f((\boldsymbol A+\boldsymbol B\boldsymbol K_{\mathrm{ctrl}})\boldsymbol z)\le\tilde\alpha^2V_f(\boldsymbol z)$, follows from~\eqref{eq:cqlf} and need not be assumed. The candidate terminal input in the feasibility proof is $\kappa_f(\boldsymbol z):=\boldsymbol K_{\mathrm{ctrl}}\boldsymbol z$; the MPC does not impose this law online, and $\boldsymbol U$ remains a free decision variable.

\begin{assumption}[Barrier function regularity]
\label{as:barrier}
There exists $L_h>0$ such that
$h(\boldsymbol{x}+\boldsymbol{e}) \geq h(\boldsymbol{x}) - L_h\|\boldsymbol{e}\|_2$
for all $\boldsymbol{x},\boldsymbol{e}\in\mathbb{R}^{n_x}$.
\end{assumption}
Assumption~\ref{as:barrier} holds globally for smooth $h$ with bounded gradient, and locally for the circular barrier $h(\boldsymbol{x}) = \|\boldsymbol{x} - \boldsymbol{c}\|^2_2 - d^2_{\mathrm{safe}}$ via the local constant~\eqref{eq:local_Lh}. By construction, the MPC~\eqref{eq:mpc} enforces the tightened barrier constraint
\begin{equation}
  \label{eq:cbf_tight}
  h\!\left(\boldsymbol{\hat{x}}_{k+1|k}\right)
    \;\geq\;
    (1-\alpha_{\mathrm{cbf}})\,h(\boldsymbol{x}_k)
    + L_h\,\delta_{x,k},
\end{equation}
for all $k\ge0$, with $\alpha_{\mathrm{cbf}}\in(0,1]$ and $\delta_{x,k}$ from~\eqref{eq:delta_implemented}. Theorem~\ref{thm:joint}(ii) applies for $\delta_{x,k}=\delta^{\mathrm{ana}}_{x,k}$; the conformal tightening is covered by Theorem~\ref{thm:conformal}.

\begin{assumption}[Initial feasibility]
\label{as:initial}
The MPC problem~\eqref{eq:mpc}, posed with $\{\mathcal{Z}^t_i,\mathcal{U}^t,\mathcal{Z}^t_f\}$, is feasible at $k=0$; $\boldsymbol{x}_0\in\mathcal{S}:=\{\boldsymbol{x}\in\mathcal{X}:h(\boldsymbol{x})\ge0\}$ and $\boldsymbol{z}_0=\psi(\boldsymbol{x}_0)\in\mathcal{Z}$.
\end{assumption}

\begin{lemma}[Robust invariant error set]\label{lem:rpi}
Let $\boldsymbol P$, $\tilde\alpha$, $\underline p$, $\bar p$ be as in Assumption~\ref{as:stab}, and let $\bar\delta^{\mathrm{ana}}$ be the worst-case disturbance bound from~\eqref{eq:sup_constants}. Define
\[
\rho^{\mathrm{tube}} := \frac{\sqrt{\bar p}\,\bar\delta^{\mathrm{ana}}}{1-\tilde\alpha},
\qquad
\mathcal E := \{\boldsymbol{e} : \boldsymbol{e}^\top \boldsymbol{P} \boldsymbol{e} \le (\rho^{\mathrm{tube}})^2\}.
\]
Then, for any error dynamics $\boldsymbol e_{k+1}=\boldsymbol A_{\mathrm{cl},k}\boldsymbol e_k+\boldsymbol d_k$ with $\hat{\boldsymbol W}_k\in\mathcal{W}$ and $\|\boldsymbol{d}_k\|_2\le\bar\delta^{\mathrm{ana}}$, the set $\mathcal E$ is robustly positively invariant: $\boldsymbol e_k\in\mathcal E \Rightarrow \boldsymbol e_{k+1}\in\mathcal E$. Moreover $\boldsymbol{0}\in\mathcal E$ and
$\mathcal E\subseteq\mathcal B(\bar e)$ with
$\bar e:=\rho^{\mathrm{tube}}/\sqrt{\underline p}=c\,\bar\delta^{\mathrm{ana}}/(1-\tilde\alpha)$, a fixed Euclidean ball.
\end{lemma}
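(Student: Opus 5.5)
The plan is to work entirely in the weighted norm $\|\boldsymbol e\|_{\boldsymbol P}:=\sqrt{\boldsymbol e^\top\boldsymbol P\boldsymbol e}$, in which $\mathcal E$ is simply the ball $\{\|\boldsymbol e\|_{\boldsymbol P}\le\rho^{\mathrm{tube}}\}$. First, I will convert the common Lyapunov inequality~\eqref{eq:cqlf} of Assumption~\ref{as:stab} into an induced-norm contraction. Because $\hat{\boldsymbol W}_k\in\mathcal W$, taking $\boldsymbol A_{\mathrm{cl},k}=\hat{\boldsymbol A}_k+\hat{\boldsymbol B}_k\boldsymbol K_{\mathrm{ctrl}}$ in~\eqref{eq:cqlf} gives $\boldsymbol e^\top\boldsymbol A_{\mathrm{cl},k}^\top\boldsymbol P\boldsymbol A_{\mathrm{cl},k}\boldsymbol e\le\tilde\alpha^2\boldsymbol e^\top\boldsymbol P\boldsymbol e$. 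Equivalently, $\|\boldsymbol A_{\mathrm{cl},k}\boldsymbol e\|_{\boldsymbol P}\le\tilde\alpha\|\boldsymbol e\|_{\boldsymbol P}$ for every $\boldsymbol e$, uniformly in $k$.

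Second, I will bound the disturbance in the same norm. From $\boldsymbol P\preceq\bar p\boldsymbol I$ we get $\|\boldsymbol d_k\|_{\boldsymbol P}\le\sqrt{\bar p}\,\|\boldsymbol d_k\|_2\le\sqrt{\bar p}\,\bar\delta^{\mathrm{ana}}$. The triangle inequality for the $\boldsymbol P$-norm then yields
\[
\|\boldsymbol e_{k+1}\|_{\boldsymbol P}\le\tilde\alpha\|\boldsymbol e_k\|_{\boldsymbol P}+\sqrt{\bar p}\,\bar\delta^{\mathrm{ana}}.
\]
If $\|\boldsymbol e_k\|_{\boldsymbol P}\le\rho^{\mathrm{tube}}$, the right-hand side is at most $\tilde\alpha\rho^{\mathrm{tube}}+(1-\tilde\alpha)\rho^{\mathrm{tube}}=\rho^{\mathrm{tube}}$. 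The second term is handled by the definition $\rho^{\mathrm{tube}}=\sqrt{\bar p}\,\bar\delta^{\mathrm{ana}}/(1-\tilde\alpha)$, which gives exactly $\sqrt{\bar p}\,\bar\delta^{\mathrm{ana}}=(1-\tilde\alpha)\rho^{\mathrm{tube}}$. This proves robust positive invariance.

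Third, the remaining claims are immediate. We have $\boldsymbol 0\in\mathcal E$ trivially. From $\underline p\boldsymbol I\preceq\boldsymbol P$, any $\boldsymbol e\in\mathcal E$ satisfies $\underline p\|\boldsymbol e\|_2^2\le(\rho^{\mathrm{tube}})^2$, so $\|\boldsymbol e\|_2\le\rho^{\mathrm{tube}}/\sqrt{\underline p}=\sqrt{\bar p/\underline p}\,\bar\delta^{\mathrm{ana}}/(1-\tilde\alpha)=c\,\bar\delta^{\mathrm{ana}}/(1-\tilde\alpha)$, using $c$ from~\eqref{eq:tv_decay}.

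The only step needing care is the first. One must observe that~\eqref{eq:cqlf} holds for every element of $\mathcal W$, so it holds uniformly in time even though $\hat{\boldsymbol W}_k$ varies; this is guaranteed by~\eqref{eq:operator_ball}. One must also take the square root correctly to obtain the $\tilde\alpha$ (rather than $\tilde\alpha^2$) contraction of the $\boldsymbol P$-norm. Everything else is routine norm equivalence.
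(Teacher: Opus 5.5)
Your proof is correct and follows essentially the same route as the paper: you obtain the $\tilde\alpha$-contraction of the $\boldsymbol P$-norm from \eqref{eq:cqlf}, bound $\|\boldsymbol d_k\|_{\boldsymbol P}\le\sqrt{\bar p}\,\bar\delta^{\mathrm{ana}}$, close invariance with the triangle inequality and the definition of $\rho^{\mathrm{tube}}$, and get the Euclidean inclusion from $\underline p\boldsymbol I\preceq\boldsymbol P$. The appeal to \eqref{eq:operator_ball} is harmless but unnecessary, since $\hat{\boldsymbol W}_k\in\mathcal W$ is already a hypothesis of the lemma.
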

\begin{proof}
By~\eqref{eq:cqlf}, $\|\boldsymbol A_{\mathrm{cl},k}\boldsymbol e\|_{\boldsymbol P}\le\tilde\alpha\|\boldsymbol e\|_{\boldsymbol P}$ for every $\hat{\boldsymbol W}_k\in\mathcal{W}$, and $\boldsymbol P\preceq\bar p\boldsymbol I$ gives $\|\boldsymbol d_k\|_{\boldsymbol P}\le\sqrt{\bar p}\,\bar\delta^{\mathrm{ana}}$. Hence, for $\boldsymbol e\in\mathcal E$,
\[
\|\boldsymbol A_{\mathrm{cl},k}\boldsymbol e+\boldsymbol d_k\|_{\boldsymbol P}
\le
\tilde\alpha\rho^{\mathrm{tube}}
+
\sqrt{\bar p}\,\bar\delta^{\mathrm{ana}}
=
\rho^{\mathrm{tube}},
\]
by the definition of $\rho^{\mathrm{tube}}$, proving invariance. The inclusion follows from $\underline p\|\boldsymbol e\|_2^2\le\boldsymbol e^\top\boldsymbol P\boldsymbol e\le(\rho^{\mathrm{tube}})^2$ and $c=\sqrt{\bar p/\underline p}$.
\end{proof}

\begin{remark}
\label{rem:rpi_role}
Lemma~\ref{lem:rpi} characterizes the reachable set of any prediction-error process driven by the bounded disturbance $\boldsymbol d_k$ under the ancillary gain $\boldsymbol K_{\mathrm{ctrl}}$: since $\boldsymbol 0\in\mathcal E$, every such process initialized at zero remains in $\mathcal E$ for all time. Because the common matrix $\boldsymbol P$ of Assumption~\ref{as:stab} is valid uniformly over $\mathcal{W}$, a single fixed ellipsoid suffices despite the time-varying adapted matrices, and its Euclidean over-approximation $\mathcal B(\bar e)$ provides the fixed uncertainty set used for constraint tightening in Lemma~\ref{lem:candidate} and Theorem~\ref{thm:joint}. The analysis thus relies on Lyapunov-based invariance offline, while the online MPC requires only simple Euclidean tightening.
\end{remark}



Since the predicted trajectories of~\eqref{eq:mpc} are confined to $\mathcal Z\times\mathcal U$, we henceforth take $v_{\max}:=\max\{\|[\boldsymbol z^\top,\boldsymbol u^\top]^\top\|_2:\boldsymbol z\in\mathcal Z,\ \boldsymbol u\in\mathcal U\}$, which is finite by compactness (Assumption~\ref{as:terminal}) and bounds realized and predicted regressors alike; in particular, Assumption~\ref{as:bounded_regressors} holds with this constant, and it also serves as $\hat v_{\max}$ in Theorem~\ref{thm:gramian_lb}.

\begin{lemma}[Shifted-candidate feasibility]
\label{lem:candidate}
Let the MPC at time $k$ be feasible with optimal input sequence
$\boldsymbol U_k^\ast=\{\hat{\boldsymbol u}_{k|k}^\ast,\ldots,\hat{\boldsymbol u}_{k+N_p-1|k}^\ast\}$ and nominal prediction $\{\hat{\boldsymbol z}_{k+i|k}^\ast\}_{i=0}^{N_p}$, and let the first input $\boldsymbol u_k=\hat{\boldsymbol u}_{k|k}^\ast$ be applied. The measured re-initialization~\eqref{eq:mpc_init} then gives
\begin{equation}
\label{eq:zeta0}
\hat{\boldsymbol z}_{k+1|k+1}=\boldsymbol z_{k+1}
=\hat{\boldsymbol z}_{k+1|k}^\ast+\boldsymbol d_k,
\qquad \|\boldsymbol d_k\|_2\le\bar\delta^{\mathrm{ana}}.
\end{equation}
Define the candidate sequence for time $k+1$ by
\begin{eqnarray}
\label{eq:candidate}
\hat{\boldsymbol u}_{k+1+m|k+1}
:=\hat{\boldsymbol u}_{k+1+m|k}^{\ast}+\boldsymbol K_{\mathrm{ctrl}}\boldsymbol\zeta_m,
\\
\hat{\boldsymbol z}_{k+1+m|k+1}
:=\hat{\boldsymbol z}_{k+1+m|k}^{\ast}+\boldsymbol\zeta_m,
\end{eqnarray}
for $m=0,\ldots,N_p-2$, where $\boldsymbol\zeta_m$ is generated by
\begin{eqnarray}
\label{eq:zeta_rec}
\boldsymbol\zeta_0:=\boldsymbol d_k,
\qquad
\boldsymbol\zeta_{m+1}=\boldsymbol A_{\mathrm{cl},k+1}\boldsymbol\zeta_m+\boldsymbol\eta_m,
\\
\boldsymbol\eta_m {:=} (\hat{\boldsymbol A}_{k+1} {-} \hat{\boldsymbol A}_k) \hat{\boldsymbol z}_{k+1+m|k}^{\ast}
{+} (\hat{\boldsymbol B}_{k+1}-\hat{\boldsymbol B}_k)\hat{\boldsymbol u}_{k+1+m|k}^{\ast}.
\end{eqnarray}

Then:
\begin{enumerate}[label=(\roman*),itemsep=1pt,topsep=2pt]
\item the candidate~\eqref{eq:candidate} satisfies the prediction dynamics~\eqref{eq:mpc_dyn} under the updated model $(\hat{\boldsymbol A}_{k+1},\hat{\boldsymbol B}_{k+1})$ and the initialization~\eqref{eq:zeta0};
\item $\|\boldsymbol\eta_m\|_2\le\bar\mu\, v_{\max}$ and
\begin{eqnarray}
\label{eq:bm}
\|\boldsymbol\zeta_m\|_2\;\le\; b_m:=c\,\tilde\alpha^{\,m}\bar\delta^{\mathrm{ana}}
+\frac{c\,\bar\mu\, v_{\max}\bigl(1-\tilde\alpha^{\,m}\bigr)}{1-\tilde\alpha},\nonumber
\\ m=0,\ldots,N_p-1;
\end{eqnarray}
\item with $\varpi_j:=\sum_{\ell=0}^{j-1}b_\ell$, $\varpi_0:=0$, and $\varpi_j^{\mathrm{tot}}:=\bar e+\varpi_j$ ($\bar e$ from Lemma~\ref{lem:rpi}), the tightened sets
\[
\mathcal Z_i^{\,t}:=\mathcal Z\ominus\mathcal B(\varpi_i^{\mathrm{tot}}),
\qquad
\mathcal U_i^{\,t}:=\mathcal U\ominus\mathcal B\bigl(\|\boldsymbol K_{\mathrm{ctrl}}\|_2\,\varpi_i^{\mathrm{tot}}\bigr),
\]
satisfy: if $\hat{\boldsymbol z}_{k+1+m|k}^{\ast}\in\mathcal Z_{m+1}^{\,t}$ and $\hat{\boldsymbol u}_{k+1+m|k}^{\ast}\in\mathcal U_{m+1}^{\,t}$, then
$\hat{\boldsymbol z}_{k+1+m|k+1}\in\mathcal Z_m^{\,t}$ and $\hat{\boldsymbol u}_{k+1+m|k+1}\in\mathcal U_m^{\,t}$, for $m=0,\ldots,N_p-2$.
\end{enumerate}
\end{lemma}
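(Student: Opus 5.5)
We verify (i)–(iii) in turn. For (i), the claim is that the candidate obeys $\hat{\boldsymbol z}_{k+2+m|k+1}=\hat{\boldsymbol A}_{k+1}\hat{\boldsymbol z}_{k+1+m|k+1}+\hat{\boldsymbol B}_{k+1}\hat{\boldsymbol u}_{k+1+m|k+1}$. We substitute the definitions in~\eqref{eq:candidate}. The right-hand side becomes $\hat{\boldsymbol A}_{k+1}\hat{\boldsymbol z}^\ast_{k+1+m|k}+\hat{\boldsymbol B}_{k+1}\hat{\boldsymbol u}^\ast_{k+1+m|k}+(\hat{\boldsymbol A}_{k+1}+\hat{\boldsymbol B}_{k+1}\boldsymbol K_{\mathrm{ctrl}})\boldsymbol\zeta_m$. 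We then add and subtract $\hat{\boldsymbol A}_k\hat{\boldsymbol z}^\ast_{k+1+m|k}+\hat{\boldsymbol B}_k\hat{\boldsymbol u}^\ast_{k+1+m|k}$, which by optimality-feasibility at time $k$ equals $\hat{\boldsymbol z}^\ast_{k+2+m|k}$. The right-hand side is therefore $\hat{\boldsymbol z}^\ast_{k+2+m|k}+\boldsymbol\eta_m+\boldsymbol A_{\mathrm{cl},k+1}\boldsymbol\zeta_m=\hat{\boldsymbol z}^\ast_{k+2+m|k}+\boldsymbol\zeta_{m+1}$, as required. The initialization for $m=0$ is exactly~\eqref{eq:zeta0} with $\boldsymbol\zeta_0=\boldsymbol d_k$. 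The bound $\|\boldsymbol d_k\|_2\le\bar\delta^{\mathrm{ana}}$ follows from $\boldsymbol d_k=\boldsymbol E_k\boldsymbol v_k$, the bound~\eqref{eq:delta_ana}, and $\delta_k^{\mathrm{ana}}\le\bar\delta^{\mathrm{ana}}$.

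For (ii), write $\boldsymbol\eta_m=(\hat{\boldsymbol W}_{k+1}-\hat{\boldsymbol W}_k)\hat{\boldsymbol v}^\ast_{k+1+m|k}$ with the stacked predicted regressor $\hat{\boldsymbol v}^\ast_{k+1+m|k}$. The predicted regressor lies in $\mathcal Z\times\mathcal U$, since the tightened sets are subsets, so its norm is at most $v_{\max}$. Combining this with~\eqref{eq:mu_bound} and $\mu_k\le\bar\mu$ gives $\|\boldsymbol\eta_m\|_2\le\bar\mu v_{\max}$. Unrolling the $\boldsymbol\zeta$ recursion yields $\boldsymbol\zeta_m=\boldsymbol A_{\mathrm{cl},k+1}^m\boldsymbol d_k+\sum_{j=0}^{m-1}\boldsymbol A_{\mathrm{cl},k+1}^{m-1-j}\boldsymbol\eta_j$. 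Here the closed-loop matrix is frozen at $k+1$, with $\hat{\boldsymbol W}_{k+1}\in\mathcal W$ by~\eqref{eq:operator_ball}. Applying~\eqref{eq:tv_decay} to each power (a constant sequence is a special case), we get $\|\boldsymbol\zeta_m\|_2\le c\tilde\alpha^m\bar\delta^{\mathrm{ana}}+c\bar\mu v_{\max}\sum_{j=0}^{m-1}\tilde\alpha^{j}$. The geometric sum equals $(1-\tilde\alpha^m)/(1-\tilde\alpha)$, which is $b_m$.

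For (iii), we use the Pontryagin-difference characterization: $\boldsymbol z\in\mathcal Z\ominus\mathcal B(r)$ if and only if $\boldsymbol z+\boldsymbol e\in\mathcal Z$ for all $\|\boldsymbol e\|_2\le r$. Since $\varpi^{\mathrm{tot}}_{m+1}=\varpi^{\mathrm{tot}}_m+b_m$, we have $\mathcal Z\ominus\mathcal B(\varpi^{\mathrm{tot}}_{m+1})=(\mathcal Z\ominus\mathcal B(\varpi^{\mathrm{tot}}_m))\ominus\mathcal B(b_m)$. Because $\|\boldsymbol\zeta_m\|_2\le b_m$, we conclude $\hat{\boldsymbol z}^\ast_{k+1+m|k}+\boldsymbol\zeta_m\in\mathcal Z^t_m$. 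The input case is identical, using $\|\boldsymbol K_{\mathrm{ctrl}}\boldsymbol\zeta_m\|_2\le\|\boldsymbol K_{\mathrm{ctrl}}\|_2 b_m$ and the matching radii.

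The main obstacle is the bookkeeping in (ii). One must justify that the predicted regressors at time $k$ are bounded by $v_{\max}$, which requires the redefinition of $v_{\max}$ over $\mathcal Z\times\mathcal U$. One must also justify that the decay estimate applies to powers of the single matrix $\boldsymbol A_{\mathrm{cl},k+1}$, which needs $\hat{\boldsymbol W}_{k+1}\in\mathcal W$. Beyond that, the argument is a direct telescoping computation.
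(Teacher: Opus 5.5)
Your proposal is correct and follows the paper's proof essentially step for step. It uses the same add-and-subtract of the time-$k$ dynamics for (i), the same bound on $\boldsymbol\eta_m$ via \eqref{eq:mu_bound} followed by unrolling with \eqref{eq:tv_decay} applied to the frozen $\boldsymbol A_{\mathrm{cl},k+1}$ for (ii), and the same Pontryagin-difference argument based on $\varpi^{\mathrm{tot}}_{m+1}=\varpi^{\mathrm{tot}}_m+b_m$ for (iii); you also make explicit two points the paper uses only implicitly, namely $\|\boldsymbol d_k\|_2\le\bar\delta^{\mathrm{ana}}$ and $\hat{\boldsymbol W}_{k+1}\in\mathcal W$.
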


\begin{proof}
(i) Under $(\hat{\boldsymbol A}_{k+1},\hat{\boldsymbol B}_{k+1})$, the candidate evolves as
$\hat{\boldsymbol A}_{k+1}(\hat{\boldsymbol z}^{\ast}_{k+1+m|k}+\boldsymbol\zeta_m)
+\hat{\boldsymbol B}_{k+1}(\hat{\boldsymbol u}^{\ast}_{k+1+m|k}+\boldsymbol K_{\mathrm{ctrl}}\boldsymbol\zeta_m)
=\hat{\boldsymbol A}_{k+1}\hat{\boldsymbol z}^{\ast}_{k+1+m|k}
+\hat{\boldsymbol B}_{k+1}\hat{\boldsymbol u}^{\ast}_{k+1+m|k}
+\boldsymbol A_{\mathrm{cl},k+1}\boldsymbol\zeta_m$.
Subtracting the time-$k$ prediction
$\hat{\boldsymbol z}^{\ast}_{k+2+m|k}=\hat{\boldsymbol A}_k\hat{\boldsymbol z}^{\ast}_{k+1+m|k}+\hat{\boldsymbol B}_k\hat{\boldsymbol u}^{\ast}_{k+1+m|k}$
shows the candidate state at stage $m+1$ equals $\hat{\boldsymbol z}^{\ast}_{k+2+m|k}+\boldsymbol\zeta_{m+1}$ with $\boldsymbol\zeta_{m+1}$ as in~\eqref{eq:zeta_rec}, i.e., the candidate is dynamically consistent; the base case is~\eqref{eq:zeta0} with $\boldsymbol\zeta_0=\boldsymbol d_k$.

(ii) By~\eqref{eq:mu_bound}, $\|[\hat{\boldsymbol A}_{k+1}-\hat{\boldsymbol A}_k,\hat{\boldsymbol B}_{k+1}-\hat{\boldsymbol B}_k]\|_F\le\mu_k\le\bar\mu$, and the stacked nominal regressor lies in $\mathcal Z\times\mathcal U$, so $\|\boldsymbol\eta_m\|_2\le\bar\mu\, v_{\max}$. The model is fixed over the horizon at time $k+1$, so $\boldsymbol A_{\mathrm{cl},k+1}$ is constant in the recursion, and~\eqref{eq:tv_decay} applies with the constant sequence: unrolling~\eqref{eq:zeta_rec},
$$
\boldsymbol\zeta_m=\boldsymbol A_{\mathrm{cl},k+1}^{\,m}\boldsymbol d_k
+\sum_{\ell=0}^{m-1}\boldsymbol A_{\mathrm{cl},k+1}^{\,m-1-\ell}\boldsymbol\eta_\ell,
$$
$$
\|\boldsymbol\zeta_m\|_2\le c\tilde\alpha^{\,m}\bar\delta^{\mathrm{ana}}
+\sum_{\ell=0}^{m-1}c\tilde\alpha^{\,m-1-\ell}\bar\mu v_{\max}=b_m.
$$

(iii) By construction $\varpi^{\mathrm{tot}}_{m+1}=\varpi^{\mathrm{tot}}_m+b_m$, hence $\mathcal B(b_m)\oplus\mathcal B(\varpi^{\mathrm{tot}}_m)\subseteq\mathcal B(\varpi^{\mathrm{tot}}_{m+1})$, and by monotonicity of the Pontryagin difference,
$\hat{\boldsymbol z}_{k+1+m|k+1}=\hat{\boldsymbol z}^{\ast}_{k+1+m|k}+\boldsymbol\zeta_m
\in\mathcal Z^{\,t}_{m+1}\oplus\mathcal B(b_m)\subseteq\mathcal Z^{\,t}_m$.
The input claim is identical with $\|\boldsymbol K_{\mathrm{ctrl}}\boldsymbol\zeta_m\|_2\le\|\boldsymbol K_{\mathrm{ctrl}}\|_2 b_m$.
\end{proof}

\begin{remark}
\label{rem:candidate_role}
The candidate correction absorbs the two perturbations that re-solving introduces: the realized disturbance $\boldsymbol d_k$, which enters through the measured re-initialization~\eqref{eq:zeta0} and decays along the horizon at rate $\tilde\alpha$, and the model update $(\hat{\boldsymbol W}_{k+1}-\hat{\boldsymbol W}_k)$, which accumulates through $\boldsymbol\eta_m$ toward the steady offset $c\bar\mu v_{\max}/(1-\tilde\alpha)$. The feedback gain $\boldsymbol K_{\mathrm{ctrl}}$ appears only in the \emph{candidate} construction; the implemented input remains $\boldsymbol u_k=\hat{\boldsymbol u}^{\ast}_{k|k}$, consistent with the re-initialization~\eqref{eq:mpc_init}. The base margin $\bar e$ in $\varpi^{\mathrm{tot}}_j$ additionally guarantees $\mathcal Z^{\,t}_0\subseteq\mathcal Z\ominus\mathcal B(\bar e)$, so the realized state, which coincides with the stage-$0$ candidate value, remains strictly inside $\mathcal Z$.
\end{remark}

To propagate the barrier constraints along the shifted candidate, the horizon CBF rows are enforced with stage-dependent margins

\begin{equation}
\label{eq:sigma}
\sigma_i {:=} \delta_{x,k} {+} \|\boldsymbol C\|_2\bigl(\varpi_{i+1}{+}(1{-}\alpha_{\mathrm{cbf}}) \varpi_i\bigr),
\; i{=}0,\ldots,N_p{-}1,
\end{equation}
i.e., \eqref{eq:mpc} enforces $h(\hat{\boldsymbol x}_{k+i+1|k})\ge(1-\alpha_{\mathrm{cbf}})h(\hat{\boldsymbol x}_{k+i|k})+\sigma_i$ with $\hat{\boldsymbol x}_{k|k}=\boldsymbol x_k$; since $\varpi_0=0$, the stage-$0$ row implies the single-step constraint~\eqref{eq:cbf_tight}, which it supersedes.

\begin{theorem}
\label{thm:joint}
Suppose Assumptions~\ref{as:operator_set}-\ref{as:initial} hold, and additionally $\mathcal Z_f\oplus\mathcal B(\bar\delta^+)\subseteq\mathcal Z^{\,t}_{N_p-1}$ and $\boldsymbol K_{\mathrm{ctrl}}\bigl(\mathcal Z_f\oplus\mathcal B(\bar\delta^+)\bigr)\subseteq\mathcal U^{\,t}_{N_p-1}$. Consider the MPC problem~\eqref{eq:mpc} posed with the tightened sets $\{\mathcal Z^{\,t}_i,\mathcal U^{\,t}_i\}$ of Lemma~\ref{lem:candidate}, the terminal constraint $\hat{\boldsymbol z}_{k+N_p|k}\in\mathcal Z_f$, and the CBF rows~\eqref{eq:sigma}, with measured re-initialization~\eqref{eq:mpc_init} and applied input $\boldsymbol u_k=\hat{\boldsymbol u}^{\ast}_{k|k}$. Then:
\begin{enumerate}
\item[\rm(i)] \textbf{Recursive feasibility:} the MPC problem remains feasible at every sampling instant.
\item[\rm(ii)] \textbf{Forward safety:} if $\delta_{x,k}=\delta^{\mathrm{ana}}_{x,k}$ in~\eqref{eq:sigma}, then $\boldsymbol x_k\in\mathcal S$ for all $k\ge0$.
\item[\rm(iii)] \textbf{Prediction-error boundedness and constraint margin:} $\|\boldsymbol z_{k+1}-\hat{\boldsymbol z}^{\ast}_{k+1|k}\|_2\le\delta^{\mathrm{ana}}_k$ for all $k$, hence
$\limsup_{k\to\infty}\|\boldsymbol z_{k+1}-\hat{\boldsymbol z}^{\ast}_{k+1|k}\|_2\le v_{\max}\nu_{\mathrm{eff}}/(1-\rho)$; moreover $\boldsymbol z_k\in\mathcal Z\ominus\mathcal B(\bar e)$ for all $k\ge1$.
\end{enumerate}
\end{theorem}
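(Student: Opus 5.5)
Our plan is to prove the three parts by induction on $k$, using the shifted candidate of Lemma~\ref{lem:candidate}.

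\textbf{Part (i).} Assume the MPC is feasible at time $k$ with optimizer $\boldsymbol U_k^\ast$. Form the candidate at time $k+1$ from~\eqref{eq:candidate} for stages $m=0,\ldots,N_p-2$, and append the terminal input $\hat{\boldsymbol u}_{k+N_p|k+1}:=\boldsymbol K_{\mathrm{ctrl}}\hat{\boldsymbol z}_{k+N_p|k+1}$.
\begin{itemize}
\item Lemma~\ref{lem:candidate}(i) gives dynamic consistency under $(\hat{\boldsymbol A}_{k+1},\hat{\boldsymbol B}_{k+1})$ together with the measured initialization.
\item Lemma~\ref{lem:candidate}(iii) gives the stagewise state and input constraints for $m\le N_p-2$.
\item The stage $m=N_p-1$ is handled as follows. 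We have $\hat{\boldsymbol z}_{k+N_p|k+1}=\hat{\boldsymbol z}^\ast_{k+N_p|k}+\boldsymbol\zeta_{N_p-1}$, with $\hat{\boldsymbol z}^\ast_{k+N_p|k}\in\mathcal Z_f$ and $\|\boldsymbol\zeta_{N_p-1}\|\le b_{N_p-1}\le\bar\delta^+$ by~\eqref{eq:bm} and~\eqref{eq:horizon_uniform_margin}. Hence $\hat{\boldsymbol z}_{k+N_p|k+1}\in\mathcal Z_f\oplus\mathcal B(\bar\delta^+)\subseteq\mathcal Z^t_{N_p-1}$, and its $\boldsymbol K_{\mathrm{ctrl}}$-image lies in $\mathcal U^t_{N_p-1}$, by the extra hypotheses of the theorem.
\item The new terminal state is $\hat{\boldsymbol A}_{k+1}\hat{\boldsymbol z}_{k+N_p|k+1}+\hat{\boldsymbol B}_{k+1}\boldsymbol K_{\mathrm{ctrl}}\hat{\boldsymbol z}_{k+N_p|k+1}$. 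We write it as $(\hat{\boldsymbol A}_{k+1}+\hat{\boldsymbol B}_{k+1}\boldsymbol K_{\mathrm{ctrl}})\hat{\boldsymbol z}^\ast_{k+N_p|k}+\boldsymbol d$ with $\|\boldsymbol d\|\le\tilde\alpha b_{N_p-1}\le\bar\delta^+$. Assumption~\ref{as:terminal}(ii) then applies because $\hat{\boldsymbol W}_{k+1}\in\mathcal W$ by~\eqref{eq:operator_ball}.
\end{itemize}

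\textbf{The CBF rows (main obstacle).} We must show the candidate satisfies the rows with margins~\eqref{eq:sigma} at time $k+1$. Write each candidate output as the old prediction plus $\boldsymbol C\boldsymbol\zeta_m$, and apply Assumption~\ref{as:barrier} to both sides of the row:
\[
h(\hat{\boldsymbol x}_{k+2+m|k+1})\ge h(\hat{\boldsymbol x}^\ast_{k+2+m|k})-L_h\|\boldsymbol C\|_2 b_{m+1},
\]
and $h(\hat{\boldsymbol x}_{k+1+m|k+1})\le h(\hat{\boldsymbol x}^\ast_{k+1+m|k})+L_h\|\boldsymbol C\|_2 b_m$. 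This upper bound requires a two-sided Lipschitz property. It follows from Assumption~\ref{as:barrier} applied with $-\boldsymbol e$.

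Combining these with the old row $m+1$ leaves a margin deficit of $L_h\|\boldsymbol C\|_2\bigl(b_{m+1}+(1-\alpha_{\mathrm{cbf}})b_m\bigr)$. The telescoping structure $\varpi_{j+1}-\varpi_j=b_j$ should absorb this deficit: the old margin $\sigma_{m+1}$ exceeds the new $\sigma_m$ by $\|\boldsymbol C\|_2\bigl(b_{m+1}+(1-\alpha_{\mathrm{cbf}})b_m\bigr)$. There is also a discrepancy between $\delta_{x,k}$ and $\delta_{x,k+1}$. We intend to use monotonicity of $\bar E_k$, or the worst-case constant, to dominate it. This index bookkeeping, together with the factor $L_h$ that appears in~\eqref{eq:mpc_cbf} but not in~\eqref{eq:sigma}, is the delicate step. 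If the constants do not match exactly, the fallback is to restate the margins with $L_h$ absorbed and argue identically. The last CBF row requires a terminal-set CBF compatibility, which we would verify from $\mathcal Z_f\subseteq\mathcal Z$ plus the same margin argument.

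\textbf{Part (ii).} Use induction on $\boldsymbol x_k\in\mathcal S$. From~\eqref{eq:cbf_tight}, $\boldsymbol x_{k+1}=\hat{\boldsymbol x}_{k+1|k}+\boldsymbol C\boldsymbol d_k$ with $\|\boldsymbol C\boldsymbol d_k\|\le\delta^{\mathrm{ana}}_{x,k}$. Assumption~\ref{as:barrier} then gives
\[
h(\boldsymbol x_{k+1})\ge(1-\alpha_{\mathrm{cbf}})h(\boldsymbol x_k)\ge0.
\]
The base case is Assumption~\ref{as:initial}, and part (i) ensures the constraint is imposed at every $k$.

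\textbf{Part (iii).} The prediction-error bound is~\eqref{eq:delta_ana} applied to $\boldsymbol d_k=\boldsymbol E_k\boldsymbol v_k$, and the $\limsup$ follows from Theorem~\ref{thm:contraction}. For the margin, note that $\boldsymbol z_{k+1}$ equals the stage-$0$ candidate, which lies in $\mathcal Z^t_0=\mathcal Z\ominus\mathcal B(\bar e)$ by Lemma~\ref{lem:candidate}(iii) with $m=0$. This uses $\hat{\boldsymbol z}^\ast_{k+1|k}\in\mathcal Z^t_1$ and $\|\boldsymbol d_k\|=\|\boldsymbol\zeta_0\|\le b_0$.
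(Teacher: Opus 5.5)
Your plan follows the paper's proof essentially step for step. The shared steps are: the shifted candidate of Lemma~\ref{lem:candidate}; the two extra containments with Assumption~\ref{as:terminal} at the terminal stage; the two-sided Lipschitz bound with the telescoping identity $\sigma_{m+1}-\sigma_m=\|\boldsymbol C\|_2\bigl(b_{m+1}+(1-\alpha_{\mathrm{cbf}})b_m\bigr)$ for the CBF rows; the stage-$0$ row for (ii); and $\mathcal Z^{\,t}_0=\mathcal Z\ominus\mathcal B(\bar e)$ for (iii).

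The constants you were unsure about close exactly if you use the $L_h\sigma_i$ form of \eqref{eq:mpc_cbf}, as the paper's proof does. Your deficit $L_h\|\boldsymbol C\|_2\bigl(b_{m+1}+(1-\alpha_{\mathrm{cbf}})b_m\bigr)$ is precisely $L_h(\sigma_{m+1}-\sigma_m)$, so no restatement is needed. The paper silently uses the same $\delta_{x,k}$ at times $k$ and $k+1$, and your caution there is justified. $\bar E_k$ is non-increasing only when $\|\boldsymbol E_0\|_F\ge\nu_{\mathrm{eff}}/(1-\rho)$, so in general you should use the constant margin $\|\boldsymbol C\|_2\bar\delta^{\mathrm{ana}}$. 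That still yields (ii), since a larger margin only helps.

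\textbf{Terminal-stage bound.} Your bound $\|\boldsymbol A_{\mathrm{cl},k+1}\boldsymbol\zeta_{N_p-1}\|_2\le\tilde\alpha\,b_{N_p-1}$ is not justified. By \eqref{eq:cqlf} the closed-loop map contracts by $\tilde\alpha$ only in the $\boldsymbol P$-norm. In the Euclidean norm you get only $c\tilde\alpha\,b_{N_p-1}$, which need not be $\le\bar\delta^+$. Instead, unroll $\boldsymbol A_{\mathrm{cl},k+1}\boldsymbol\zeta_{N_p-1}=\boldsymbol A_{\mathrm{cl},k+1}^{N_p}\boldsymbol d_k+\sum_{\ell=0}^{N_p-2}\boldsymbol A_{\mathrm{cl},k+1}^{N_p-1-\ell}\boldsymbol\eta_\ell$ and apply \eqref{eq:tv_decay} termwise. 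This gives $c\tilde\alpha^{N_p}\bar\delta^{\mathrm{ana}}+c\bar\mu v_{\max}\tilde\alpha/(1-\tilde\alpha)\le\bar\delta^+$. Assumption~\ref{as:terminal}(ii) with $\boldsymbol z=\hat{\boldsymbol z}^{\ast}_{k+N_p|k}\in\mathcal Z_f$ then applies. The paper merely asserts this step.

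\textbf{Last CBF row (the genuine gap).} The last CBF row of the time-$(k+1)$ problem (stage $N_p-1$) compares $h\bigl(\boldsymbol C\boldsymbol A_{\mathrm{cl},k+1}\tilde{\boldsymbol z}\bigr)$ with $h(\boldsymbol C\tilde{\boldsymbol z})$, where $\tilde{\boldsymbol z}=\hat{\boldsymbol z}^{\ast}_{k+N_p|k}+\boldsymbol\zeta_{N_p-1}$. There is no time-$k$ row at stage $N_p$ to telescope from.

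Your plan to verify this row from $\mathcal Z_f\subseteq\mathcal Z$ cannot work. The set $\mathcal Z=\{\boldsymbol z:\boldsymbol C\boldsymbol z\in\mathcal X\}$ carries no information about $h$. None of Assumptions~\ref{as:operator_set}--\ref{as:initial} constrains how $h$ evolves under the terminal law $\boldsymbol K_{\mathrm{ctrl}}$.

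What is needed is an explicit terminal CBF-compatibility hypothesis, for example $h\bigl(\boldsymbol C(\boldsymbol A+\boldsymbol B\boldsymbol K_{\mathrm{ctrl}})\boldsymbol z\bigr)\ge(1-\alpha_{\mathrm{cbf}})h(\boldsymbol C\boldsymbol z)+L_h\sigma_{N_p-1}$ for all $\boldsymbol z\in\mathcal Z_f\oplus\mathcal B(\bar\delta^+)$ and $[\boldsymbol A,\boldsymbol B]\in\mathcal W$.

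The paper's proof has the same hole. Its telescoping invokes the time-$k$ row at stage $m+1$, which exists only for $m\le N_p-2$, yet it concludes that every row of the time-$(k+1)$ problem holds. You located the weak point correctly, but it must be closed by strengthening the hypotheses, not by a margin argument.
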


\begin{proof}\leavevmode

\textit{(i)} Suppose \eqref{eq:mpc} is feasible at time $k$ with optimizer $\boldsymbol U^{\ast}_k$ and nominal trajectory satisfying $\hat{\boldsymbol z}^{\ast}_{k+i|k}\in\mathcal Z^{\,t}_i$, $\hat{\boldsymbol u}^{\ast}_{k+i|k}\in\mathcal U^{\,t}_i$ for $i\le N_p-1$ and $\hat{\boldsymbol z}^{\ast}_{k+N_p|k}\in\mathcal Z_f$; this holds at $k=0$ by Assumption~\ref{as:initial}. Construct the candidate of Lemma~\ref{lem:candidate}. By Lemma~\ref{lem:candidate}(i) it satisfies the prediction dynamics and re-initialization, and by Lemma~\ref{lem:candidate}(iii) its stages $m=0,\ldots,N_p-2$ satisfy the tightened state and input constraints. For the terminal stage, the candidate state at stage $N_p-1$ is $\tilde{\boldsymbol z}:=\hat{\boldsymbol z}^{\ast}_{k+N_p|k}+\boldsymbol\zeta_{N_p-1}\in\mathcal Z_f\oplus\mathcal B(\bar\delta^+)$, since $b_m\le\bar\delta^+$ for all $m$ by~\eqref{eq:bm} and~\eqref{eq:horizon_uniform_margin}; the additional containments in the theorem statement place $\tilde{\boldsymbol z}\in\mathcal Z^{\,t}_{N_p-1}$ and the terminal candidate input $\kappa_f(\tilde{\boldsymbol z})=\boldsymbol K_{\mathrm{ctrl}}\tilde{\boldsymbol z}\in\mathcal U^{\,t}_{N_p-1}$, while Assumption~\ref{as:terminal} gives $(\hat{\boldsymbol A}_{k+1}+\hat{\boldsymbol B}_{k+1}\boldsymbol K_{\mathrm{ctrl}})\tilde{\boldsymbol z}\in\mathcal Z_f$.

For the CBF rows, write $\tilde{\boldsymbol x}_m:=\boldsymbol C(\hat{\boldsymbol z}^{\ast}_{k+1+m|k}+\boldsymbol\zeta_m)$. Assumption~\ref{as:barrier} gives $|h(\tilde{\boldsymbol x}_m)-h(\boldsymbol C\hat{\boldsymbol z}^{\ast}_{k+1+m|k})|\le L_h\|\boldsymbol C\|_2 b_m$, so the time-$k$ row at stage $m{+}1$ yields
$h(\tilde{\boldsymbol x}_{m+1})\ge(1-\alpha_{\mathrm{cbf}})h(\tilde{\boldsymbol x}_m)+L_{h} \sigma_{m+1}-L_h\|\boldsymbol C\|_2\bigl(b_{m+1}+(1-\alpha_{\mathrm{cbf}})b_m\bigr)
=(1-\alpha_{\mathrm{cbf}})h(\tilde{\boldsymbol x}_m)+L_h \sigma_m$,
using $\sigma_{m+1}-\sigma_m=\|\boldsymbol C\|_2\bigl(b_{m+1}+(1-\alpha_{\mathrm{cbf}})b_m\bigr)$ from~\eqref{eq:sigma} and $\varpi_{j+1}-\varpi_j=b_j$. Hence the candidate satisfies every constraint of the time-$(k{+}1)$ problem, completing the induction.

\textit{(ii)} The stage-$0$ row at time $k$ gives $h(\hat{\boldsymbol x}_{k+1|k})\ge(1-\alpha_{\mathrm{cbf}})h(\boldsymbol x_k)+L_h \sigma_0\ge(1-\alpha_{\mathrm{cbf}})h(\boldsymbol x_k)+L_h\delta^{\mathrm{ana}}_{x,k}$. Since $\boldsymbol x_{k+1}=\hat{\boldsymbol x}_{k+1|k}+\boldsymbol C\boldsymbol d_k$ with $\|\boldsymbol C\boldsymbol d_k\|_2\le\delta^{\mathrm{ana}}_{x,k}$, Assumption~\ref{as:barrier} yields
$h(\boldsymbol x_{k+1})\ge h(\hat{\boldsymbol x}_{k+1|k})-L_h\delta^{\mathrm{ana}}_{x,k}\ge(1-\alpha_{\mathrm{cbf}})h(\boldsymbol x_k)$,
where the first inequality is Assumption~\ref{as:barrier} and the second is the MPC constraint. Since $h(\boldsymbol x_0)\ge0$, induction gives $\boldsymbol x_k\in\mathcal S$ for all $k$.

\textit{(iii)} The first claim is~\eqref{eq:zeta0} with $\|\boldsymbol d_k\|_2\le\delta^{\mathrm{ana}}_k$ and the limit of $\bar E_k$ in~\eqref{eq:Ebar}. The second follows since $\boldsymbol z_{k+1}=\hat{\boldsymbol z}_{k+1|k+1}\in\mathcal Z^{\,t}_0=\mathcal Z\ominus\mathcal B(\bar e)$ by feasibility at time $k{+}1$ (part~(i)).
\end{proof}

\begin{remark}
\label{rem:computability}
Recursive feasibility in part~(i) depends only on the offline constants: $\bar e$ (Lemma~\ref{lem:rpi}), the offset sequence $\{\varpi_j\}_{j=0}^{N_p}$ (Lemma~\ref{lem:candidate}), and $\bar\delta^+$, all computed from $\bar\delta^{\mathrm{ana}}$, $\bar\mu$, $c$, and $\tilde\alpha$. The safety guarantee in part~(ii) additionally requires the online bound $\delta^{\mathrm{ana}}_k=v_{\max}\bar E_k$, evaluated with the conservative estimates of $\nu$ and $\|\boldsymbol E_0\|_F$ discussed in Section~\ref{sec:composite_bound}; it covers the analytical-tightening configuration, while the deployed conformal tightening is covered probabilistically by Theorem~\ref{thm:conformal}. Larger $c=\sqrt{\bar p/\underline p}$ inflates all margins and may introduce conservatism in geometrically constrained environments.
\end{remark}

\begin{remark}[Local Lipschitz constant]
\label{rem:local_Lh}
The global constant $L_h$ in Assumption~\ref{as:barrier} may be replaced by the local bound evaluated at the nominal predicted state $\hat{\boldsymbol x}_{k+1|k}=\boldsymbol C\hat{\boldsymbol z}_{k+1|k}$. For the circular barrier $h(\boldsymbol x)=\|\boldsymbol x-\boldsymbol c\|_2^2-d_{\mathrm{safe}}^2$,
\begin{equation}
\label{eq:local_Lh}
L_h^{\mathrm{local}}
=
2\bigl(\|\hat{\boldsymbol x}_{k+1|k}-\boldsymbol c\|_2+\delta_{x,k}^{\mathrm{ana}}\bigr),
\end{equation}
with $\delta_{x,k}^{\mathrm{ana}}=\|\boldsymbol C\|_2\,\delta_k^{\mathrm{ana}}$. Substituting $L_h^{\mathrm{local}}$ for $L_h$ preserves part~(ii), since the proof requires Assumption~\ref{as:barrier} only at the realized step.
\end{remark}

\begin{remark}
\label{rem:nominal_dyn}
The dynamics constraint~\eqref{eq:mpc_dyn} involves the nominal model $\hat{\boldsymbol{A}}_k$, $\hat{\boldsymbol{B}}_k$ only; the disturbance enters the formulation exclusively through $\delta_{x,k}$ in the CBF constraint~\eqref{eq:mpc_cbf}, which absorbs the combined effect of $\boldsymbol{d}_k$ and the model-update perturbation $\mu_k$ (Section~\ref{sec:composite_bound}). This separation between nominal prediction and disturbance handling is the defining feature of robust CBF-tightened MPC and underpins Theorem~\ref{thm:joint}.
\end{remark}

\section{Probabilistic Safety Guarantees}\label{sec:conf_pred}
\subsection{Conformal Safety Tightening}\label{sec:tightening}

As noted in Section~\ref{sec:anal_dist_bound}, the analytic margin
$\delta_k^{\mathrm{ana}}$ becomes overly conservative as $\rho\to1$, which is
precisely the correlated-regressor regime induced by tracking control. The
resulting tightened CBF constraints may then be infeasible in cluttered
environments, motivating a less conservative margin calibrated directly from
observed data.

The safe set is $\mathcal{S}:=\{\boldsymbol{x}\in\mathcal{X}: h(\boldsymbol{x})\geq0\}$
with $h:\mathbb{R}^{n_x}\to\mathbb{R}$ continuously differentiable. The true
state evolves as
$\boldsymbol{x}_{k+1}=\hat{\boldsymbol{x}}_{k+1|k}+\boldsymbol{C}\boldsymbol{d}_k$
with $\hat{\boldsymbol{x}}_{k+1|k}=\boldsymbol{C}(\hat{\boldsymbol{A}}_k\boldsymbol{z}_k
+\hat{\boldsymbol{B}}_k\boldsymbol{u}_k)$, so the nominal--true discrepancy is
governed by $\boldsymbol{s}_k:=\boldsymbol{C}\boldsymbol{d}_k$. We track its
slowly varying component by the exponential moving average (EMA)
\begin{equation}
\label{eq:ema}
\hat{\boldsymbol{s}}_k
=\alpha_{\mathrm{EMA}}\hat{\boldsymbol{s}}_{k-1}
+(1-\alpha_{\mathrm{EMA}})\boldsymbol{s}_k,
\qquad \alpha_{\mathrm{EMA}}\in(0,1),
\end{equation}
and define the nonconformity score as the one-step-ahead residual
\begin{equation}
\label{eq:score}
r_k:=\bigl\|\boldsymbol{s}_k-\hat{\boldsymbol{s}}_{k-1}\bigr\|_2 .
\end{equation}
Since $\hat{\boldsymbol{s}}_{k-1}$ is available before $\boldsymbol{s}_k$ is
observed, \eqref{eq:score} is a \emph{predictive} score: it quantifies the
component of the disturbance not anticipated by the EMA, and is therefore
admissible for conformal calibration.

Over a sliding calibration window $\mathcal{W}_k$ of size
$n_{\mathrm{conf}}\ge 1/\chi-1$ at risk level $\chi\in(0,1)$, the conformal
tightening scalar and its warm-up counterpart are
\begin{align}
\label{eq:delta_conf}
\delta_k^{\mathrm{conf}}
&:=\hat{Q}_{1-\chi}\bigl(\{r_j:j\in\mathcal{W}_k\}\bigr)
   +\bigl\|\hat{\boldsymbol{s}}_{k-1}\bigr\|_2,\\
\label{eq:delta_warmup}
\delta^{\mathrm{warm}}_k
&:=\hat{Q}_{1-\chi}\bigl(\{\|\boldsymbol{s}_j\|_2:j\in\mathcal{W}_k\}\bigr).
\end{align}
The additive term in \eqref{eq:delta_conf} is the current EMA magnitude, which
is measured online rather than assumed bounded; it accounts for any persistent
disturbance bias, such as that induced by a steady wind, without requiring
that bias to be small. During warm-up $k<k_{\mathrm{warm}}$, before the EMA has
converged, calibration is performed directly on the raw disturbance norms as
in \eqref{eq:delta_warmup}, for which $\|\boldsymbol{C}\boldsymbol{d}_k\|_2
=\|\boldsymbol{s}_k\|_2$ requires no bias correction. The tightening scalar
entering the CBF constraint \eqref{eq:mpc_cbf} is therefore
\begin{equation}
\label{eq:delta_implemented}
\sigma_{k}=
\begin{cases}
\delta^{\mathrm{warm}}_k & k<k_{\mathrm{warm}},\\[2pt]
\delta^{\mathrm{conf}}_k & k\ge k_{\mathrm{warm}},
\end{cases}
\end{equation}
calibrated entirely from observed residuals, with
$k_{\mathrm{warm}}\ge n_{\mathrm{conf}}$ so that the calibration window is
fully populated before \eqref{eq:delta_conf} is used. Note that $\sigma_k$ remains constant over the control horizon in MPC~\eqref{eq:mpc_cbf} for the the conformal implementation.

\subsection{Per-Step and Finite-Horizon Coverage}
\label{sec:probabilistic}

This subsection establishes safety guarantees using the probabilistic bounds
formulated in the previous subsection, providing a complementary counterpart
to the deterministic guarantees of Theorem~\ref{thm:joint}.

\begin{assumption}
\label{assm:exchangeable}
The scores $\{r_{k-n_{conf}}, \ldots, r_{k-1}, r_k\}$ are exchangeable, i.e.,
their joint distribution is invariant under permutation.
\end{assumption}

\begin{theorem}
\label{thm:conformal}
Let Assumptions~\ref{as:barrier} and~\ref{assm:exchangeable} hold for
$k \geq k_{warm}$, and let $n_{conf} \geq 1/\chi - 1$. Suppose that at each
step $k \geq k_{warm}$ the MPC enforces
\begin{equation}
    h\!\left(\boldsymbol{\hat{x}}_{k+1|k}\right)
    \geq (1-\alpha_{cbf})\,h(\boldsymbol{x}_k)
    + L_h \delta_k^{conf},
    \label{eq:cbf_prob}
\end{equation}
with $\delta_k^{conf}$ given by~\eqref{eq:delta_conf}. Then:
\begin{enumerate}
    \item \emph{Per-step guarantee:} For each $k \geq k_{warm}$,
    \begin{equation}
        P\!\left(
            h\!\left(\boldsymbol{x}_{k+1}\right)
            \geq (1-\alpha_{cbf})\,h(\boldsymbol{x}_k)
        \right) \geq 1 - \chi.
        \label{eq:per_step}
    \end{equation}
    \item \emph{Finite-horizon guarantee:} If $h(\boldsymbol{x}_{k_{warm}}) \geq 0$,
    then over any horizon of length $T < 1/\chi$,
    \begin{align}
        P\!\left(
            h(\boldsymbol{x}_k) \geq 0,\;
            \forall\, k \in \{k_{warm}{+}1,\ldots,k_{warm}{+}T\}
        \right) \nonumber \\ \geq 1 - T\chi.
        \label{eq:horizon}
    \end{align}
\end{enumerate}
\end{theorem}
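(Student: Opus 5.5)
The argument has three links: a standard split-conformal quantile bound on the residual score, a triangle inequality that converts it into a bound on $\|\boldsymbol s_k\|_2$, and the Lipschitz barrier step from the proof of Theorem~\ref{thm:joint}(ii). A union bound then gives the finite-horizon claim.

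\textbf{Quantile bound.} We read $\hat Q_{1-\chi}$ as the conformal quantile of the window scores, i.e.\ the $\lceil (n_{conf}+1)(1-\chi)\rceil$-th smallest element. The condition $n_{conf}\ge 1/\chi-1$ makes this rank at most $n_{conf}$, so the quantile is finite. Under Assumption~\ref{assm:exchangeable}, the new score $r_k$ is exchangeable with the window scores. The usual rank argument then gives
\[
P\bigl(r_k\le \hat Q_{1-\chi}\bigr)\ge 1-\chi .
\]
We also treat $\hat{\boldsymbol s}_{k-1}$ as measurable before $\boldsymbol s_k$ is realized, which holds by the construction in \eqref{eq:ema}.

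\textbf{Disturbance bound.} The triangle inequality gives
\[
\|\boldsymbol s_k\|_2\le \|\boldsymbol s_k-\hat{\boldsymbol s}_{k-1}\|_2+\|\hat{\boldsymbol s}_{k-1}\|_2=r_k+\|\hat{\boldsymbol s}_{k-1}\|_2 .
\]
So on the event $\{r_k\le\hat Q_{1-\chi}\}$ we have $\|\boldsymbol C\boldsymbol d_k\|_2\le\delta_k^{conf}$ by \eqref{eq:delta_conf}. Hence this event has probability at least $1-\chi$.

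\textbf{Per-step guarantee.} On that event we reuse part~(ii) of the proof of Theorem~\ref{thm:joint}. Write $\boldsymbol x_{k+1}=\hat{\boldsymbol x}_{k+1|k}+\boldsymbol s_k$. Assumption~\ref{as:barrier} then gives
\[
h(\boldsymbol x_{k+1})\ge h(\hat{\boldsymbol x}_{k+1|k})-L_h\|\boldsymbol s_k\|_2\ge h(\hat{\boldsymbol x}_{k+1|k})-L_h\delta_k^{conf}.
\]
The enforced constraint \eqref{eq:cbf_prob} turns the right-hand side into $(1-\alpha_{cbf})h(\boldsymbol x_k)$. Since the event inclusion holds, \eqref{eq:per_step} follows.

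\textbf{Finite-horizon guarantee.} Let $G_k$ be the good event $\{\|\boldsymbol s_k\|_2\le\delta^{conf}_k\}$ for $k=k_{warm},\dots,k_{warm}+T-1$. On the intersection of these events, induction from $h(\boldsymbol x_{k_{warm}})\ge0$ with $1-\alpha_{cbf}\ge0$ gives $h(\boldsymbol x_{k})\ge(1-\alpha_{cbf})^{k-k_{warm}}h(\boldsymbol x_{k_{warm}})\ge0$ for every $k$ in the horizon. A union bound gives $P(\bigcup_k G_k^c)\le T\chi$, which yields \eqref{eq:horizon}. The restriction $T<1/\chi$ only keeps the bound nontrivial. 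The union bound needs no independence between steps, which avoids dealing with the dependence created by the sliding windows.

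\textbf{Main obstacle.} The delicate step is the quantile bound. It depends on exchangeability holding jointly with the conditioning on $\hat{\boldsymbol s}_{k-1}$ and on the MPC having been feasible at step $k$. I will state explicitly that Assumption~\ref{assm:exchangeable} is applied to the realized score sequence, with feasibility of \eqref{eq:cbf_prob} taken as given by hypothesis. I will also fix the finite-sample quantile convention, since the $(n_{conf}+1)$ correction is exactly what the condition $n_{conf}\ge1/\chi-1$ is there to support.
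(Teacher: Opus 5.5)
Your proposal is correct and follows the paper's proof essentially step for step. It uses conformal coverage of $r_k$ under exchangeability, then the triangle inequality to get $\|\boldsymbol C\boldsymbol d_k\|_2\le\delta_k^{conf}$ on the coverage event, then the Lipschitz barrier step combined with \eqref{eq:cbf_prob}, and finally a union bound with induction for the horizon claim. Your explicit choice of the $\lceil (n_{conf}+1)(1-\chi)\rceil$-th order statistic usefully pins down a quantile convention the paper leaves implicit, and taking the union bound over the disturbance events instead of the paper's CBF-failure events $\mathcal F_k$ is an inessential variation.
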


\begin{proof}
\textbf{(i)} Under Assumption~\ref{assm:exchangeable}, the rank of $r_k$ among
$\{r_{k-n_{conf}}, \ldots, r_{k-1}, r_k\}$ is uniformly distributed over
$\{1,\ldots,n_{conf}{+}1\}$. By the standard conformal prediction coverage
theorem~\cite{vovk2005algorithmic} applied to the predictive residual scores
$\{r_j\}$,
\begin{equation}
    \label{eq:coverage}
    \mathbb{P}\Bigl(
        r_k \le \hat{Q}_{1-\chi}\bigl(\{r_j : j\in\mathcal{W}_k\}\bigr)
    \Bigr) \ge 1-\chi .
\end{equation}
Define the event
$\mathcal{E}_k := \bigl\{ r_k \le \hat{Q}_{1-\chi}(\{r_j : j\in\mathcal{W}_k\}) \bigr\}$.
On $\mathcal{E}_k$, the triangle inequality and the definition~\eqref{eq:score}
of the score give
\begin{align}
\label{eq:disturbance_bound}
\|\boldsymbol{C}\boldsymbol{d}_k\|_2
= \|\boldsymbol{s}_k\|_2
&\le \bigl\|\boldsymbol{s}_k - \hat{\boldsymbol{s}}_{k-1}\bigr\|_2
   + \bigl\|\hat{\boldsymbol{s}}_{k-1}\bigr\|_2 \nonumber\\
&= r_k + \bigl\|\hat{\boldsymbol{s}}_{k-1}\bigr\|_2
\;\le\; \delta_k^{\mathrm{conf}},
\end{align}
where the last inequality uses~\eqref{eq:delta_conf}. Note
that~\eqref{eq:disturbance_bound} holds without any assumption on the magnitude
of the EMA, since $\|\hat{\boldsymbol{s}}_{k-1}\|_2$ enters the margin as a
measured quantity.

Applying Assumption~\ref{as:barrier} with
$\boldsymbol{e} = \boldsymbol{C}\boldsymbol{d}_k$ and
substituting~\eqref{eq:disturbance_bound},
\begin{align}
    h\!\left(\boldsymbol{x}_{k+1}\right)
    &\geq h\!\left(\boldsymbol{\hat{x}}_{k+1|k}\right)
    - L_h\|\boldsymbol{C}\boldsymbol{d}_k\|_2 \nonumber \\
    &\geq h\!\left(\boldsymbol{\hat{x}}_{k+1|k}\right)
    - L_h\,\delta_k^{\mathrm{conf}}.
    \label{eq:lip_prob}
\end{align}
Substituting~\eqref{eq:cbf_prob} into~\eqref{eq:lip_prob} on $\mathcal{E}_k$,
\begin{align}
    h\!\left(\boldsymbol{x}_{k+1}\right)
    &\geq (1-\alpha_{cbf})\,h(\boldsymbol{x}_k)
    + L_h\,\delta_k^{\mathrm{conf}}
    - L_h\,\delta_k^{\mathrm{conf}} \nonumber \\
    &\geq (1-\alpha_{cbf})\,h(\boldsymbol{x}_k).
\end{align}
Since this holds on $\mathcal{E}_k$ and $P(\mathcal{E}_k) \geq 1-\chi$
by~\eqref{eq:coverage},~\eqref{eq:per_step} follows.

\textbf{(ii)} Define the failure event at each step as
$\mathcal{F}_k := \bigl\{ h(\boldsymbol{x}_{k+1})
< (1-\alpha_{cbf})\,h(\boldsymbol{x}_k) \bigr\}$.
From~(i), $P(\mathcal{F}_k) \leq \chi$ for each $k \geq k_{warm}$. By the union
bound,
\begin{equation}
    P\!\left(
        \bigcup_{k=k_{warm}}^{k_{warm}+T-1} \mathcal{F}_k
    \right)
    \leq \sum_{k=k_{warm}}^{k_{warm}+T-1} P(\mathcal{F}_k)
    \leq T\chi.
\end{equation}
On the complement event, the CBF evolution condition holds at every step, so
$h(\boldsymbol{x}_k) \geq (1-\alpha_{cbf})^{k-k_{warm}}
h(\boldsymbol{x}_{k_{warm}}) \geq 0$ for all
$k \in \{k_{warm}{+}1,\ldots,k_{warm}{+}T\}$, establishing~\eqref{eq:horizon}.

Theorem~\ref{thm:conformal} applies to $\delta_k^{warm}$ unchanged,
with~\eqref{eq:disturbance_bound} replaced by the direct bound
$\|\boldsymbol{C}\boldsymbol{d}_k\|_2 = \|\boldsymbol{s}_k\|_2
\leq \delta_k^{warm}$ on the corresponding coverage event, since the warm-up
scores are the raw disturbance norms and require no bias correction.
\end{proof}

\begin{remark}
\label{rem:feasibility}
The guarantee of Theorem~\ref{thm:conformal} requires
constraint~\eqref{eq:cbf_prob} to hold at each step. In practice, the CBF
evolution rows are implemented as soft constraints with penalty weight
$W_{cbf}$ to preserve solver feasibility in narrow passages. When the soft
constraint is violated, the per-step guarantee fails for that step. The
pointwise constraint $h(\boldsymbol{x}_k) \geq 0$ is additionally enforced as
a hard constraint for static obstacles; together with the tightening
$\sigma_{k}$ it provides the primary collision-avoidance mechanism, though,
being imposed on the predicted trajectory, the resulting certificate is
subject to the same per-step coverage as~\eqref{eq:per_step}.
\end{remark}

\begin{remark}[Local exchangeability]
\label{rem:local_exchangeability}
Assumption~\ref{assm:exchangeable} is standard in conformal
prediction~\cite{vovk2005algorithmic,angelopoulos2023conformal}. Although
exact exchangeability is generally violated in adaptive closed-loop systems,
its approximation is promoted by (i) calibrating over a short receding window
after the warm-up phase, (ii) scoring against the one-step-ahead EMA
prediction $\hat{\boldsymbol{s}}_{k-1}$, so that the score measures only the
unanticipated disturbance component and is insensitive to slowly varying
bias, and (iii) the contractive adaptation law
(Theorem~\ref{thm:contraction}), which renders the residual process
approximately stationary after convergence. Consequently, the recent
nonconformity scores are treated as locally exchangeable, allowing
$\delta_k^{\mathrm{conf}}$ to approximate the $(1-\chi)$ marginal score
quantile and thereby justify the approximate per-step coverage
in~\eqref{eq:per_step}. This assumption is local to the receding calibration
window and does not require exchangeability over the entire closed-loop
trajectory.
\end{remark}

\begin{algorithm}[t]
\caption{Safe Active Continual Koopman Control (SACK)}
\label{alg:SACK}
\begin{algorithmic}[1]

\Require Dataset $\mathcal{D}_{\mathrm{nom}}$, window $w$, forgetting factor $\gamma$, 
         step size $\eta$, horizon $N_p$, exploration weight $\beta$, 
         risk level $\chi$, warm-up length $k_{\mathrm{warm}}$
\Ensure Control input $\boldsymbol{u}_k^*$

\Statex
\State \textbf{Offline Training:}
\State Train autoencoder to obtain $\psi(\cdot)$, $\boldsymbol{A}$, $\boldsymbol{B}$, $\boldsymbol{C}$ using loss $\mathcal{L}_{\mathrm{nom}}$ \eqref{eq:total_loss}
\State Initialize $\hat{\boldsymbol{W}}_0 \gets [\boldsymbol{A}, \boldsymbol{B}]$

\Statex
\State \textbf{Online Execution:}
\For{$k = 1, 2, 3, \ldots$}

    \If{$k < w$}
        \State $\hat{\boldsymbol{W}}_k \gets \hat{\boldsymbol{W}}_0$
    \Else
        \State $\boldsymbol{z}_{k} \gets \psi(\boldsymbol{x}_{k})$
        \State Construct $\boldsymbol{V}_k$, $\boldsymbol{Z}^+_k$, $\boldsymbol{\Gamma}$ using \eqref{eq:Vk}--\eqref{eq:Zk}
        \State Compute Gramian $\boldsymbol{G}_k$ and prediction error $\boldsymbol{E}^{\mathrm{pred}}_k$
        
        \State Update $\hat{\boldsymbol{W}}_{k+1}$ via contractive adaptation law \eqref{eq:adaptation}
    \EndIf

    \If{using analytical bound}
    \State Compute disturbance bound $\sigma_k$ using \eqref{eq:sigma}
    \ElsIf {using conformal bounds}
    \If{$k < k_{\mathrm{warm}}$}
        \State $\sigma_k \gets \delta^{\mathrm{warm}}_k$ using \eqref{eq:delta_warmup}
    \Else
        \State Update EMA $\hat{\boldsymbol{s}}_k$ using \eqref{eq:ema}
        \State $\sigma_k \gets \delta^{\mathrm{conf}}_k$
    \EndIf
    \EndIf

    \State Solve D-optimal active-learning MPC problem \eqref{eq:mpc_obj} via SQP

    \State Apply $\boldsymbol{u}_k^*$ to system and observe $\boldsymbol{x}_{k+1}$
    \If {using conformal bounds}
    \State Update calibration window $\mathcal{W}_k$ with residual $r_k$.
    \EndIf
\EndFor
\end{algorithmic}
\end{algorithm}

\begin{figure*}[ht!]
     \centering
     \includegraphics[width=0.99\textwidth]{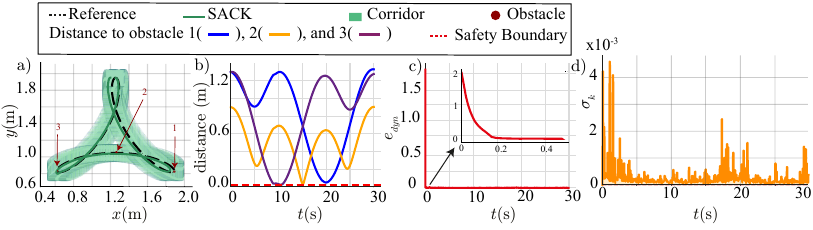}
     \caption{Constrained tracking performance for SACK for 3R manipulator within narrow corridors with obstacles. Red indicates breach of the corridor bounds. a) Traced path. b) Distance to obstacles. c) Evolution of the dynamic prediction error ($e_{dyn}$). d) Evolution of tightening scalar($\sigma_k$).}
    \label{fig:manip_perf_with_obs}
\end{figure*}

\begin{figure*}[h!]
         \centering
         \includegraphics[width=\textwidth]{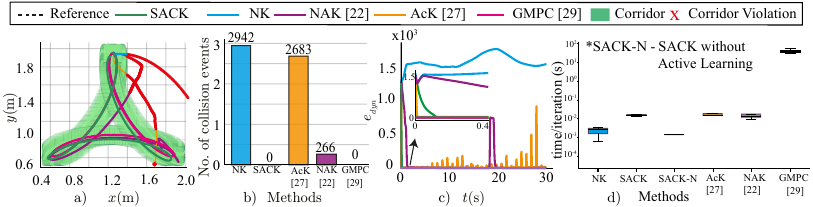}

     \caption{Comparison for constrained tracking performance for 3R manipulator within corridor for  SACK, NK, AcK~\cite{abraham2019active}, NAK~\cite{singh2025adaptive}, and GMPC~\cite{baltussen2025dual}. Red indicates breach of the corridor bounds. a) Tracking within the corridor. b) Collision count. c) Dynamic prediction error ($e_{dyn}$). d) Box plot for computation time per iteration.}
    \label{fig:manip_corr_comp}
\end{figure*} 


\begin{figure}[h!]
     \centering
     \includegraphics[width=0.5\textwidth]{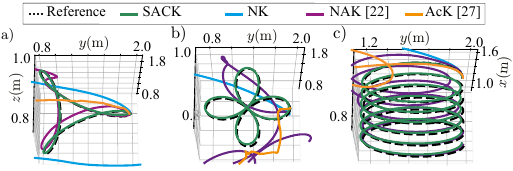}
\caption{Comparison of trajectory tracking performance of SACK, NK, AcK~\cite{abraham2019active}, and NAK~\cite{singh2025adaptive} for 3R manipulator across different shapes. a) Hypotrochoid.  b) Petal.  c) Helix.}
    \label{fig:manip_track_comp}
\end{figure} 

\begin{figure}[ht!]
     \centering
     \includegraphics[width=0.49\textwidth]{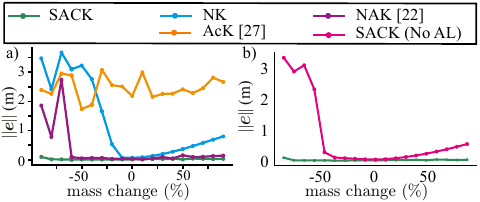}
     \caption{Tracking RMSE errors for tracking control of a 3R serial manipulator under mass variation of $-90\%$ to $90\%$. a) Comparison of SACK, NK, AcK~\cite{abraham2019active} and NAK~\cite{singh2025adaptive}. b) Comparison of the SACK with and without active learning.}
    \label{fig:manip_track_comp_mass_range}
\end{figure}

\section{Results and Discussion}
\label{sec:results}

We validate SACK through simulation on a 3R serial manipulator, a planar quadrotor, and a 7-DoF Franka Research 3 (FR3) in high-fidelity Gazebo, and through hardware experiments on TurtleBot3. In each case, the offline-trained model is evaluated under post-training distributional shifts induced by parameter variations and external disturbances. The performance of the proposed framework is evaluated using the tracking RMSE, the dynamic prediction error defined as,
\begin{equation}
e_{\mathrm{dyn},i}=\sqrt{\tfrac{1}{N_d}\textstyle\sum_{j=1}^{N_d}\|\hat{\bm{x}}_{i+j|i}-\bm{x}_{i+j}\|^{2}},
\label{eq:edyn}
\end{equation}
settling time $t_s$ (defined as the time required for $e_{\mathrm{dyn}}$ to enter and remain within $10\%$ of steady-state value), and constraint satisfaction. All controllers use full-state feedback and are implemented using  the ACADOS solver \cite{acados2021} (Intel\textsuperscript{\textregistered} 
Core\textsuperscript{\texttrademark} i7-10700, 16GB RAM, Nvidia\textsuperscript{\textregistered} 
Geforce\textsuperscript{\texttrademark} RTX 4070 Ti). We compare against four baselines: nominal Koopman with no adaptation (NK), neural adaptive Koopman (NAK)~\cite{singh2025adaptive}, EDMD-based active learning (AcK)~\cite{abraham2019active}, and GP-based dual MPC (GMPC)~\cite{baltussen2025dual}. Details of Offline training, network architectures, and hyperparameters are provided in Appendix~\ref{app:offline_train}.
 
\subsection{3R Serial Manipulator}
\label{sec:serial_manipulator}

The manipulator follows the standard rigid-body dynamics $\bm{M}(\bm{\theta})\ddot{\bm{\theta}}+\bm{C}(\bm{\theta},\dot{\bm{\theta}})\dot{\bm{\theta}}+\bm{G}(\bm{\theta})=\bm{\tau}$, where
$\boldsymbol{M}$, $\boldsymbol{C}$, and $\boldsymbol{G}$ 
represent the mass, Coriolis, and gravity matrices, respectively. We train the nominal model for link mass $m_i = 0.6~\text{kg}$, length $l_i = 1~\text{m}$, and inertia $I_i = \text{diag}[0, \frac{m_i l_i^2}{12}, \frac{m_i l_i^2}{12}]~\text{kgm}^2$ for $i = 1,2,3$. We introduce distribution shifts by modifying the link masses.

\subsubsection{Task 1: Safe exploration in a constrained corridor}
 We first evaluate SACK in a narrow obstacle-populated corridor where the mass of each link is decreased by  $40\%$. We fix the exploration weight at $\beta = 10^3$, and use conformal tightening to handle transient uncertainty during warmup. Figure~\ref{fig:manip_perf_with_obs} shows that SACK traverses the corridor safely without collisions. The executed trajectory deviates from the reference to explore informative regions, consistent with the active learning objective. By improving the conditioning of the adaptation problem, these information-rich trajectories enable the contractive update to rapidly reduce the model error, achieving a settling time of $t_s \approx 0.1~\mathrm{s}$ (Fig.~\ref{fig:manip_perf_with_obs}c). The tightening scalar $\sigma$ also decreases rapidly as adaptation progresses (Fig.~\ref{fig:manip_perf_with_obs}d), reflecting the reduction in model mismatch and the corresponding safety margin.

Next, we compare SACK against all the baselines under a more severe $60\%$ mass reduction with $\beta = 10^4$ in an obstacle-free corridor. By design, note that all reference trajectories remain collision-free, so any controller that adapts sufficiently fast should remain inside the corridor without requiring an explicit safety mechanism. However, only SACK and GMPC~\cite{baltussen2025dual} remain inside the corridor. NK, NAK~\cite{singh2025adaptive}, and AcK~\cite{abraham2019active} all violate corridor constraints (Fig.~\ref{fig:manip_corr_comp}). SACK adapts with $t_s\approx0.12$\,s, roughly ten times faster than NAK ($t_s\approx1.17$\,s), whose gradient-based updates additionally exhibit a secondary error spike at $t=18.49$\,s, which indicates that closed-loop data fails to excite all regressor directions, so corrections don't generalize across the workspace under large distributional shift. AcK~\cite{abraham2019active} exhibits unsafe behavior because its windowed EDMD re-estimation is sensitive to local data conditioning and carries no convergence guarantee. GMPC~\cite{baltussen2025dual} remains safe, but its per-iteration cost is roughly four orders of magnitude higher than SACK's (Fig.~\ref{fig:manip_corr_comp}d) on account of the computational burden incurred by nonlinear MPC and online GP updates,  precluding real-time deployment. Removing the information objective (SACK-N) reduces runtime for SACK to the order of NK, showing that the law itself introduces negligible computational overhead and the bulk of the computational burden is attributable to the active-learning term. Due to the low cost of the adaption law, SACK, even with the burden of the information active learning objective, achieves computational cost comparable to NAK~\cite{singh2025adaptive} and AcK~\cite{abraham2019active},  enabling real-time deployment while providing substantially improved robustness and safety performance under large distribution shift.

\subsubsection{Task 2: Tracking under parametric mismatch}
To evaluate adaptation independently of obstacle avoidance, all obstacles and corridor constraints are removed. Then, we sweep the system mass from $- 90\%$ to ${+}90\%$ while tracking hypotrochoid, petal, and helix reference trajectories. We initialize $\beta=10^{3}$ and gradually decay its value as the model converges, thereby shifting the controller from exploration toward tracking precision. GMPC~\cite{baltussen2025dual} is excluded because its high computational cost renders real-time deployment infeasible. For a representative case of $-60\%$ shift (Fig.~\ref{fig:manip_track_comp}), SACK attains RMS tracking errors of $0.0151$, $0.0159$, and $0.0324$\,m on the three shapes, respectively, whereas all the baselines show significantly degraded performance. For the full range of distribution shift (Fig.~\ref{fig:manip_track_comp_mass_range}a), SACK consistently maintains an RMSE of the order of $10^{-2}$ m, whereas NK exceeds $3$\,m under moderate-to-large shifts. NAK~\cite{singh2025adaptive} exhibits inconsistent behaviour with large error spikes that reflect its sensitivity to gradient-update quality. AcK~\cite{abraham2019active} shows consistently poor performance. Likewise, to isolate the contribution of active learning, we ablate the information objective by setting $\beta = 0$ while retaining the adaptation law and safety constraints (Fig.~\ref{fig:manip_track_comp_mass_range}b). The ablated variant degrades substantially under moderate-to-large shifts, indicating that the closed-loop trajectory alone does not sufficiently condition the regression problem demonstrating that active learning is not merely 
complementary but necessary for reliable adaptation in 
practice.

\begin{figure*}[h!]
         \includegraphics[width=\textwidth]{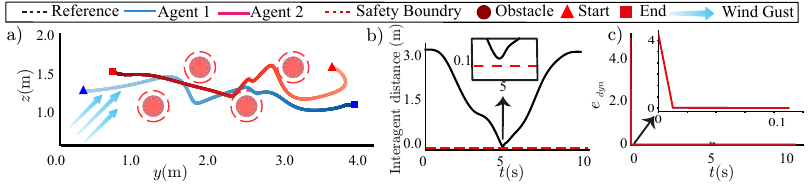}
\caption{Cooperative navigation of two planar quadrotors under a $20\%$
    mass increase and a $3$\,m/s wind disturbance. (a) Executed trajectories
    through the shared obstacle field. (b) Inter-agent distance over time.
    (c) Dynamic prediction error $e_{\mathrm{dyn}}$.}
    \label{fig:two_quad_obs}
\end{figure*} 

\begin{figure*}[h!]
         \includegraphics[width=\textwidth]{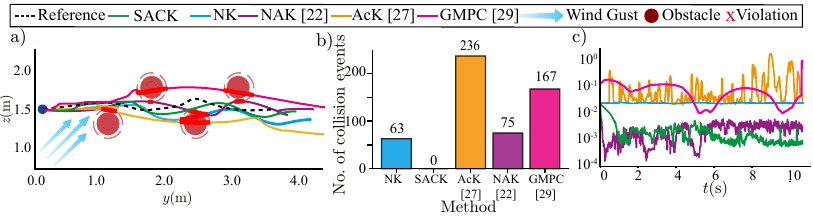}
     \caption{Comparison for constrained trajectory tracking performance for planar quadrotor within an obstacle field for SACK, NK, AcK~\cite{abraham2019active}, NAK~\cite{singh2025adaptive},  and GMPC~\cite{baltussen2025dual}. Red indicates violation of the safety boundary. a) Traced Paths b) Number of collision events c) Dynamic prediction error ($e_{dyn}$).}
    \label{fig:planar_quad_obs_comp}
\end{figure*} 

\begin{figure*}[h!]
         \centering
         \includegraphics[width=\textwidth]{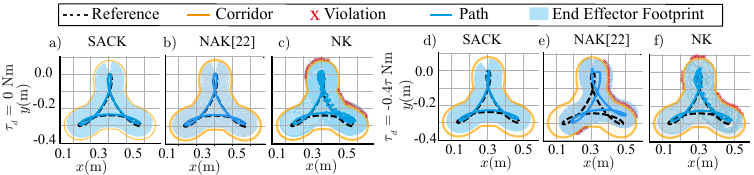}

     \caption{Tracking performance for the Franka Research 3 for the SACK (a,d), NAK~\cite{singh2025adaptive} (b,e), and  NK (c,f) within a corridor in the presence (d,e,f) and absence (a,b,c) of input disturbance. The end effector blueprint is highlighted in blue.}
    \label{fig:franka_setup}
\end{figure*} 

\subsection{Planar Quadrotor}
We next consider a planar quadrotor with aerodynamic wind disturbances. The dynamics is given by \cite{singh2025adaptive}:

\begin{align}
    \label{eq:planar_quad_dynamics}
    \begin{bmatrix}
        \ddot{y} \\ \ddot{z} \\ \ddot{\theta} 
    \end{bmatrix}
    &=
    \begin{bmatrix} 0 \\ -g \\ 0 \end{bmatrix}
    +
    \begin{bmatrix}
        -\frac{1}{m}\sin\theta & -\frac{1}{m}\sin\theta \\
         \frac{1}{m}\cos\theta &  \frac{1}{m}\cos\theta \\
        -\frac{l_{\text{arm}}}{I} & \frac{l_{\text{arm}}}{I}
    \end{bmatrix}
    \begin{bmatrix} T_1 \\ T_2 \end{bmatrix}
    + \frac{\boldsymbol{F}_w}{m}, \nonumber \\
    \boldsymbol{F}_w
    &=
    \begin{bmatrix}
        K v_w^2 \cos(\alpha_w) &
        K v_w^2 \sin(\alpha_w) &
        0
    \end{bmatrix}^{\top},
\end{align}
where $T_1$ and $T_2$ denote the thrust inputs; 
$m$, $l_{\text{arm}}$, $I$, and $g$ denote the mass, 
arm length, rotational inertia, and gravitational 
acceleration, respectively; and $\boldsymbol{F}_w$ represents the wind disturbance with speed $v_w$, direction $\alpha_w$, and drag coefficient $K$. The Koopman model is trained using data collected under nominal operating conditions with no wind ($v_w = 0$). The corresponding system parameters are
$m = 2~\mathrm{kg}$,
$I = 1~\mathrm{kg\,m^2}$,
$g = 9.81~\mathrm{m/s^2}$,
$l_{\mathrm{arm}} = 0.2~\mathrm{m}$,
and $K = 0.1~\mathrm{kg/m}$. Unlike the manipulator experiments, the quadrotor experiences time-varying wind disturbances that induce a drifting perturbation in the true Koopman operator with $\nu > 0$.

We evaluate SACK on a cooperative navigation task involving two planar quadrotors traversing a shared static obstacle field under a $+20\%$ mass increase and a $3~\mathrm{m/s}$ wind disturbance. The exploration weight is set to $\beta = 10$, since the coupled quadrotor dynamics already provide sufficient excitation to keep the regressor Gramian well conditioned without aggressive exploration. For implementation, each agent solves its own MPC and adapts its own Koopman operator independently. Both agents complete the task without obstacle or inter-agent collisions (Fig.~\ref{fig:two_quad_obs}a), while maintaining inter-agent clearance above the safety threshold throughout (Fig.~\ref{fig:two_quad_obs}b). Consistent with this excitation-rich setting, the dynamic prediction error $e_{\mathrm{dyn}}$ settles rapidly, with $t_s \approx 0.05~\mathrm{s}$, indicating fast adaptation under the imposed shift (Fig.~\ref{fig:two_quad_obs}c). 

Next, for a single-quadrotor case under a severe distribution shift ($+30\%$ mass, $5~\mathrm{m/s}$ wind), only SACK maintains safe obstacle-constrained tracking (Fig.~\ref{fig:planar_quad_obs_comp}). GMPC~\cite{baltussen2025dual} degrades in this setting because the uncertainty is no longer dominated by a structured parametric shift, as in the manipulator case, but by the combined effect of mass mismatch and exogenous wind producing residuals that are difficult to model as a stationary state-input-dependent GP, particularly when wind is not included in the regression input. As a result, GP adaptation fails to compensate for the mismatch sufficiently quickly, leading to obstacle-constraint violations. In contrast, SACK effectively adapts the Koopman operator directly from lifted prediction errors through a closed-form contractive update, enabling it to compensate for the combined perturbations without explicitly separating their sources.

\subsection{Franka Research 3 (Gazebo)}
Next we consider a 7-DoF Franka Research~3 (FR3) serial arm, where the end effector must track a reference in a narrow corridor while keeping its entire footprint inside the admissible region, a set-valued geometric constraint stricter than the point constraints considered prior. GMPC~\cite{baltussen2025dual} and AcK~\cite{abraham2019active} are excluded following computational infeasibility and poor performance, respectively, as demonstrated in previous sections. Under nominal conditions, SACK (Fig.~\ref{fig:franka_setup}a) tracks accurately with no unnecessary exploratory deviation. Further, under a $40\%$ resistive joint-torque disturbance that introduces a structured distributional shift absent from the training data, it again maintains safe corridor-constrained tracking as the contractive adaptation law compensates for model mismatch and the conformal tightening scalar adapts to the residual disturbance online (Fig.~\ref{fig:franka_setup}d). In contrast, NAK~\cite{singh2025adaptive} exits the corridor in the perturbed case (Fig.~\ref{fig:franka_setup}e) and NK fails in both settings (Fig.~\ref{fig:franka_setup}c and Fig.~\ref{fig:franka_setup}f). These set of simulations demonstrate the scalability of SACK to higher-dimensional systems.

\begin{figure}[ht!]
     \centering
     \includegraphics[width=0.45\textwidth]{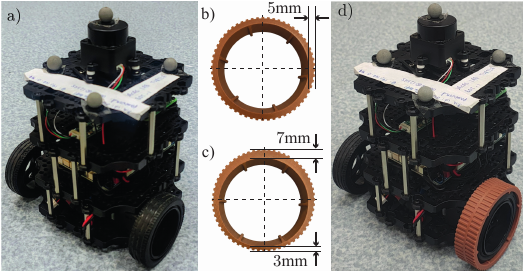}
     \caption{Experimental TurtleBot3 platform. (a) Nominal TurtleBot3 configuration. (b) Axisymmetric wheel attachment. (c) Eccentric wheel attachment. (d) TurtleBot3 with the wheel attachment mounted on one of the wheels.}
    \label{fig:tb_setup}
\end{figure}

\begin{table}[ht!]
\small\sf\centering
\caption{\label{tab:expt_parameters} Experimental statistics: average RMSE (m) and average number of time steps in unsafe set; over 20 trials for open-circuit (OC) and closed-circuit (CC) tracks.}
\begin{tabular}{|p{1.8cm}|p{1.8cm}|p{1.8cm}|p{1.8cm}|} 
\hline
 & SACK & NAK~\cite{singh2025adaptive} & NK \\ 
\hline
OC & \textbf{0.244, 0} & 1.59, 384 & 1.26, 76 \\ 
\hline
CC & \textbf{0.112, 0} & 0.984, 389 & 1.329, 669 \\ 
\hline
\end{tabular}
\end{table}
\begin{figure*}[h!]
     \includegraphics[width=\textwidth]{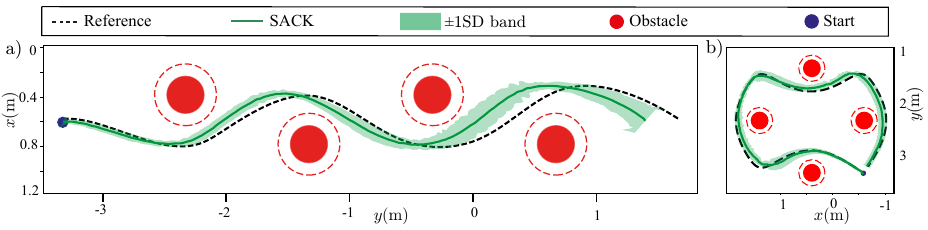}
    \caption{Trajectory distribution over 20 hardware trials across all three deployment configurations (nominal, axisymmetric ring, eccentric ring) for SACK. The solid line shows the mean trajectory, and the shaded band denotes the $\pm 1SD$ corridor computed in the path-normal direction. a) open circuit b) closed circuit.}
    \label{fig:tb_monte_caro}
\end{figure*}

\begin{figure}[h!]
         \centering
         \includegraphics[width=0.49\textwidth]{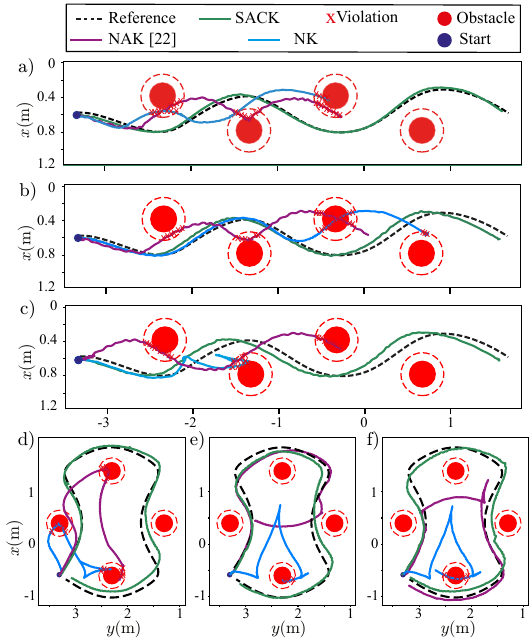}

     \caption{Experimental comparison for constrained trajectory tracking performance for TurtleBot3 robot within an obstacle field for SACK, NK, and NAK~\cite{singh2025adaptive} within open (a-c) and closed circuit (d-f) paths. (a,d) Nominal robot configuration. (b,e) Robot with a axisymmetric circular ring mounted on left wheel. (c,f) robot with an eccentric ring mounted on left wheel.}
    \label{fig:tb_expts}
\end{figure} 

\subsection{Hardware Experiments: TurtleBot3}
We finally deploy SACK on a TurtleBot3 Burger mobile robot~(Fig.~\ref{fig:tb_setup}) with the nominal model trained offline in Gazebo, so that online adaptation must account for the sim-to-real gap and induced unmodeled hardware effects. Three configurations of increasing mismatch are considered: a nominal platform, an axisymmetric wheel ring attached to a wheel, effectively changing the wheel radius and introducing an asymmetric kinematic mismatch, and an eccentric wheel ring, which induces nonstationary disturbances through periodic uneven rolling. Together, these configurations emulate payload asymmetry, uneven wheel wear, actuator imbalance, and irregular terrain effects, and are evaluated on an open-circuit~(OC) and a closed-circuit~(CC) obstacle course.

Table~\ref{tab:expt_parameters} and Fig.~\ref{fig:tb_monte_caro} together characterize the performance of SACK's in all configurations and circuit types of deployment in $20$ trials. SACK achieves zero unsafe-set violations with lower RMSE than both baselines, showing that safety is maintained consistently throughout individual trials and not just on average. The OC-CC comparison further illustrates how performance improves as online adaptation accumulates data: the CC track is longer, allowing SACK to collect more informative closed-loop samples and progressively refine the Koopman model during deployment, reflected in the reduced RMSE and the tightening trajectory spread visible in the last portion of Fig.~\ref{fig:tb_monte_caro}.

To contextualize these aggregate results, Fig.~\ref{fig:tb_expts} shows representative single-trial trajectories for SACK and both baselines across the three deployment configurations. Even under the nominal configuration, NK and NAK~\cite{singh2025adaptive} violate the safety margins due to the sim-to-real gap alone, and degrade further as the mismatch grows. In contrast, SACK completes all runs collision-free across every configuration, confirming that contractive adaptation, active excitation, and conformal safety tightening together enable reliable deployment under significant real-world model mismatch.

\section{Conclusion}\label{sec:conclusion}
This work demonstrates that continual Koopman learning can be integrated with active exploration and formal safety guarantees within a unified model predictive control framework. Rather than treating online adaptation, informative data collection, and safety-critical control as separate objectives, SACK jointly optimizes them to enable continual model refinement during deployment while preserving recursive feasibility and safe closed-loop operation under distributional shift. The proposed theoretical framework establishes convergence of the online adaptation law under persistent excitation together with deterministic and probabilistic safety guarantees, providing a principled foundation for safe continual model-based learning. Extensive simulation and experimental analysis demonstrate that these theoretical properties translate into improved prediction accuracy, tracking performance, and robustness across diverse robotic platforms and sources of model mismatch. Collectively, these results suggest that SACK provides a practical step toward deploying adaptive Koopman-based controllers in long-term robotic autonomy, where models must continually evolve while maintaining the safety and reliability required for real-world operation.

Several directions remain open for future work. Extending the adaptation framework to jointly update the lifting map $\psi(\cdot)$ alongside the Koopman operator matrices would relax the fixed-subspace assumption and improve robustness under severe distributional shifts. Further analysis of conformal coverage under correlated closed-loop residuals would strengthen the probabilistic safety guarantees. Improving the computational scalability of the SQP-based solver for higher-dimensional robotic platforms and extending the framework to cooperative multi-agent settings also represent promising directions for future research.

\bibliographystyle{ieeetr}
\bibliography{citation.bib}

\appendices
\section{Offline Training and Implementation Details}
\label{app:offline_train}

\subsection{Neural Network-Based Offline Learning}
\label{app:offline_learning}
The offline training phase produces the three quantities that the online adaptive module requires: the lifting map $\psi(\cdot)$, the nominal Koopman matrices $(\boldsymbol{A}, \boldsymbol{B})$, and the reconstruction matrix $\boldsymbol{C}$. These are learned jointly from a nominal input-output dataset $  \mathcal{D}_{\mathrm{nom}}
    = \bigl\{(\boldsymbol{X}_i,\, \boldsymbol{Y}_i,\, \boldsymbol{U}_i)
      \bigr\}_{i=1}^{M},$ where $\boldsymbol{X}_i$ and $\boldsymbol{Y}_i$ are matrices of consecutive
state observations and $\boldsymbol{U}_i$ the corresponding inputs, collected under nominal operating conditions. Once training is complete, $\psi(\cdot)$, and $\boldsymbol{C}$ are held fixed for the remainder of deployment. The quality of the offline-learned $\psi(\cdot)$ determines the validity of Assumption~\ref{as:fixed_lifting}: if the nominal training data provides adequate coverage of the observable subspace, the perturbed dynamics will remain representable within the span of $\{\phi_1, \ldots, \phi_p\}$, and online adaptation over $(\boldsymbol{A}, \boldsymbol{B})$ alone will be sufficient to compensate for distributional shift.


We adopt the autoencoder-based Koopman architecture
of~\cite{singh2025adaptive}, illustrated in Fig.~\ref{fig:active_block}. In this architecture, the encoder network realizes the lifting map $\boldsymbol{z}_k = \psi(\boldsymbol{x}_k) \in \mathbb{R}^p$, a linear layer learns the Koopman operator matrices $\boldsymbol{A}$ and $\boldsymbol{B}$ governing the lifted dynamics
$\boldsymbol{z}_{k+1} = \boldsymbol{A}\boldsymbol{z}_k
  + \boldsymbol{B}\boldsymbol{u}_k$, and a linear decoder realizes the reconstruction $\boldsymbol{\hat{x}}_k = \boldsymbol{C}\boldsymbol{z}_k$. The autoencoder structure is natural for this problem because it enforces the Koopman requirement that the lifted state be both dynamically consistent, evolving linearly under $(\boldsymbol{A}, \boldsymbol{B})$, and physically interpretable, decodable back to the original state space via $\boldsymbol{C}$. The network is trained end-to-end by minimizing a composite loss that encodes the three structural requirements of a valid Koopman representation:
\begin{equation}
  \mathcal{L}_{\mathrm{nom}}
    {=} \alpha_1 \mathcal{L}_{\mathrm{rec}}
    {+} \alpha_2 \mathcal{L}_{\mathrm{pred}}
    {+} \alpha_3 \mathcal{L}_{\mathrm{lift}}
    {+} \gamma_1 \|\boldsymbol{\Theta}\|_1
    {+} \gamma_2 \|\boldsymbol{\Theta}\|_2,
  \label{eq:total_loss}
\end{equation}
where $\boldsymbol{\Theta}$ denotes all trainable network parameters and
$\alpha_1, \alpha_2, \alpha_3, \gamma_1, \gamma_2 > 0$ are weighting
hyperparameters. The three loss terms are: $\mathcal{L}_{\mathrm{rec}}
    = \bigl\|\boldsymbol{x}_k - \boldsymbol{C}\boldsymbol{z}_k\bigr\|^2$, $\mathcal{L}_{\mathrm{pred}}
    = \bigl\|\boldsymbol{x}_{k+1} - \boldsymbol{C}\boldsymbol{\hat{z}}_{k+1|k}\bigr\|^2$ and $\mathcal{L}_{\mathrm{lift}}
    = \bigl\|\boldsymbol{z}_{k+1} - \boldsymbol{\hat{z}}_{k+1|k}\bigr\|^2$,
where $\boldsymbol{\hat{z}}_{k+1|k}
  := \boldsymbol{A}\boldsymbol{z}_k + \boldsymbol{B}\boldsymbol{u}_k$
is the one-step linear prediction in the lifted space.
$\mathcal{L}_{\mathrm{rec}}$ enforces that $\boldsymbol{C}$ is a valid decoder, i.e., that $\boldsymbol{C}\psi(\boldsymbol{x}) \approx \boldsymbol{x}$ for all $\boldsymbol{x}$ in the training distribution. $\mathcal{L}_{\mathrm{pred}}$ enforces that the linear lifted dynamics predict the next physical state accurately. $\mathcal{L}_{\mathrm{lift}}$ enforces the Koopman invariance condition directly in the lifted space; it is the loss term most tightly coupled to the requirement that $\mathrm{span}\{\phi_1, \ldots, \phi_p\}$ be approximately invariant under the system flow. The $L_1$ and $L_2$ regularization terms penalize all trainable parameters $\boldsymbol{\Theta}$ to reduce overfitting. For ease of learning, we include the base state, $\boldsymbol{x}$ as a part of the lifted state, so $\boldsymbol{z} = [\boldsymbol{x}^{\top},\; \psi_{n_x+1},\; \ldots,\; \psi_p]^\top$. In this case the $\boldsymbol{C}$ matrix simplifies to $\boldsymbol{C} = [\boldsymbol{I}_{n_x \times n_x}^\top  \boldsymbol{0}_{n_x \times (p-n_x)}^\top]^\top$ and the loss $\mathcal{L}_{\mathrm{rec}}$ becomes redundant. The specific values of all hyperparameters for different robotic platforms are tabulated in Table \ref{tab:nn_params}.


Upon convergence, offline training yields the tuple
$\{\psi(\cdot),\, \boldsymbol{A},\, \boldsymbol{B},\, \boldsymbol{C}\}$. The matrices $(\boldsymbol{A}, \boldsymbol{B})$ initialize the adaptive estimates: $\boldsymbol{\hat{A}}_0 = \boldsymbol{A}$ and
$\boldsymbol{\hat{B}}_0 = \boldsymbol{B}$, so that the initial parameter estimation error is
$\boldsymbol{E}_0 = \boldsymbol{W}^*_0 - \boldsymbol{\hat{W}}_0
  = [\Delta\boldsymbol{A}_0,\, \Delta\boldsymbol{B}_0]$.
The magnitude $\|\boldsymbol{E}_0\|_F$ quantifies the initial mismatch between the nominal and true Koopman operators at deployment time and appears explicitly in the finite-time convergence bound of Theorem~\ref{thm:contraction}. A higher-quality offline model, one trained on data that better covers the deployment distribution, yields a smaller $\|\boldsymbol{E}_0\|_F$ and therefore faster convergence of the online adaptation law.

\subsection{Implementation Details}
In all cases, the Koopman matrices $\boldsymbol{A}$, $\boldsymbol{B}$, $\boldsymbol{C}$ are identified from the collected data using a neural network architecture. The hyperparameters for the employed architecture are shown in Table~\ref{tab:nn_params}. In Table \ref{tab:nn_params}, the architecture of the 3R manipulator is written as $[6,30,30,17]$, which basically means that the encoder has an input layer of dimension $6$ corresponding to each state, there are two hidden layers, each of width $30$, and the output dimension is $17$, which basically is the dimension of the lifted states. The same nomenclature holds for the other systems. 

\begin{table}[ht!]
\small\sf\centering
\caption{\label{tab:nn_params} Hyperparameters of the Koopman network and CBF implemetatoion.}
    \begin{tabular}{|p{1 cm}|p{1.3 cm}|p{1.3cm}|p{1.3 cm}|p{1.3cm}| } 
     \hline
      & {3R manipulator} & {Planar Quadrotor}& {Franka Research 3} & {TurtleBot 3}\\ 
     \hline
     \multicolumn{5}{|c|} {Nominal Koopman Autoencoder (linear)}\\
     \hline
    {Archite- cture} & [6, 30, 30, 17] & [6, 20, 20, 17] & [14, 30, 30, 73] & [3, 30, 30, 14]\\ \hline
    { \# lifted state} & 17 & 17 & 73 & 14\\\hline
     {$\alpha_1$}, {$\alpha_2$}, {$\alpha_3$} & 1, 0.3, 1 & 1, 0.3, 1 & 1, 0.5, 1 & 1, 0.3, 1\\  \hline
     {$\gamma_1$}, {$\gamma_2$} & $0$, $0$ & 0, 0 & $0$, $0$ & $0$, $0$ \\ \hline
     {Batch Size} & 256 & 256 & 256 & 256\\
     \hline
      \multicolumn{5}{|c|} {CBF Implementation}\\
      \hline
    {$n_{conf}$} & 40 & 40 & 40 & 40\\ \hline 
    {$\chi$} & 0.025 & 0.025 & 0.025 & 0.025\\ \hline
    {$\alpha_{cbf}$} & 0.1 & 0.1 & 0.1 & 0.1 \\ \hline
    {$\alpha_{EMA}$} & 0.2 & 0.2 & 0.2 & 0.2  \\ \hline
       
     \hline
    \end{tabular}

\end{table}

\subsection{Data Generation}
Nominal Koopman models are identified offline from simulation data prior to deployment. For the 3R serial manipulator, planar quadrotor, and Franka Research 3, training trajectories are generated using minimum-snap trajectory optimization through randomly sampled waypoints. Segment timing is optimized under a maximum velocity constraint, and $10^{\text{th}}$-order polynomial coefficients are solved via 
unconstrained endpoint optimization, producing $C^4$-continuous reference trajectories that excite a broad range of configurations and velocities. The system is simulated under these references and state-input pairs are logged at the control frequency.

For the TurtleBot3 Burger, velocity command trajectories are collected in Gazebo under a combination of randomized inputs and sinusoidal velocity profiles spanning the full admissible range $\boldsymbol{v} \in [-0.176, 0.176]$\,m/s, $\omega \in [-2.272, 2.272]$\,rad/s. The sinusoidal profiles ensure smooth, persistently exciting trajectories that cover the nonlinear kinematic regime, while the random inputs provide 
broad coverage of the input space.

\section{Additional Theoretical Analysis}
\label{app:active_learn_better}

\begin{theorem}
\label{thm:perturbation}
Consider the nominal MPC
\begin{equation*}
  (P_0)\colon\quad
  \boldsymbol{U}^{\mathrm{nom}}
    \in \arg\min_{\boldsymbol{U}\in\mathcal{F}}\,
        J_{\mathrm{task}}(\boldsymbol{U}),
\end{equation*}
and the active-learning MPC
\begin{equation*}
  (P_\beta)\colon\quad
  \boldsymbol{U}^{\mathrm{AL}}(\beta)
    \in \arg\min_{\boldsymbol{U}\in\mathcal{F}}\,
        \bigl(J_{\mathrm{task}}(\boldsymbol{U})
              - \beta\,J_{\mathrm{info}}(\boldsymbol{U})\bigr),
  \; \beta \geq 0,
\end{equation*}
where $\mathcal{F}\subset\mathbb{R}^{n_u N_p}$ is a nonempty feasible set
encoding dynamics, input and state constraints, and (if applicable) CBF
inequalities, and
$J_{\mathrm{info}}(\boldsymbol{U})
  := \log\det\!\bigl(
       {\boldsymbol{G}}_k^{\mathrm{pred}}(\boldsymbol{U}) + \varepsilon\boldsymbol{I}
     \bigr)$
with ${\boldsymbol{G}}_k^{\mathrm{pred}}(\boldsymbol{U})
  = {\boldsymbol{V}}_k^{\mathrm{pred}}(\boldsymbol{U})
    {\boldsymbol{V}}_k^{\mathrm{pred}}(\boldsymbol{U})^\top$
and $\varepsilon > 0$.
Suppose that the following conditions hold at $\boldsymbol{U}^{\mathrm{nom}}$:
\begin{enumerate}
  \item \emph{(Interior feasibility)}
        $\boldsymbol{U}^{\mathrm{nom}}$ is a strict local minimizer of
        $(P_0)$ and lies in the interior of $\mathcal{F}$, i.e., there
        exists a neighbourhood $\mathcal{N}$ of $\boldsymbol{U}^{\mathrm{nom}}$
        such that $\mathcal{N}\subset\mathcal{F}$.
  \item \emph{(Smoothness)}
        Both $J_{\mathrm{task}}$ and $J_{\mathrm{info}}$ are twice
        continuously differentiable on $\mathcal{N}$.
  \item \emph{(Positive-definite Hessian)}
        $\boldsymbol{H}_{\mathrm{task}}
          {:=} \nabla^2 J_{\mathrm{task}}(\boldsymbol{U}^{\mathrm{nom}})
          \succ \boldsymbol{0}$.
\end{enumerate}
Then there exist $\bar{\beta} > 0$ and a differentiable map
$\beta\mapsto\boldsymbol{U}^{\mathrm{AL}}(\beta)$ for
$\beta\in[0,\bar{\beta}]$, with $\boldsymbol{U}^{\mathrm{AL}}(0) =
\boldsymbol{U}^{\mathrm{nom}}$, such that:
 
\begin{enumerate}
  \item \textbf{First-order expansion.}
        \begin{equation}
          \label{eq:first_order}
          \boldsymbol{U}^{\mathrm{AL}}(\beta)
            {=} \boldsymbol{U}^{\mathrm{nom}}
              {+} \beta\,\boldsymbol{H}^{-1}_{\mathrm{task}}
                \nabla J_{\mathrm{info}}(\boldsymbol{U}^{\mathrm{nom}})
              {+} O(\beta^2).
        \end{equation}
  \item \textbf{Information improvement.}
        \begin{align}
          \label{eq:info_improvement}
          J_{\mathrm{info}}\!\left(\boldsymbol{U}^{\mathrm{AL}}(\beta)\right)
            \geq J_{\mathrm{info}}\!\left(\boldsymbol{U}^{\mathrm{nom}}\right) \nonumber \\
            \text{for all sufficiently small } \beta > 0.
        \end{align}
\end{enumerate}
\end{theorem}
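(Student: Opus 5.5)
This is a standard implicit-function-theorem perturbation argument applied to the stationarity condition of $(P_\beta)$.

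\textbf{Stationarity map.} Define
\[
F(\boldsymbol{U},\beta) := \nabla J_{\mathrm{task}}(\boldsymbol{U}) - \beta\,\nabla J_{\mathrm{info}}(\boldsymbol{U})
\]
on $\mathcal{N}\times\mathbb{R}$. By the smoothness hypothesis, $F$ is continuously differentiable there.

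\textbf{Base point.} Since $\boldsymbol{U}^{\mathrm{nom}}$ is an interior local minimizer of $(P_0)$, it satisfies $F(\boldsymbol{U}^{\mathrm{nom}},0)=\nabla J_{\mathrm{task}}(\boldsymbol{U}^{\mathrm{nom}})=\boldsymbol{0}$. In addition, $\partial_{\boldsymbol{U}}F(\boldsymbol{U}^{\mathrm{nom}},0)=\boldsymbol{H}_{\mathrm{task}}\succ\boldsymbol{0}$ is invertible.

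\textbf{Implicit function theorem.} The IFT gives $\bar\beta>0$ and a unique $C^1$ branch $\beta\mapsto\boldsymbol{U}^{\mathrm{AL}}(\beta)$ with $\boldsymbol{U}^{\mathrm{AL}}(0)=\boldsymbol{U}^{\mathrm{nom}}$ and $F(\boldsymbol{U}^{\mathrm{AL}}(\beta),\beta)=\boldsymbol{0}$. Shrinking $\bar\beta$ if needed keeps the branch inside $\mathcal{N}\subset\mathcal{F}$, so the constraints are inactive and stationarity is the correct optimality condition.

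\textbf{The branch consists of local minimizers.} The Hessian $\nabla^2 J_{\mathrm{task}}-\beta\nabla^2 J_{\mathrm{info}}$ evaluated along the branch is continuous in $\beta$. It equals $\boldsymbol{H}_{\mathrm{task}}\succ\boldsymbol{0}$ at $\beta=0$, so it remains positive definite for small $\beta$. Each $\boldsymbol{U}^{\mathrm{AL}}(\beta)$ is therefore a strict local minimizer of $(P_\beta)$. This is the sense in which the statement's ``$\in\arg\min$'' should be read: it identifies a local minimizer and smooth selection, not necessarily the global one. I expect this interpretive point to be the main subtlety, and I will state it explicitly.

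\textbf{First-order expansion.} Differentiating $F(\boldsymbol{U}^{\mathrm{AL}}(\beta),\beta)=\boldsymbol{0}$ at $\beta=0$ gives
\[
\boldsymbol{H}_{\mathrm{task}}\,\tfrac{d}{d\beta}\boldsymbol{U}^{\mathrm{AL}}(0)-\nabla J_{\mathrm{info}}(\boldsymbol{U}^{\mathrm{nom}})=\boldsymbol{0},
\]
so $\tfrac{d}{d\beta}\boldsymbol{U}^{\mathrm{AL}}(0)=\boldsymbol{H}_{\mathrm{task}}^{-1}\nabla J_{\mathrm{info}}(\boldsymbol{U}^{\mathrm{nom}})$.

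To get an $O(\beta^2)$ remainder rather than $o(\beta)$, I need $\boldsymbol{U}^{\mathrm{AL}}$ to be $C^2$ or have a Lipschitz derivative. The derivative is
\[
\tfrac{d}{d\beta}\boldsymbol{U}^{\mathrm{AL}}(\beta)=\bigl(\nabla^2J_{\mathrm{task}}-\beta\nabla^2J_{\mathrm{info}}\bigr)^{-1}\nabla J_{\mathrm{info}},
\]
with everything evaluated at $\boldsymbol{U}^{\mathrm{AL}}(\beta)$. Under the stated $C^2$ hypothesis this is only continuous. So either I note that $J_{\mathrm{task}}$ is quadratic and $J_{\mathrm{info}}$ is a composition of $\log\det$ with a polynomial rollout, making both $C^\infty$ (the rollout is polynomial in $\boldsymbol{U}$ for fixed $\hat{\boldsymbol{A}}_k,\hat{\boldsymbol{B}}_k$, and $\Ghat_k+\varepsilon\boldsymbol{I}\succ0$ keeps $\log\det$ smooth), or I weaken the remainder to $o(\beta)$. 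I plan to use the smoothness observation, which restores $O(\beta^2)$ by Taylor's theorem.

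\textbf{Information improvement.} Let $g:=\nabla J_{\mathrm{info}}(\boldsymbol{U}^{\mathrm{nom}})$. Expanding,
\[
J_{\mathrm{info}}(\boldsymbol{U}^{\mathrm{AL}}(\beta))=J_{\mathrm{info}}(\boldsymbol{U}^{\mathrm{nom}})+\beta\,g^\top\boldsymbol{H}_{\mathrm{task}}^{-1}g+O(\beta^2).
\]
There are two cases.
\begin{itemize}
\item If $g\neq\boldsymbol{0}$, the linear term is strictly positive because $\boldsymbol{H}_{\mathrm{task}}^{-1}\succ\boldsymbol{0}$, and it dominates $O(\beta^2)$ for small $\beta$. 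This gives strict improvement.
\item If $g=\boldsymbol{0}$, the first-order analysis gives no sign and a separate argument is needed. Use the minimality of $\boldsymbol{U}^{\mathrm{AL}}(\beta)$ for $(P_\beta)$ against the feasible comparison point $\boldsymbol{U}^{\mathrm{nom}}$, which lies in the neighbourhood where $\boldsymbol{U}^{\mathrm{AL}}(\beta)$ is a local minimizer for $\beta$ small by continuity. This gives
\[
J_{\mathrm{task}}(\boldsymbol{U}^{\mathrm{AL}})-\beta J_{\mathrm{info}}(\boldsymbol{U}^{\mathrm{AL}})\le J_{\mathrm{task}}(\boldsymbol{U}^{\mathrm{nom}})-\beta J_{\mathrm{info}}(\boldsymbol{U}^{\mathrm{nom}}).
\]
Local minimality of $\boldsymbol{U}^{\mathrm{nom}}$ for $J_{\mathrm{task}}$ gives $J_{\mathrm{task}}(\boldsymbol{U}^{\mathrm{AL}})\ge J_{\mathrm{task}}(\boldsymbol{U}^{\mathrm{nom}})$. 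Combining the two inequalities and dividing by $\beta>0$ yields the claimed inequality directly.
\end{itemize}

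This exchange argument in fact covers both cases at once and needs no expansion, so I will use it as the main proof of the information-improvement claim. The expansion then serves only to quantify the gain when $g\neq\boldsymbol{0}$.
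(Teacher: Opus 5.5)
Your proposal is correct. Its first half matches the paper's proof: the same stationarity map $F$, the implicit function theorem with $\partial_{\boldsymbol U}F(\boldsymbol U^{\mathrm{nom}},0)=\boldsymbol H_{\mathrm{task}}$, and the same derivative $\boldsymbol H_{\mathrm{task}}^{-1}\nabla J_{\mathrm{info}}(\boldsymbol U^{\mathrm{nom}})$. You also add two correct points that the paper passes over:
\begin{itemize}
\item The paper only produces stationary points of $(P_\beta)$ and never checks that they are minimizers. Your Hessian-continuity step supplies this.
\item Under the stated $C^2$ hypothesis the branch is only $C^1$, so the remainder is in general $o(\beta)$, not $O(\beta^2)$. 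You correctly recover $O(\beta^2)$ from the explicit structure: $J_{\mathrm{task}}$ is quadratic in $\boldsymbol U$, and $J_{\mathrm{info}}$ is the $\log\det$ of a quadratic matrix function plus $\varepsilon\boldsymbol I$.
\end{itemize}

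For the information-improvement claim your route genuinely differs. The paper inserts the expansion into a Taylor expansion of $J_{\mathrm{info}}$ and reads off the sign of $\beta\,g^\top\boldsymbol H_{\mathrm{task}}^{-1}g$. This quantifies the gain and gives strict improvement when $g\neq\boldsymbol 0$. The paper only remarks on the case $g=\boldsymbol 0$, but that case is trivial. Then $F(\boldsymbol U^{\mathrm{nom}},\beta)=\boldsymbol 0$ for all $\beta$, so uniqueness in the implicit function theorem forces $\boldsymbol U^{\mathrm{AL}}(\beta)\equiv\boldsymbol U^{\mathrm{nom}}$; you did not need a separate argument there. Your exchange argument gives the non-strict inequality in one stroke. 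Under the literal global $\arg\min$ reading of the statement, it even holds for every $\beta>0$ with no smoothness or interiority assumptions. What it does not give is the rate or strictness that the expansion provides.

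One step needs to be made precise: ``by continuity''. Pointwise strict local minimality of $\boldsymbol U^{\mathrm{AL}}(\beta)$ does not by itself give a minimality radius bounded below as $\beta\to0$. You need such a radius for $\boldsymbol U^{\mathrm{nom}}$ to be an admissible comparison point. Here is the fix:
\begin{itemize}
\item Take a closed ball $\bar{\mathcal B}\subset\mathcal N$ about $\boldsymbol U^{\mathrm{nom}}$ on which $\nabla^2J_{\mathrm{task}}\succeq\tfrac12\lambda_{\min}(\boldsymbol H_{\mathrm{task}})\boldsymbol I$ and $\|\nabla^2J_{\mathrm{info}}\|_2\le M$.
\item Whenever $\beta M<\tfrac12\lambda_{\min}(\boldsymbol H_{\mathrm{task}})$, the function $J_{\mathrm{task}}-\beta J_{\mathrm{info}}$ is strictly convex on $\bar{\mathcal B}$.
\item For small $\beta$, $\boldsymbol U^{\mathrm{AL}}(\beta)$ lies in the interior of $\bar{\mathcal B}$, so it minimizes this function over $\bar{\mathcal B}$, which contains $\boldsymbol U^{\mathrm{nom}}$.
\item Likewise $\boldsymbol U^{\mathrm{nom}}$ minimizes $J_{\mathrm{task}}$ over the same ball.
\end{itemize}
These are exactly the two inequalities your exchange argument combines.
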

 
\begin{proof}
Since $\boldsymbol{U}^{\mathrm{nom}}$ lies in the interior of $\mathcal{F}$ by condition~(i), both $(P_0)$ and $(P_\beta)$ reduce locally to unconstrained optimization over $\mathcal{N}$. First-order optimality for $(P_\beta)$ requires
\begin{equation}
  \label{eq:foc}
  \nabla J_{\mathrm{task}}\!\left(\boldsymbol{U}^{\mathrm{AL}}(\beta)\right)
    - \beta\,\nabla J_{\mathrm{info}}\!\left(
        \boldsymbol{U}^{\mathrm{AL}}(\beta)\right)
  = \boldsymbol{0}.
\end{equation}
At $\beta = 0$, condition~\eqref{eq:foc} yields
$\nabla J_{\mathrm{task}}(\boldsymbol{U}^{\mathrm{nom}}) = \boldsymbol{0}$,
which is satisfied since $\boldsymbol{U}^{\mathrm{nom}}$ is a strict local
minimizer of $(P_0)$.
Define the residual map $\boldsymbol{F}(\boldsymbol{U},\beta)
    := \nabla J_{\mathrm{task}}(\boldsymbol{U})
       - \beta\,\nabla J_{\mathrm{info}}(\boldsymbol{U}),$
so that~\eqref{eq:foc} is equivalent to
$\boldsymbol{F}(\boldsymbol{U}^{\mathrm{AL}}(\beta),\beta) = \boldsymbol{0}$.
By condition~(ii), $\boldsymbol{F}$ is continuously differentiable on
$\mathcal{N}\times[0,\bar{\beta}]$, and by condition~(iii),
\begin{equation*}
  \frac{\partial\boldsymbol{F}}{\partial\boldsymbol{U}}
    \bigg|_{(\boldsymbol{U}^{\mathrm{nom}},\,0)}
  = \nabla^2 J_{\mathrm{task}}(\boldsymbol{U}^{\mathrm{nom}})
  = \boldsymbol{H}_{\mathrm{task}}
  \succ \boldsymbol{0}
\end{equation*}
is invertible.
By the Implicit Function Theorem, there exist $\bar{\beta} > 0$ and a
differentiable map $\beta\mapsto\boldsymbol{U}^{\mathrm{AL}}(\beta)$
satisfying $\boldsymbol{F}(\boldsymbol{U}^{\mathrm{AL}}(\beta),\beta) =
\boldsymbol{0}$ for all $\beta\in[0,\bar{\beta}]$, with $\boldsymbol{U}^{\mathrm{AL}}(0) = \boldsymbol{U}^{\mathrm{nom}}$.
 
\smallskip
\noindent\textit{Proof of~\eqref{eq:first_order}.}
Differentiating $\boldsymbol{F}(\boldsymbol{U}^{\mathrm{AL}}(\beta),\beta) =
\boldsymbol{0}$ with respect to $\beta$ and evaluating at
$(\boldsymbol{U}^{\mathrm{nom}}, 0)$ gives
\begin{equation}
  \label{eq:derivative}
  \frac{\mathrm{d}\boldsymbol{U}^{\mathrm{AL}}}{\mathrm{d}\beta}
  \bigg|_{\beta=0}
  = \boldsymbol{H}^{-1}_{\mathrm{task}}
    \nabla J_{\mathrm{info}}(\boldsymbol{U}^{\mathrm{nom}}).
\end{equation}
A first-order Taylor expansion of $\boldsymbol{U}^{\mathrm{AL}}(\beta)$
around $\beta = 0$ then gives~\eqref{eq:first_order}.
 
\smallskip
\noindent\textit{Proof of~\eqref{eq:info_improvement}.}
A first-order Taylor expansion of $J_{\mathrm{info}}$ around
$\boldsymbol{U}^{\mathrm{nom}}$ gives
\begin{align}
  J_{\mathrm{info}}\!\left(\boldsymbol{U}^{\mathrm{AL}}(\beta)\right)
    &= J_{\mathrm{info}}(\boldsymbol{U}^{\mathrm{nom}})
       + \nabla J_{\mathrm{info}}(\boldsymbol{U}^{\mathrm{nom}})^\top \nonumber \\
         &\bigl(\boldsymbol{U}^{\mathrm{AL}}(\beta)
               - \boldsymbol{U}^{\mathrm{nom}}\bigr)
       + O(\beta^2).
\end{align}
Substituting~\eqref{eq:first_order}:
\begin{align}
  &J_{\mathrm{info}}\!\left(\boldsymbol{U}^{\mathrm{AL}}(\beta)\right)
    = J_{\mathrm{info}}(\boldsymbol{U}^{\mathrm{nom}})
       + \nonumber \\ &\beta\,\nabla J_{\mathrm{info}}(\boldsymbol{U}^{\mathrm{nom}})^\top
         \boldsymbol{H}^{-1}_{\mathrm{task}}
         \nabla J_{\mathrm{info}}(\boldsymbol{U}^{\mathrm{nom}})
       + O(\beta^2).
\end{align}
Since $\boldsymbol{H}^{-1}_{\mathrm{task}} \succ \boldsymbol{0}$, the
quadratic form
$\nabla J_{\mathrm{info}}(\boldsymbol{U}^{\mathrm{nom}})^\top
  \boldsymbol{H}^{-1}_{\mathrm{task}}
  \nabla J_{\mathrm{info}}(\boldsymbol{U}^{\mathrm{nom}}) \geq 0$,
with equality only when
$\nabla J_{\mathrm{info}}(\boldsymbol{U}^{\mathrm{nom}}) = \boldsymbol{0}$,
i.e., when the nominal solution already maximizes information gain.
In the non-trivial case
$\nabla J_{\mathrm{info}}(\boldsymbol{U}^{\mathrm{nom}}) \neq \boldsymbol{0}$, the quadratic term is strictly positive, and $J_{\mathrm{info}}(\boldsymbol{U}^{\mathrm{AL}}(\beta)) >
  J_{\mathrm{info}}(\boldsymbol{U}^{\mathrm{nom}})$
for all sufficiently small $\beta > 0$,
establishing~\eqref{eq:info_improvement}.
\end{proof}
Theorem~\ref{thm:perturbation} thus confirms that the active-learning MPC strictly improves information gain over the nominal solution whenever the nominal trajectory is not already maximally informative, with the magnitude of improvement scaling with the misalignment between the task gradient and the information gradient.

Note that theorem~\ref{thm:perturbation} is a local result that relies on condition~(i): $\boldsymbol{U}^{\mathrm{nom}}$ must lie in the interior of $\mathcal{F}$, i.e., no constraints may be active at the nominal solution. When state, input, or CBF constraints are active, the first-order optimality condition acquires active-constraint multipliers, and a sensitivity analysis via the parametric KKT system is required.

\end{document}